\theoremstyle{plain}
\newtheorem{theorem}{Theorem}[section]
\newtheorem{proposition}[theorem]{Proposition}
\newtheorem{lemma}[theorem]{Lemma}
\newtheorem{corollary}[theorem]{Corollary}
\theoremstyle{definition}
\newtheorem{definition}[theorem]{Definition}
\newtheorem{conjecture}[theorem]{Conjecture} %new
\theoremstyle{remark}
\newtheorem{remark}[theorem]{Remark}
\DeclareMathOperator*{\argmin}{arg\,min} %new
\newcommand{\fset}{N} %new
\newcommand{\fnum}{n} %new
\newcommand{\shapiq}{\href{https://github.com/mmschlk/shapiq}{\url{github.com/mmschlk/shapiq}}} % new
\newcommand{\repository}{\href{https://github.com/FFmgll/KernelSHAP_IQ_Supplementary_Material}{\url{https://github.com/FFmgll/KernelSHAP_IQ_Supplementary_Material}}} %new anonymous.4open.science/r/KernelSHAP_IQ-C2D9/README.md
\definecolor{tablegray}{gray}{0.9} %new
\definecolor{shapiq}{rgb}{0.937,0.153,0.651} %new
\definecolor{svarmiq}{rgb}{0,0.706,0.847} %new
\definecolor{permutation}{rgb}{0.49,0.325,0.871} %new
\definecolor{kernelshapiq}{rgb}{1,0.435,0.} %new
\definecolor{biaskernelshapiq}{rgb}{1, 0.729, 0.031} %new
\icmltitlerunning{KernelSHAP-IQ: Weighted Least Square Optimization for Shapley Interactions}
\begin{document}

\twocolumn[
\icmltitle{KernelSHAP-IQ: Weighted Least Square Optimization for Shapley Interactions}

% It is OKAY to include author information, even for blind
% submissions: the style file will automatically remove it for you
% unless you've provided the [accepted] option to the icml2024
% package.

% List of affiliations: The first argument should be a (short)
% identifier you will use later to specify author affiliations
% Academic affiliations should list Department, University, City, Region, Country
% Industry affiliations should list Company, City, Region, Country

% You can specify symbols, otherwise they are numbered in order.
% Ideally, you should not use this facility. Affiliations will be numbered
% in order of appearance and this is the preferred way.
\icmlsetsymbol{equal}{*}

\begin{icmlauthorlist}
\icmlauthor{Fabian Fumagalli}{bi}
\icmlauthor{Maximilian Muschalik}{lmu,mcml}
\icmlauthor{Patrick Kolpaczki}{upb}
\icmlauthor{Eyke Hüllermeier}{lmu,mcml}
\icmlauthor{Barbara Hammer}{bi}
\end{icmlauthorlist}

\icmlaffiliation{bi}{Bielefeld University, CITEC, D-33619 Bielefeld, Germany}
\icmlaffiliation{lmu}{LMU Munich, D-80539 Munich, Germany}
\icmlaffiliation{mcml}{MCML, Munich}
\icmlaffiliation{upb}{Paderborn University, D-33098, Paderborn, Germany}

\icmlcorrespondingauthor{Fabian Fumagalli}{ffumagalli@techfak.uni-bielefeld.de}
%\icmlcorrespondingauthor{Maximilian Muschalik}{Maximilian.Muschalik@lmu.de}

% You may provide any keywords that you
% find helpful for describing your paper; these are used to populate
% the "keywords" metadata in the PDF but will not be shown in the document
\icmlkeywords{Shapley Interactions, Shapley Value, Explainable Artificial Intelligence, Game Theory}

\vskip 0.3in
]

% this must go after the closing bracket ] following \twocolumn[ ...

% This command actually creates the footnote in the first column
% listing the affiliations and the copyright notice.
% The command takes one argument, which is text to display at the start of the footnote.
% The \icmlEqualContribution command is standard text for equal contribution.
% Remove it (just {}) if you do not need this facility.

\printAffiliationsAndNotice{}  % leave blank if no need to mention equal contribution
%\printAffiliationsAndNotice{\icmlEqualContribution} % otherwise use the standard text.

\begin{abstract}
The Shapley value (SV) is a prevalent approach of allocating credit to machine learning (ML) entities to understand black box ML models.
Enriching such interpretations with higher-order interactions is inevitable for complex systems, where the Shapley Interaction Index (SII) is a direct axiomatic extension of the SV.
While it is well-known that the SV yields an optimal approximation of any game via a weighted least square (WLS) objective, an extension of this result to SII has been a long-standing open problem, which even led to the proposal of an alternative index.
In this work, we characterize higher-order SII as a solution to a WLS problem, which constructs an optimal approximation via SII and $k$-Shapley values ($k$-SII).
We prove this representation for the SV and pairwise SII and give empirically validated conjectures for higher orders.
As a result, we propose KernelSHAP-IQ, a direct extension of KernelSHAP for SII, and demonstrate state-of-the-art performance for feature interactions.
\end{abstract}

\section{Introduction}
\label{sec_introduction}

The Shapley value (SV) \cite{Shapley.1953} is a commonly used theoretical framework from cooperative game theory to allocate credit among entities in machine learning (ML) settings.
Prevalent ML applications of the SV include \emph{feature attribution} \cite{Lundberg.2017} for local interpretability, \emph{feature importance} \cite{Covert.2021} for global interpretability, or \emph{data valuation} to quantify the worth of data points \cite{Ghorbani.2019}.
However, in many application domains, the SV accounting solely for individual entities such as features or data points does not suffice \cite{Kumar.2020, Kumar.2021, Slack.2020, Sundararajan.2020, Wright.2016}.
As illustrated with \cref{fig_language_illustration,fig_intro_illustration}, in complex domains like language modeling \cite{Murdoch.2018}, image classification \cite{Tsang.2018}, or bioinformatics \cite{Winham.2012, Wright.2016}, enriching contributions of individuals with \emph{interactions} among entities is required.

\begin{figure}[t]
    \centering
    \includegraphics[width=0.85\columnwidth]{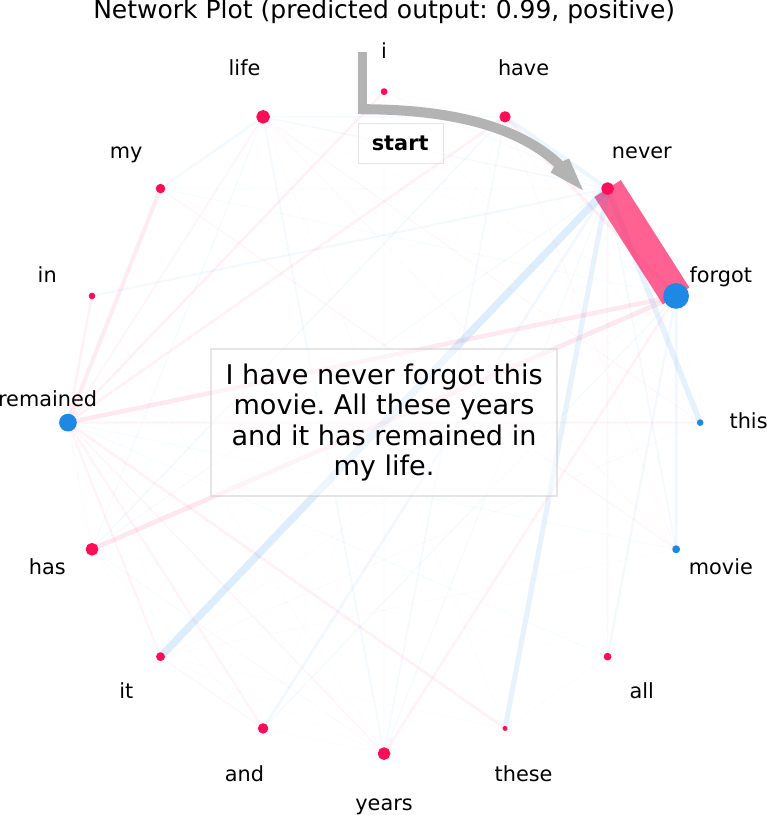}
    \caption{Positive (red) and negative (blue) feature attributions (vertices) and interactions (edges) for a movie review excerpt provided to a sentiment analysis language model. The interaction of ``never'' and ``forget'' highly contributes to the positive sentiment.}
    \label{fig_language_illustration}
\end{figure}

\begin{figure*}[t]
    \centering
    \textbf{Potential Question:} ``Does the \emph{location} of my property affect its price $\hat{y} = 4.54$ (in 100k\$)?''\\[1.5em]
    \begin{minipage}[c]{0.15\textwidth}
        \centering
        \vspace{-1.5em}
        \textbf{$1$-additive\\Explanation\\(SV, 1-SII)}
        \\[3em]
        \textbf{$2$-additive\\Explanation\\(2-SII)}
    \end{minipage}
    \hfill
    \begin{minipage}[c]{0.84\textwidth}
        \includegraphics[width=\textwidth]{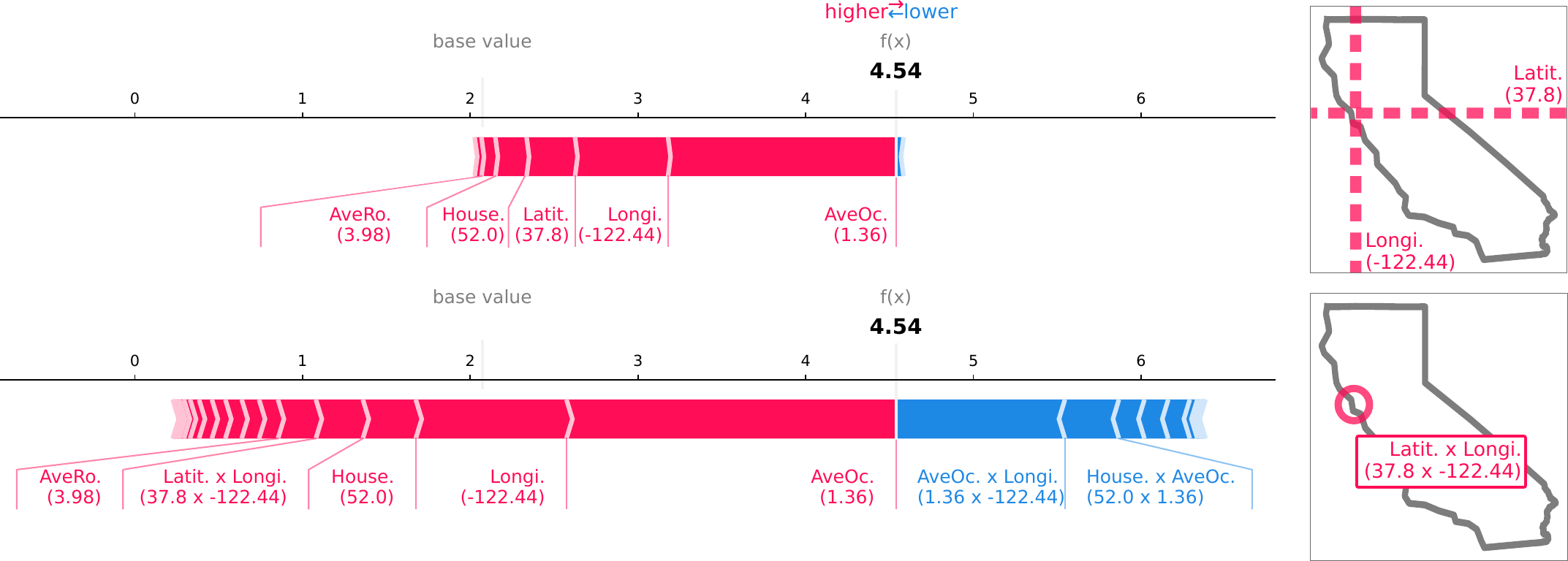}
    \end{minipage}
    \caption{Force plots as first and second order explanations showing positive (red) and negative (blue) feature attributions for a data point of the \emph{California Housing} regression dataset \cite{Kelley.1997}. The SVs show that a \emph{longitude} of $-122.44$ and a \emph{latitude} of $37.8$, positively influences the predicted property's price. Considering 2-SII feature interactions reveals that the positive influence of \emph{latit.} vanishes and only the combination of \emph{longi.} and \emph{latit.} pointing to the exact location, San Francisco, impacts the property's price.}
    \label{fig_intro_illustration}
\end{figure*}

The SV's allocation scheme can be generalized to Shapley interactions and quantify the synergy of groups of entities. 
The Shapley Interaction Index (SII) \cite{Grabisch.1999} is a natural axiomatic extension of the SV to any-order interactions, which yields an exact additive decomposition of any cooperative game \cite{Grabisch.2000}.
However, this decomposition includes an exponential number of components and is not suited for interpretability.
As a remedy, fixing a maximum interaction order $k$, $k$-Shapley values ($k$-SII) \cite{Bord.2023} aggregate SII up to order $k$ uniquely and yield an \emph{interpretable} and \emph{efficient} $k$-additive interaction index.
This aggregation \emph{fairly} distributes the joint payout, e.g.\ the model's prediction, to all individuals \emph{and} groups of entities up to order $k$.

Both, SII and $1$-SII are the SV for individuals.
In this case, besides the classical representation, it is well-known that the SV yields an \emph{optimal} $1$-additive approximation of the cooperative game, which minimizes the weighted least square (WLS) loss over all subsets constrained on the efficiency property \cite{Charnes.1988}.
Extensions of such representations for interactions exist for constant weights, which yields the Banzhaf interaction index \cite{Hammer.1992,Grabisch.2000}.
For Shapley interactions, the Faithful Shapley Interaction Index (FSI) \cite{Tsai.2022} extends on the axiomatic structure of the least square family \cite{Ruiz.1998} by imposing Shapley-like axioms.
However, while FSI yields a unique index, it neither yields SII nor $k$-SII and the dualism between both representations is not as clear as for the SV.
In light of the exponential complexity of the SV and SII, which necessitates efficient approximation techniques, a WLS representation of SII is particularly desirable.
In fact, KernelSHAP \cite{Lundberg.2017} directly utilizes this representation of the SV and yields state-of-the-art (SOTA) approximations in ML applications \cite{Chen.2023,Kolpaczki.2024a}.
For FSI, where a direct extension of KernelSHAP is possible, it has been shown that it is similarly superior over sampling-based approaches \cite{Fumagalli.2023,Kolpaczki.2024b}.

\textbf{Contribution.}
In this work, we show that SII is the solution to a WLS problem, which yields the \emph{optimal} $k$-additive approximation of any cooperative game via $k$-SII and SII.
We then present two variants of KernelSHAP Interaction Quantification (KernelSHAP-IQ), a direct extension of KernelSHAP for higher-order SII.
We empirically demonstrate that KernelSHAP-IQ outperforms existing baselines and yields SOTA performance across various datasets and model classes.
Therein, our main contributions include:
\begin{enumerate}
    \item \textbf{Iterative approximations via SII}: We introduce $k$-additive approximations via $k$-SII and prove an iterative link to SII (Proposition~\ref{prop_iterative_approx}).
    \item \textbf{Optimal $k$-additive approximation via SII}: We prove that pairwise SII are the solution to a WLS problem (Theorem~\ref{thm_kernel_sii}) with empirically validated conjectures for higher orders (Conjecture~\ref{conj_sii}).
    \item \textbf{KernelSHAP-IQ}\footnote{KernelSHAP-IQ is implemented in the open-source \emph{shapiq} explanation library \shapiq.}: We propose an inconsistent and consistent variant of KernelSHAP for SII that yields SOTA performance for local feature interactions.
\end{enumerate}

\textbf{Related Work.}
In cooperative game theory, the studied set functions are known as transferable utility (TU) games, and $k$-additive TU-games, if $k$ is the maximum order \cite{Grabisch.2016}.
Extensions of the SV to interactions were proposed with SII \cite{Grabisch.1999}, FSI \cite{Tsai.2022} among others \cite{Marichal.1999,Sundararajan.2020}.
Recent work was largely motivated by the lack of efficiency of SII, which was recently resolved by the SII-based aggregation $k$-SII \cite{Bord.2023} that extends on (pairwise) Shapley interactions \cite{Lundberg.2020}.
The dualism between semivalues, such as the SV or Banzhaf value \cite{Banzhaf.1964}, and optimal WLS approximations is well known \cite{Ruiz.1998}.
For interactions, these results extend to Banzhaf interactions \cite{Grabisch.1999,Grabisch.2000}.
FSI provides a novel index, which directly yields this dualism for Shapley interactions.
However, FSI differs from SII and $k$-SII, and the classical representation, as well as the link between orders remains mostly unknown \cite{Tsai.2022}.
Lastly, our approach is linked to the \emph{Shapley residuals} \cite{Kumar.2021}.

Approximations of Shapley interactions, were introduced for particular interaction indices by \citet{Sundararajan.2020,Tsai.2022} and arbitrary cardinal interaction indices \cite{Fujimoto.2006} by \citet{Fumagalli.2023,Kolpaczki.2024b}. 
They directly extend on methods for the SV \cite{Castro.2009,Covert.2021,Kolpaczki.2024a}.
KernelSHAP \cite{Lundberg.2017} was introduced for the SV and extended to FSI \cite{Tsai.2022}.
Including SII in KernelSHAP also improved approximations of the SV \cite{Pelegrina.2023}.
Above approximations do not make any assumption about the game and were used for model-agnostic interpretability.
For tree-based models, it was shown that SII can be computed much more efficiently \cite{Muschalik.2024,Zern.2023}, which extends TreeSHAP \cite{Lundberg.2020,Yu.2022} to SII.

Apart from cooperative game theory, interactions were also studied in statistics with functional decomposition \cite{Hooker.2004,Hooker.2007,Lengerich.2020}, where some are linked to game theory \cite{Herbinger.2023,Herren.2022,Hiabu.2023}.
In ML, model-specific variants for deep learning architectures were proposed \cite{Deng.2024,Harris.2022,Janizek.2021,Tsang.2018,Tsang.2020b,Zhang.2021}, as well as model-agnostic methods \cite{Tsang.2020} without axiomatic structures.

\section{From Shapley Values to Interactions}

\paragraph{Notations.}
We use lower-case letters to denote subset sizes, i.e.\ $s := \vert S \vert$, and write $i$ for $\{i\}$ and $ij$ for $\{i,j\}$.
We write constraints in the superscript of a sum, e.g.\ $\sum_{T\subseteq \fset}^{k \leq t \leq \fnum-k}$ indicates a sum over subsets $T \subseteq \fset$ of size $k \leq \vert T \vert \leq \fnum-k$.
We index matrices using subsets, e.g.\ $(\mathbf{X})_{TS}$ refers to the row indexed by $T$ and column indexed by $S$.
Lastly, the indicator function $\mathbf{1}_{A}$ is one if $A$ is fulfilled and $T\sim p$ is a subset sampled according to a probability distribution $p$.

We consider a set of players $\fset := \{1,\dots,\fnum\}$ and the payout of a cooperative game $\nu:\mathcal P(\fset) \to \mathbb{R}$, where $\nu$ models the payout given a subset of players $T \subseteq \fset$.
In the context of local interpretability, $\fset$ might be chosen as the set of features and $\nu$ the prediction of the model given only a subset of features.
For simplicity, we consider games with $\nu(\emptyset)=0$.
In the following, we are interested in quantifying the contribution of a set of players $S\subseteq \fset$ to the payout $\nu$.
For local interpretability, this is known as \emph{feature attribution} for individual players and \emph{feature interaction} for multiple players.
For individual players, the \emph{fair} attribution is the well-known Shapley value (SV) \cite{Shapley.1953}
\begin{equation*}
    \mathcal \phi^{\text{SV}}(i) = \sum_{T \subseteq \fset \setminus i} \tfrac{(\fnum-1-t)!\cdot t!}{\fnum!}\underbrace{\left[\nu(T \cup i) - \nu(T)\right]}_{=: \Delta_i(T)}.
\end{equation*}
The SV is the unique attribution measure that fulfills the linearity, symmetry, dummy, and in particular the efficiency axiom, which distributes the payout $\nu(\fset)-\nu(\emptyset)$ among the players, e.g.\ the prediction of a ML model.
The SV of player $i$ is the weighted average over all marginal contributions $\Delta_i(T) := \nu(T~\cup~i)-\nu(T)$.
For two players $ij$ the marginal contribution to $T \subseteq \fset \setminus ij$ can be extended using the recursion
\begin{align*}\label{eq_delta_recursion}
    \Delta_{ij}(T) = \underbrace{\nu(T \cup ij) -\nu(T)}_{\text{contribution of $ij$ jointly}} - \underbrace{\Delta_i(T)}_{\text{contribution of $i$}} -\underbrace{\Delta_j(T)}_{{\text{contribution of $j$}}},
\end{align*}
i.e.\ the effect of adding both players $ij$, minus their individual contributions.
For feature interactions, a positive value indicates meaningful joint information, e.g.\ exact position versus latitude and longitude separately, whereas a negative value indicates redundancy.
Using above intuition, the marginal contributions can be extended to any subset, which is known as the \emph{discrete derivative} $\Delta_S(T) := \sum_{L\subseteq S}(-1)^{s-l}\nu(T \cup L)$ of a set of players $S \subseteq \fset$ to a set of participating players $T \subseteq \fset \setminus S$.
Similar to the SV, an interaction index is then defined as a weighted average over discrete derivatives.
\begin{definition}[Shapley Interaction Index \cite{Grabisch.1999}]\label{def_sii}
The Shapley Interaction Index (SII) is
\begin{align*}
    \phi^{\text{SII}}(S) := \sum_{T \subseteq \fset \setminus S} \frac{(\fnum-s-t)! \cdot t!}{(\fnum-s+1)!} \Delta_S(T).
\end{align*}
We refer to all SIIs of order $s$ as $\phi^{\text{SII}}_s := \{\phi^{\text{SII}}(S) : \vert S \vert = s \}$.
\end{definition}
The SII includes the SV as $\phi^{\text{SII}}_1 \equiv \phi^{\text{SV}}$.
It was shown by \citet{Grabisch.1999} that the SII satisfies the (generalized) linearity, symmetry and dummy axiom.
The SII is further the unique interaction index that additionally satisfies the \emph{recursive} axiom, which naturally extends the recursion of discrete derivatives and links higher- to lower-order interactions.
For pairwise interaction, it requires
\begin{equation*}
    \phi(ij) = \underbrace{\phi^{[ij]}([ij])}_{\text{SV of merged player}} - \underbrace{\phi^{-j}(i)}_{\text{SV of $i$, $j$ excluded}} - \underbrace{\phi^{-i}(j)}_{\text{SV of $j$, $i$ excluded}}.
\end{equation*}
Here, $\phi^{[ij]},\phi^{-j},\phi^{-i}$ correspond the SV of games, where $[ij]$ are merged, $j$ is absent, and $i$ is absent, respectively.
This recursion is similar to $\Delta_{ij}$, where the right-hand side is the marginal contribution of the merged player $[ij]$ minus the marginal contributions of the individual players, where the other player is absent in $T$.

\subsection{$k$-Additive Interactions and $k$-Shapley Values}
While the SII is an intuitive extension of the SV to arbitrary interactions, it is not suitable for interpretability as it consists of $2^\fnum$ non-trivial components.
We thus introduce $k$-additive interaction indices $\Phi_k(S)$, which admit a lower complexity with non-zero interactions up to order $1\leq \vert S \vert \leq k \leq \fnum$, i.e.\ $\Phi_k(S) := 0$ for $\vert S \vert > k$, which we omit for simplicity.

\begin{definition}[Efficiency]
    A $k$-additive interaction index $\Phi_k(S)$ is efficient if and only if $\sum_{S\subseteq \fset}^{1\leq \vert S \vert \leq k} \Phi_k(S) = \nu(\fset)$.
\end{definition}

The SV and the SII subsume all higher order effects, which has been shown by \citet{Grabisch.1997} and was recently re-discovered and linked to feature attributions \cite{Bord.2023,Hiabu.2023}.
When constructing an efficient $k$-additive interaction index $\Phi_k$ based on SII, it is required to account for double counting of effects with order larger than $k$.
For $k=2$, this was introduced as the (pairwise) Shapley interactions \cite{Lundberg.2020} and extended to higher-order $k$-Shapley values \cite{Bord.2023}.
\begin{definition}[$k$-Shapley Values \cite{Bord.2023}]\label{def_k_sii}
The $k$-Shapley value ($k$-SII) is defined as $\Phi_{k}(S) :=$
 \begin{align*}     
        \begin{cases}
            \phi^{\text{SII}}(S) &\text{if } \vert S \vert = k
            \\
            \Phi_{k-1}(S) + B_{k-\vert S \vert} \sum_{\tilde S \subseteq \fset \setminus S}^{\vert S \vert + \tilde S = k} \phi^{\text{SII}}(S \cup \tilde S) &\text{if } \vert S \vert < k
        \end{cases}
    \end{align*}
    with $1\leq \vert S \vert \leq k \leq \fnum$, $\Phi_0 \equiv 0$ and Bernoulli numbers $B_n$.
\end{definition}
Note that 1-SII is again the SV, i.e.\ $\Phi_1 \equiv \phi^{\text{SV}}$.
Furthermore, $k$-SII requires SII values up to order $k$ and the highest order is equal to SII.
It was shown that $\Phi_k$ is efficient and that the Bernoulli numbers are the unique choice of weighting to achieve this property \cite{Bord.2023}.
For $k=\fnum$ the explicit form is drastically simplified and corresponds to the well-known Möbius transform \cite{Bord.2023}.
The $k$-SII therefore defines a flexible low-complexity $k$-additive interaction index, which is a straight-forward aggregation of SII and can be directly used for an interpretable representation.
In Section~\ref{sec_main}, we will show and leverage that $k$-SII is directly linked to an \emph{optimal} $k$-additive approximation of $\nu$.

\subsection{Approximations of Shapley Interactions}
In applications, the limiting factor of SII and the SV are the amount of evaluations of $\nu$, where both require $2^\fnum$ evaluations for exact computation.
Therefore, multiple approximation techniques have been proposed, as illustrated in \cref{tab_approx_landscape}.
In the following, we introduce the main approaches for SII, which are based on Monte Carlo sampling of different representations, and extend on techniques for the SV.
These methods will be used as baseline methods in our experiments.
\emph{Permutation Sampling} \cite{Castro.2009,Tsai.2022} samples random permutations $\pi$ of $\fset$ and is based on the representation $\phi^{\text{SII}}(S) = \mathbb{E}_\pi[\mathbf{1}_{\pi \in S} \Delta_S(u_S(\pi))]$, where $u_S(\pi)$ is the set of elements that appear in $\pi$ before any element of $S$, and $\mathbf{1}_{\pi \in S}$ is one, if all elements of $S$ appear consecutively in $\pi$, and zero otherwise.
For permutation sampling only a small fraction of interaction estimates are updated with each evaluation.
An alternative method is \emph{SHAP-IQ} \cite{Fumagalli.2023} that samples subsets $T \subseteq \fset$ directly and utilizes a representation $\phi^{\text{SII}}(S) = \mathbb{E}_T[\nu(T)\omega_s(t,\vert T\cap S\vert)]$, where $\omega_s$ are SII-specific weights adjusted to the sampling distribution.
SHAP-IQ can therefore utilize every evaluation to update all interaction estimates.
\emph{SVARM-IQ} \cite{Kolpaczki.2024b} further improves approximation by stratifying over subset sizes $t$ and intersections $T \cap S = L$ with
\begin{equation*}
    \phi^{\text{SII}}(S) = \sum_{t=0}^\fnum \sum_{L\subseteq S}\mathbb{E}[\nu(T)\tilde\omega_s(t,L) \mid \vert T \vert = t, T \cap S = L],
\end{equation*}
where $\tilde \omega_s(t,L) := \omega_s(t,\ell) \cdot p(\vert T \vert = t,T\cap S = L)$, i.e. adjusted by the strata probabilities.
The core idea is to compute strata estimates by sampling $T\subseteq \fset$ and treat it as a random sample for one specific stratum of every interaction.
This allows again to utilize every evaluation for all interaction estimates.
It was shown empirically that SVARM-IQ is the most powerful estimator for SII \cite{Kolpaczki.2024b}.

\subsection{Approximating Shapley Values with KernelSHAP}
While recent research extended techniques of the SV to the SII, one particularly powerful method has not been considered, so far.
KernelSHAP \cite{Lundberg.2017} is based on a representation of the SV as a solution to a constrained WLS problem \cite{Charnes.1988,Covert.2021}
\begin{align}\label{align_kernelshap}
\begin{aligned}
    \phi^{\text{SV}} &= 
        \argmin_{\phi \in \mathbb{R}^\fnum} \sum_{T \subseteq \fset}^{0 < \vert T \vert < \fnum} \mu(t) \left[\nu(T) - \sum_{i\in T} \phi(i)\right]^2,
    %\argmin_{\phi \in \mathbb{R}^\fnum} \mathbb{E}_{T \sim p}\left[\left(\nu(T) - \sum_{i\in T} \phi(i)\right)^2\right],
    \\
    &\text{such that } \sum_{i\in \fset} \phi(i) = \nu(\fset),
\end{aligned}
\end{align}
where $p(T) \propto \mu(t) := \binom{\fnum-2}{t-1}^{-1}$ for $1\leq t\leq \fnum-1$.

\begin{table}[t]
\centering
    \caption{Summary of approximation techniques extended to SII}\label{tab_approx_landscape}
    \vspace{0.5em}
    \adjustbox{max width=\columnwidth}{%
    %\resizebox{\columnwidth}{!}{
    %\centering
    %\begin{tabular}{p{1cm}p{2.8cm}p{3cm}p{3.5cm}p{3.5cm}}
    {
    \renewcommand{\tabcolsep}{3pt}
    \begin{tabular}{ccc}
    \toprule
     \textbf{Method}&\textbf{SV}&\textbf{SII} ($k$-SII)\\\midrule
     \textbf{Permutation}&\cite{Castro.2009}&\cite{Tsai.2022}\\
     \textbf{SHAP-IQ}&\cite{Covert.2021}&\cite{Fumagalli.2023}\\
     \textbf{SVARM}&\cite{Kolpaczki.2024a}&\cite{Kolpaczki.2024b} \\
     \textbf{KernelSHAP}&\cite{Lundberg.2017}&\textbf{KernelSHAP-IQ}\\ \bottomrule
    \end{tabular}
    }
    }
\end{table}

KernelSHAP considers above objective as an expectation
\begin{align*}
    \sum_{T \subseteq \fset}^{0<t<\fnum} \mu(t) [\nu(T) - \sum_{i\in T} \phi(i)]^2 
    = \mathbb{E}_{T}[(\nu(T) - \sum_{i\in T} \phi(i))^2]
\end{align*}
over sampled subsets $T \sim p(T) \propto \mu(t)$, and approximates the expectation using Monte Carlo sampling.
Then, the WLS problem is solved with the approximated objective.
The constraint is thereby included with sets $T \in \{\emptyset,\fset\}$ and a large weight $\mu_\infty \gg 1$, where we prove in \cref{thm_kernel_sv} that this yields the SV.
While the theoretical analysis of KernelSHAP remains difficult, it was shown that the approach works especially well for ML applications \cite{Chen.2023,Kolpaczki.2024a,Lundberg.2017}.
In the following, we rigorously describe this approach and discover such a representation for SII, which we use for efficient approximation.

\section{Weighted Least Square Optimization for Shapley Interactions}\label{sec_main}
The SII is an axiomatic extension of the SV and constructs $k$-SII as an efficient $k$-additive interaction index used for interpretability.
In this section, we give an alternative characterization of SII, as the solution to a WLS problem, which extends on the representation of the SV given in \cref{align_kernelshap}.
Based on this representation, we introduce KernelSHAP-IQ, an extension of KernelSHAP for Shapley interactions.
We rigorously prove its convergence for pairwise interactions and conclude with empirically validated conjectures for higher orders.
All proofs are deferred to \cref{appx_proofs}.

The representation of the SV in \cref{align_kernelshap}, utilized by KernelSHAP, shows that the approximation $\sum_{i \in T} \phi(i) \approx \nu(T)$ of the game is \emph{optimal} in terms of the WLS objective, if and only if $\phi(i)$ are the SVs.
This approximation can be viewed as a $1$-additive approximation of the game, which we now directly extend to $k$-additive approximations.

\begin{definition}[$k$-additive Approximation]\label{def_k_order}
    The $k$-additive approximation induced by the $k$-SII $\Phi_k$ is given by
     \begin{equation*}
         \hat\nu_k(T) := \sum_{S \subseteq T}^{1 \leq \vert S \vert\leq k} \Phi_{k}(S) \text{ for } T \subseteq \fset.
     \end{equation*}
\end{definition}

Clearly, $\hat\nu_1$ is the approximation given in \cref{align_kernelshap}.
Linking higher orders of the SII to such an optimal approximation of $\nu$ has remained unknown.
In order to extend KernelSHAP to interactions, recent research introduced the FSI as an alternative index \cite{Tsai.2022}.
FSI constructs an optimal $k$-additive approximation of $\nu$ by extending on the KernelSHAP representation with interactions up to order $k$.
One disadvantage of FSI is that the solution directly depends on $k$, which complicates the relationship between representations of different order.
Furthermore, the exact representation of lower-order FSI with discrete derivatives remains unknown \cite{Tsai.2022}.
In contrast, we show that $k$-SII constructed from SII values yield an appealing iterative approximation of $\nu$, which additively extends on the previous approximation by including a weighted average of SII of order $k$.
We then show that SII via $k$-SII is linked to an \emph{optimal} $k$-additive approximation of $\nu$.
This yields a suitable framework, where SII is fixed independent of $k$ and $k$-SII yields an interpretable $k$-additive interaction index, as summarized in \cref{fig_link_SII_kSII_approx}.

\subsection{Optimal Approximations via Shapley Interactions}

%KernelSHAP utilizes that the SV is the solution to a WLS problem yielding an \emph{optimal} $1$-additive approximation of $\nu$ with $\nu(T) \approx \sum_{i \in T} \phi^{\text{SV}}(i)$, which is exact for $\nu(\fset) = \sum_{i \in \fset} \phi^{\text{SV}}(i)$ due to the efficiency axiom.

\begin{figure}[t]
    \centering
    \includegraphics[width=\columnwidth]{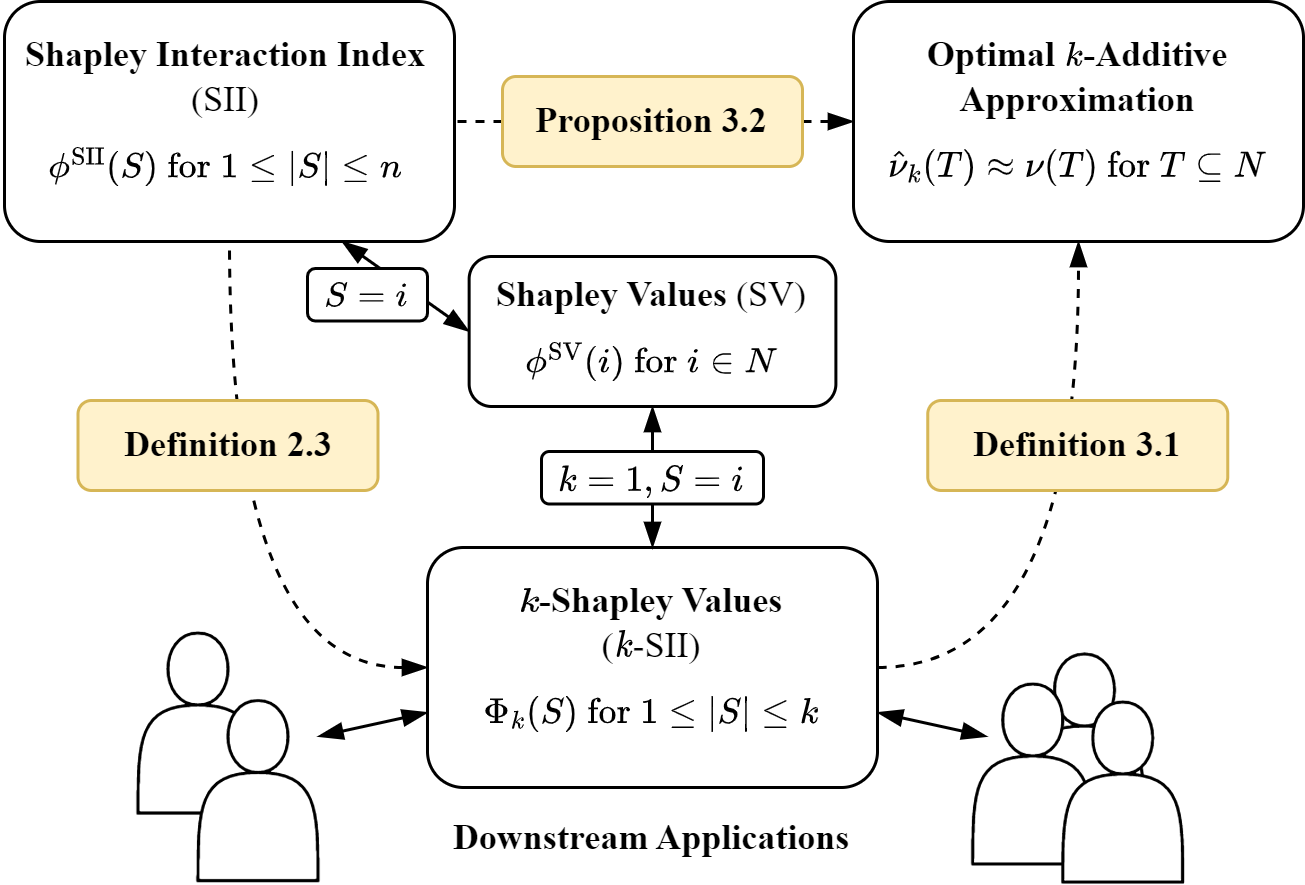}
    \caption{Links between SII, $k$-SII and the $k$-additive approximation $\hat\nu_k$. The SII captures the average contribution of $S$ to $\nu$, which constructs the $k$-additive interaction index $k$-SII, which is used for interpretation. Both are linked to $\hat\nu_k$, where SII yields an \emph{optimal} approximation, which iteratively constructs $\hat\nu_k$.}
    \label{fig_link_SII_kSII_approx}
\end{figure}

%Similar to the SV, we construct a $k$-additive approximation of $\nu(T)$ as the sum of $k$-SII values contained in $T$.

%Clearly, $\hat\nu_k$ includes $k$-SII and thereby SII up to order $k$.
We first link $\hat\nu_k$ and SII, which reveals a surprising recursion. 

\begin{proposition}[Iterative Approximation]\label{prop_iterative_approx}
If $k\geq 2$, then
\begin{equation*}
    \hat\nu_k(T)  = \hat\nu_{k-1}(T) + \sum_{S \subseteq \fset}^{\vert S \vert = k} \phi^{\text{SII}}(S) \lambda(\vert S \vert,\vert S \cap T \vert)
\end{equation*}
with weights $\lambda(k,\ell) := \sum_{r=1}^{\ell}\binom{\ell}{r} B_{k-r}$ and $\lambda(k,0) := 0$.
\end{proposition}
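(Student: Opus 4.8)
The plan is to prove the identity by a direct manipulation of Definition~\ref{def_k_order} together with the recursion for $k$-SII in Definition~\ref{def_k_sii}; no property of SII beyond this algebraic recursion is needed. First I would split the defining sum for $\hat\nu_k(T)$ into the top order $\vert S \vert = k$ and the lower orders $1 \le \vert S\vert \le k-1$. For the top order, Definition~\ref{def_k_sii} gives $\Phi_k(S) = \phi^{\text{SII}}(S)$ directly. For the lower orders, substituting $\Phi_k(S) = \Phi_{k-1}(S) + B_{k-\vert S\vert} \sum_{\tilde S \subseteq \fset \setminus S}^{\vert S\vert + \vert \tilde S\vert = k} \phi^{\text{SII}}(S \cup \tilde S)$ and observing that the $\Phi_{k-1}$ terms reassemble exactly into $\hat\nu_{k-1}(T)$, I obtain
\[
\hat\nu_k(T) - \hat\nu_{k-1}(T) = \sum\nolimits_{S \subseteq T}^{\vert S\vert = k} \phi^{\text{SII}}(S) + \sum\nolimits_{S \subseteq T}^{1 \le \vert S\vert \le k-1} B_{k-\vert S\vert} \sum\nolimits_{\tilde S \subseteq \fset \setminus S}^{\vert S\vert + \vert \tilde S\vert = k} \phi^{\text{SII}}(S \cup \tilde S).
\]

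The key step is to reindex the double sum by the order-$k$ set $U := S \sqcup \tilde S$. For fixed $U$ with $\vert U\vert = k$, the admissible $S$ in that sum are precisely the subsets of $U \cap T$ with $1 \le \vert S\vert \le k-1$: the constraints $S \subseteq T$ and $S \subseteq U$ force $S \subseteq U \cap T$, and conversely any such $S$ yields a valid $\tilde S := U \setminus S \subseteq \fset \setminus S$. Collecting terms by $r := \vert S\vert$ and using that there are $\binom{\vert U\cap T\vert}{r}$ such subsets, the double sum becomes $\sum_{U \subseteq \fset}^{\vert U\vert = k} \phi^{\text{SII}}(U) \sum_{r=1}^{\min(\vert U\cap T\vert,\,k-1)} \binom{\vert U\cap T\vert}{r} B_{k-r}$.

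It then remains to merge this with the leftover top-order term and recognize $\lambda$. Writing $\ell := \vert U\cap T\vert$: if $U \not\subseteq T$ then $\ell \le k-1$, the top-order term does not involve $U$, and the inner sum already equals $\sum_{r=1}^{\ell}\binom{\ell}{r}B_{k-r} = \lambda(k,\ell)$; if $U \subseteq T$ then $\ell = k$, the top-order term contributes an extra $\phi^{\text{SII}}(U)$, which is exactly the missing $r=k$ summand $\binom{k}{k}B_0 = 1$ needed to promote $\sum_{r=1}^{k-1}$ to $\sum_{r=1}^{k} = \lambda(k,k)$; and if $\ell = 0$ both the inner sum and $\lambda(k,0)$ vanish. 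Hence the coefficient of each $\phi^{\text{SII}}(U)$ is $\lambda(k,\vert U\cap T\vert)$, which is the claim. I would also record the base case: $\Phi_0 \equiv 0$ gives $\hat\nu_0 \equiv 0$, and since $\lambda(1,1) = B_0 = 1$ the $k=1$ instance reads $\hat\nu_1(T) = \sum_{i\in T}\phi^{\text{SV}}(i)$.

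I expect the only real obstacle to be the bookkeeping in the reindexing — in particular, spotting that an order-$k$ set $U$ contained entirely in $T$ is counted both in the top-order sum and (as the index $S$ running all the way up to $U$ would be, were it allowed) in the recursion-derived sum, and that this double contribution is precisely what upgrades the truncated sum $\sum_{r=1}^{k-1}$ to the full $\lambda(k,k)$. Keeping the range restriction $1 \le \vert S\vert \le k-1$ explicit throughout the calculation is what makes this final merge clean.
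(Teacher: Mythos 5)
Your proof is correct, and it takes a mildly but genuinely different route from the paper's. The paper starts from the \emph{explicit} (unfolded) formula $\Phi_k(S) = \sum_{S \subseteq \tilde S \subseteq \fset}^{\vert \tilde S \vert \leq k} B_{\vert \tilde S \vert - \vert S \vert}\,\phi^{\text{SII}}(\tilde S)$, cited from Appendix~A.2 of \citet{Bord.2023}, swaps the order of summation to obtain the full non-recursive expression $\hat\nu_k(T) = \sum_{\tilde S \subseteq \fset}^{\vert \tilde S \vert \leq k}\phi^{\text{SII}}(\tilde S)\,\lambda(\vert \tilde S\vert,\vert \tilde S \cap T\vert)$, and only then splits off the order-$k$ terms. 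You instead work directly from the recursive Definition~\ref{def_k_sii}, decompose the increment $\hat\nu_k(T)-\hat\nu_{k-1}(T)$, and reindex by $U = S \sqcup \tilde S$. The combinatorial core is identical in both arguments --- counting the $\binom{\ell}{r}$ subsets $S$ of size $r$ inside $\tilde S \cap T$ (your $U \cap T$) and attaching the weight $B_{k-r}$ --- and your treatment of the boundary case $U \subseteq T$, where the separated top-order term supplies exactly the missing $r=k$ summand $\binom{k}{k}B_0$, is the one genuinely delicate step and you get it right. What your version buys is self-containedness: it needs nothing beyond the paper's own recursive definition of $k$-SII. What the paper's version buys is the closed form for $\hat\nu_k$ itself, which makes the $k$-independence of the coefficients manifest and is reused verbatim in the proof of Corollary~\ref{cor_d_sii}; in your setup that closed form is recovered only by telescoping the recursion from $\hat\nu_0 \equiv 0$.
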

\begin{corollary}\label{cor_nuhat}
    It holds $\lambda(k,k)=0$ for $k>1$, $\hat\nu_1(T) = \sum_{i \in T} \phi^{\text{SV}}(i)$, and $\hat\nu_2(T) =\hat\nu_1(T) - \frac 1 2 \sum_{{ij} \subseteq \fset}^{\vert ij \cap T\vert = 1}\phi^{\text{SII}}(ij)$.
\end{corollary}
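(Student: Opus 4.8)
The plan is to derive all three claims directly from the iterative formula of \cref{prop_iterative_approx} together with the classical recurrence for the Bernoulli numbers. The only genuine computation is the identity $\lambda(k,k)=0$ for $k>1$; the statements about $\hat\nu_1$ and $\hat\nu_2$ then fall out by specializing the recursion of \cref{prop_iterative_approx} to $k=1$ and $k=2$.

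First I would prove $\lambda(k,k)=0$ for $k>1$. Starting from $\lambda(k,k) = \sum_{r=1}^{k}\binom{k}{r}B_{k-r}$, substitute $j := k-r$, so that $\binom{k}{r}=\binom{k}{k-j}=\binom{k}{j}$ and the sum becomes $\lambda(k,k) = \sum_{j=0}^{k-1}\binom{k}{j}B_j$. The defining recurrence of the Bernoulli numbers (in the convention $B_1 = -\tfrac12$, which is the one under which $k$-SII is efficient, cf.\ \citet{Bord.2023}) states $\sum_{j=0}^{k-1}\binom{k}{j}B_j = 0$ for every $k\geq 2$, which gives the claim. For completeness I would also record $\lambda(1,1)=B_0=1$, which is needed for the base case, as well as the trivial $\lambda(k,0)=0$.

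Next, for $\hat\nu_1$: applying \cref{prop_iterative_approx} with $k=1$ and using $\hat\nu_0\equiv 0$ (since $\Phi_0\equiv 0$), together with $\lambda(1,0)=0$ and $\lambda(1,1)=1$, yields $\hat\nu_1(T) = \sum_{S\subseteq\fset}^{\vert S\vert = 1}\phi^{\text{SII}}(S)\,\mathbf{1}_{S\subseteq T}$. Since $\phi^{\text{SII}}_1 = \phi^{\text{SV}}$, this equals $\sum_{i\in T}\phi^{\text{SV}}(i)$; equivalently, one reads this straight off \cref{def_k_order} using $\Phi_1\equiv\phi^{\text{SV}}$. For $\hat\nu_2$: applying \cref{prop_iterative_approx} with $k=2$ gives $\hat\nu_2(T) = \hat\nu_1(T) + \sum_{S\subseteq\fset}^{\vert S\vert = 2}\phi^{\text{SII}}(S)\,\lambda(2,\vert S\cap T\vert)$. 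The weight vanishes unless $\vert S\cap T\vert = 1$: indeed $\lambda(2,0)=0$ by definition and $\lambda(2,2)=0$ by the first part, while $\lambda(2,1)=\binom{1}{1}B_1 = -\tfrac12$. Hence only pairs $S=ij$ meeting $T$ in exactly one element survive, each contributing $-\tfrac12\phi^{\text{SII}}(ij)$, which is the asserted formula.

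I do not anticipate a real obstacle here — the statement is essentially a bookkeeping consequence of \cref{prop_iterative_approx}. The one point requiring care is the sign and indexing convention for the Bernoulli numbers: one must use the convention consistent with the $k$-SII definition (so that $\lambda(2,1)=B_1=-\tfrac12$ rather than $+\tfrac12$), and one must check that the reindexing $j=k-r$ lines the sum up with the standard recurrence $\sum_{j=0}^{k-1}\binom{k}{j}B_j=0$ rather than with an off-by-one variant. As an independent sanity check one could also verify $\hat\nu_2$ by expanding $\Phi_2(S)$ from \cref{def_k_sii} inside \cref{def_k_order} and regrouping terms by $\vert S\cap T\vert$.
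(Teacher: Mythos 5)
Your proposal is correct and follows essentially the same route as the paper's proof: compute $\lambda(1,1)=B_0=1$, use the symmetry of the binomial coefficient to rewrite $\lambda(k,k)=\sum_{j=0}^{k-1}\binom{k}{j}B_j$ and invoke the defining Bernoulli recurrence to get $\lambda(k,k)=0$ for $k>1$, then specialize \cref{prop_iterative_approx} to $k=1,2$ with $\lambda(2,1)=B_1=-\tfrac12$. Your explicit remarks on the sign convention for $B_1$ and the reindexing $j=k-r$ are sound and, if anything, slightly more careful than the paper's own write-up.
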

Increasing $k$ successively builds on the SV-based approximation $\hat\nu_1(T)$ from \cref{align_kernelshap} by including all SII of order $k$, i.e.\ $\binom{\fnum}{k}$ components.
With increasing $k$, the approximation $\hat\nu_k$ includes more interaction terms, which yields a complexity-accuracy trade-off.
In this context, $\hat\nu_1$ based on the SVs is the least complex, and $\hat\nu_n$ is the most complex and most faithful approximation, where $\hat\nu_\fnum(T) = \nu(T)$ follows from a game-theoretic result \cite{Grabisch.2000}.

\begin{corollary}\label{cor_d_sii}
    For any $T\subseteq \fset$, it holds $\nu(T)=\hat\nu_\fnum(T)$.
\end{corollary}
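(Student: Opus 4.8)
The plan is to reduce the statement to the maximal-order case $k = \fnum$ of Definition~\ref{def_k_sii} and then invoke Möbius inversion on the subset lattice. Write $m^\nu(S) := \sum_{R \subseteq S} (-1)^{\vert S \vert - \vert R \vert}\nu(R)$ for the Möbius coefficients of $\nu$. The first step is to show that $\Phi_\fnum(S) = m^\nu(S)$ for every $S$ with $1 \leq \vert S \vert \leq \fnum$. For $\vert S \vert = \fnum$ this is immediate: $\Phi_\fnum(S) = \phi^{\text{SII}}(S)$ by Definition~\ref{def_k_sii}, and the sum defining $\phi^{\text{SII}}(S)$ in Definition~\ref{def_sii} runs over $T \subseteq \fset \setminus S = \emptyset$, hence reduces to its single term $T = \emptyset$ with weight $1$, giving $\phi^{\text{SII}}(S) = \Delta_S(\emptyset) = m^\nu(S)$. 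For $\vert S \vert < \fnum$ one unrolls the recursion in Definition~\ref{def_k_sii} down to $\Phi_{\vert S \vert}(S) = \phi^{\text{SII}}(S)$, substitutes Definition~\ref{def_sii} into each of the SII terms, and collects the coefficient of each $\nu(R)$; this coefficient telescopes to $(-1)^{\vert S \vert - \vert R \vert}$ for $R \subseteq S$ (and $0$ otherwise) by the defining recursion of the Bernoulli numbers. This identification is exactly the observation recorded after Definition~\ref{def_k_sii} (and in \citet{Bord.2023}), so it may also simply be quoted.

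Given $\Phi_\fnum = m^\nu$, the remainder is short. By Definition~\ref{def_k_order} with $k = \fnum$,
\[
\hat\nu_\fnum(T) \;=\; \sum_{S \subseteq T}^{1 \leq \vert S \vert \leq \fnum} \Phi_\fnum(S) \;=\; \sum_{\emptyset \neq S \subseteq T} m^\nu(S) \;=\; \sum_{S \subseteq T} m^\nu(S),
\]
where the last equality uses $m^\nu(\emptyset) = \nu(\emptyset) = 0$, i.e.\ the mean-centering assumption. Finally, the standard Möbius inversion over the Boolean lattice — which follows from $\sum_{R \subseteq S \subseteq T} (-1)^{\vert S \vert - \vert R \vert} = \mathbf{1}_{R = T}$ after exchanging the order of summation — yields $\sum_{S \subseteq T} m^\nu(S) = \nu(T)$. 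Combining the two displays gives $\hat\nu_\fnum(T) = \nu(T)$ for all $T \subseteq \fset$.

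The argument is essentially bookkeeping, and the only step requiring genuine care is the identification $\Phi_\fnum = m^\nu$, i.e.\ verifying that the Bernoulli weights in Definition~\ref{def_k_sii} are precisely those that make the aggregation telescope to the Möbius transform at the top order; this is a finite combinatorial fact stated without proof in the cited literature. An alternative, arguably more self-contained route avoids this by iterating Proposition~\ref{prop_iterative_approx} from $k = 1$ to $k = \fnum$ with $\hat\nu_0 \equiv 0$, which telescopes to $\hat\nu_\fnum(T) = \sum_{\emptyset \neq S \subseteq \fset} \phi^{\text{SII}}(S)\,\lambda(\vert S \vert, \vert S \cap T \vert)$; one then recognizes the right-hand side as the recovery formula for SII of \citet{Grabisch.2000}, at the cost of a separate Bernoulli-number computation showing that $\lambda$ matches the coefficients appearing there.
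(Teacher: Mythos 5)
Your proof is correct, and your main route is genuinely different from the paper's. The paper simply iterates Proposition~\ref{prop_iterative_approx} down to $\hat\nu_0\equiv 0$, obtaining $\hat\nu_\fnum(T)=\sum_{S\subseteq\fset}^{1\leq\vert S\vert}\phi^{\text{SII}}(S)\,\lambda(\vert S\vert,\vert S\cap T\vert)$, and then closes by citing the known recovery formula expressing $\nu(T)$ as exactly this Bernoulli-weighted sum over SII (Table~4 in \citet{Grabisch.2000}, going back to \citet{Denneberg.1999}) --- which is precisely the ``alternative, more self-contained route'' you sketch in your last paragraph, including the caveat that one must match $\lambda$ against the coefficients in that table. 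Your primary argument instead factors through the identification $\Phi_\fnum=m^\nu$ (stated in the paper after Definition~\ref{def_k_sii} and proved in \citet{Bord.2023}) and then finishes with elementary M\"obius inversion via $\sum_{R\subseteq S\subseteq T}(-1)^{\vert S\vert-\vert R\vert}=\mathbf{1}_{R=T}$; you correctly flag that the mean-centering assumption $\nu(\emptyset)=0$ is what lets you extend the sum in Definition~\ref{def_k_order} to include $S=\emptyset$. Both proofs ultimately lean on one quoted conversion fact from the interaction-index literature --- yours on the $k$-SII-to-M\"obius identification at top order, the paper's on the $\nu$-from-SII recovery formula --- so neither is strictly more self-contained, but your version has the advantage that the remaining combinatorics (M\"obius inversion on the Boolean lattice) is completely standard and verifiable in two lines, whereas the paper's requires trusting that $\lambda(k,\ell)=\sum_{r=1}^{\ell}\binom{\ell}{r}B_{k-r}$ matches the tabulated coefficients. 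The one step you leave slightly informal is the unrolling of the Definition~\ref{def_k_sii} recursion to get $\Phi_\fnum=m^\nu$ directly; since you also offer to quote it from \citet{Bord.2023}, this is not a gap.
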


In this case, $\fnum$-SII values ($k = n$ for $k$-SII), which construct $\hat\nu_\fnum$, are the Möbius transform \cite{Bord.2023}, which is related to the well-known functional ANOVA decomposition \cite{Hooker.2004} from statistics, if $\nu$ are conditional expectations \cite{Herren.2022}.

\paragraph{Optimality of the SII.}
We now link SII of order $k$, $\phi^{\text{SII}}_k$, to an  \emph{optimal} $k$-additive approximation of $\nu$ in terms of a WLS objective.
More concretely, we show that for all interaction indices $\phi_k$ of order $k$ with induced approximation via \cref{prop_iterative_approx}, the SII $\phi_k^{\text{SII}}$ yields the optimal approximation given a specific WLS objective.
In the following, it is convenient to use matrix notations, where the rows correspond to subsets $T \subseteq \fset$ and the columns to interactions $S \subseteq \fset$ of order $\vert S \vert = k$.
We first introduce the residual vector $\mathbf{y}_k \in \mathbb{R}^{2^\fnum}$ as $(\mathbf{y}_1)_T := \nu(T)$ and for $k\geq 2$
\begin{equation*}
    (\mathbf{y}_k)_T := \nu(T)-\hat\nu_{k-1}(T) \text{ for } T \subseteq N.
\end{equation*}
Next, we introduce the coefficient matrix $\mathbf{X}_k \in \mathbb{R}^{2^\fnum \times \binom{\fnum}{k}}$ as
\begin{align*}
    (\mathbf{X}_k)_{TS} := \lambda(\vert S \vert,\vert T \cap S \vert) \text{ for } T,S \subseteq N \text{ with } \vert S \vert = k.
\end{align*}
With SII of order $k$, $\phi^{\text{SII}}_k \in \mathbb{R}^{\binom{\fnum}{k}}$, and Proposition~\ref{prop_iterative_approx} the approximation error of $\nu$ and $\hat\nu_k$ is given by $\mathbf{y}_k -\mathbf{X}_k \phi^{\text{SII}}_k$.
Introducing a diagonal weight matrix $\mathbf{W}_k \in \mathbb{R}^{2^\fnum\times 2^\fnum}$, we aim to characterize $\phi^{\text{SII}}_k$ as the solution to a WLS problem
\begin{equation}\label{eq_WLS}
    \phi^* = \argmin_{\phi_k \in \mathbb{R}^{\binom{\fnum}{k}}} \left\Vert \sqrt{\mathbf{W}_k}  \left(\mathbf{y}_k - \mathbf{X}_k  \phi_k\right)\right\Vert_2^2,
\end{equation}
where $\Vert \cdot \Vert_2$ is the Euclidean norm.
Note that this WLS problem for $k=1$ is similar to \cref{align_kernelshap} but includes all subsets $T \subseteq \fset$.
The solution is explicitly given as
\begin{equation}\label{eq_WLS_solution}
\phi^* = (\mathbf{X}^T_k \mathbf{W}_k \mathbf{X}_k)^{-1} \mathbf{X}^T_k \mathbf{W}_k \cdot \mathbf{y}_k.
\end{equation}

We now define the weights of $\mathbf{W}_k$ with $\mu_\infty \gg 1$ as
\begin{equation*}
   (\mathbf{W}_k)_{TT} := \mu_k(t) := \begin{cases}
         \binom{\fnum-2 \cdot k}{t-k}^{-1} &\text{if } k\leq t\leq \fnum-k
        \\
        \mu_\infty&\text{else.}
    \end{cases}
\end{equation*}
\begin{remark}
    The weights $\mu_k$ appear naturally from $\phi^{\text{SII}}(S) = \sum_{T \subseteq \fset} \nu(T) \omega_s(t,\vert T \cap S\vert)$ \cite{Fumagalli.2023} as the common factors $\mu_k(t) \propto \omega_s(t,0),\dots\omega_s(t,k)$ for $k \leq t \leq \fnum-k$.
    Since $(\mathbf{X}^T_k \mathbf{W}_k \mathbf{X}_k)^{-1} \in \mathbb{R}^{\binom{\fnum}{k} \times \binom{\fnum}{k}}$ does not depend on $T$, it holds that $\mu_k$ controls the final weight in $\mathbf{X}_k^T \mathbf{W}_k$ for all interactions $S$. 
    For a more detailed discussion, see \cref{sec_appendix_intuition}.
\end{remark}

For $k=1$, $\mu_1$ are the KernelSHAP weights and the SV is the solution of the WLS problem for $\mu_\infty \to \infty$, which justifies KernelSHAP.
\begin{theorem}[KernelSHAP]\label{thm_kernel_sv}
Let $\fnum\geq 2$ and $(\mathbf{W}_1)_{TT} := \mu_1(t)$. Then the SV is represented as
    \begin{align*}
       \phi^{\text{SV}}  = \lim_{\mu_\infty \to \infty} \argmin_{\phi_1 \in \mathbb{R}^\fnum} \left\Vert \sqrt{\mathbf{W}_1} \left(\mathbf{y}_1 - \mathbf{X}_1 \phi_1\right)\right\Vert_2^2.
    \end{align*}
\end{theorem}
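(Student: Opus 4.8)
\emph{Overview.} The plan is to make \eqref{eq_WLS}--\eqref{eq_WLS_solution} fully explicit for $k=1$, reduce the Gram matrix to a ``scalar plus rank-one'' matrix, take the limit $\mu_\infty\to\infty$ in closed form, and then identify the limit with $\phi^{\text{SV}}$ by matching coefficients of $\nu$. Concretely, since $\Phi_0\equiv 0$ we have $\hat\nu_0\equiv 0$, so $(\mathbf{y}_1)_T=\nu(T)$; and since $\lambda(1,1)=B_0=1$ and $\lambda(1,0)=0$ we have $(\mathbf{X}_1)_{T\{i\}}=\mathbf{1}_{i\in T}$, i.e.\ $\mathbf{X}_1\phi_1$ is exactly the $1$-additive predictor $\sum_{i\in T}\phi_1(i)$ from \cref{cor_nuhat}. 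The only rows carrying weight $\mu_\infty$ are $T=\emptyset$ and $T=\fset$; the $\emptyset$-row is identically zero with $(\mathbf{y}_1)_\emptyset=0$ and contributes nothing, while the $\fset$-row equals the all-ones vector $\mathbf{1}\in\mathbb{R}^\fnum$ with $(\mathbf{y}_1)_\fset=\nu(\fset)$. Letting $\mathbf{W}_1^{<}$ be the diagonal matrix that agrees with $\mathbf{W}_1$ on the rows with $1\le t\le\fnum-1$ and vanishes on $T\in\{\emptyset,\fset\}$, and setting $A:=\mathbf{X}_1^{T}\mathbf{W}_1^{<}\mathbf{X}_1$ and $b:=\mathbf{X}_1^{T}\mathbf{W}_1^{<}\mathbf{y}_1$, this gives
\[
\mathbf{X}_1^{T}\mathbf{W}_1\mathbf{X}_1=A+\mu_\infty\,\mathbf{1}\mathbf{1}^{T},\qquad
\mathbf{X}_1^{T}\mathbf{W}_1\mathbf{y}_1=b+\mu_\infty\,\nu(\fset)\,\mathbf{1}.
\]
The rows indexed by singletons already form $\mathbf{I}_\fnum$, so $\mathbf{X}_1$ has full column rank (using $\fnum\ge 2$, so these rows satisfy $1\le t\le\fnum-1$); hence $A\succ 0$ and the closed form \eqref{eq_WLS_solution} is valid for every finite $\mu_\infty$.

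\emph{Computing $A$ and taking the limit.} Since $A_{ij}$ depends only on whether $i=j$, we have $A=a\,\mathbf{I}+\beta\,\mathbf{1}\mathbf{1}^{T}$ with $a=A_{ii}-A_{ij}$. The one genuine computation is the value of $a$: using $\mu_1(t)=\binom{\fnum-2}{t-1}^{-1}$ one simplifies the size-$t$ contribution to $A_{ii}$ to $\binom{\fnum-1}{t-1}\mu_1(t)=\tfrac{\fnum-1}{\fnum-t}$ and to $A_{ij}$ to $\binom{\fnum-2}{t-2}\mu_1(t)=\tfrac{t-1}{\fnum-t}$, and summing over $t=1,\dots,\fnum-1$ the difference telescopes to $a=\fnum-1$. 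With $A$ of this form, $A^{-1}\mathbf{1}=\tfrac{1}{a+\beta\fnum}\mathbf{1}$, so applying Sherman--Morrison to $(A+\mu_\infty\mathbf{1}\mathbf{1}^{T})^{-1}$ in \eqref{eq_WLS_solution} and letting $\mu_\infty\to\infty$ yields a limit that actually exists (not merely along subsequences) and in which $\beta$ cancels:
\[
\phi^\infty:=\lim_{\mu_\infty\to\infty}\phi^*
=\tfrac1a\!\left(\mathbf{I}-\tfrac1\fnum\mathbf{1}\mathbf{1}^{T}\right)b+\tfrac{\nu(\fset)}{\fnum}\,\mathbf{1}.
\]
(Equivalently, a standard penalty argument identifies $\phi^\infty$ as the unique minimizer of the efficiency-constrained WLS problem with weights $\mathbf{W}_1^{<}$: the iterates are bounded since any feasible point yields a $\mu_\infty$-independent objective value, every limit point is feasible and constrained-optimal, and uniqueness follows from $A\succ 0$.)

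\emph{Identifying $\phi^\infty$ with $\phi^{\text{SV}}$.} Both are linear in $\nu$, so it suffices to match the coefficient of $\nu(T)$ for every $T$. From the displayed formula, $\phi^\infty(i)$ has coefficient $\tfrac1\fnum$ on $\nu(\fset)$, coefficient $\tfrac{\mu_1(t)}{a}\cdot\tfrac{\fnum-t}{\fnum}$ on $\nu(T)$ when $i\in T\subsetneq\fset$, and $-\tfrac{\mu_1(t)}{a}\cdot\tfrac{t}{\fnum}$ when $i\notin T$; plugging in $a=\fnum-1$ and $\mu_1(t)=\binom{\fnum-2}{t-1}^{-1}$, these reduce to $\tfrac{(\fnum-t)!\,(t-1)!}{\fnum!}$ and $-\tfrac{(\fnum-1-t)!\,t!}{\fnum!}$, which are precisely the coefficients of $\nu(T\cup i)$ and $\nu(T)$ in $\phi^{\text{SV}}(i)=\sum_{R\subseteq\fset\setminus i}\tfrac{(\fnum-1-r)!\,r!}{\fnum!}\bigl(\nu(R\cup i)-\nu(R)\bigr)$ from \cref{def_sii}, and the $\nu(\fset)$ coefficients also agree. (Alternatively, one verifies directly that $\phi^{\text{SV}}$ solves the optimality system ``efficiency $+$ $A\phi-b\in\operatorname{span}\{\mathbf{1}\}$'': efficiency is the Shapley efficiency axiom, and $a\,\phi^{\text{SV}}(i)-b_i$ comes out independent of $i$ after the same bookkeeping.) I expect the main obstacle to be exactly this last step together with the scalar identity $a=\fnum-1$ --- i.e.\ reconciling the binomial KernelSHAP weights $\mu_1$ with the factorial Shapley weights --- whereas the reduction to a scalar-plus-rank-one Gram matrix and the $\mu_\infty\to\infty$ limit are essentially mechanical.
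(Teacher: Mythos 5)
Your proposal is correct and follows essentially the same route as the paper's proof: both reduce the Gram matrix to the form $\alpha\mathbf{I}+\beta\mathbf{J}$, identify the key scalar $\fnum-1$ (your $a=A_{ii}-A_{ij}$ is exactly the paper's $\mu_{1,1}-\mu_{1,0}$, since the $\mu_\infty$ contributions cancel in the difference), invert in closed form, take $\mu_\infty\to\infty$, and match the resulting coefficients of $\nu(T)$ against a known representation of the SV. The only cosmetic differences are that you isolate the $T=\fset$ row as a rank-one perturbation and organize the limit via Sherman--Morrison, whereas the paper carries $\mu_\infty$ inside $\mu_{1,0},\mu_{1,1}$ and passes to the limit in the explicit inverse, and you match against the classical factorial formula rather than the weighted representation of \citet{Fumagalli.2023}.
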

A large weight $\mu_{\infty}$ requires the solution to achieve lower approximation error for this subset, which is a soft constraint.
The subsets with weights $\mu_\infty$ are the empty set and $N$.
For the empty set, the approximation error is constant, and is thus not influenced by the solution of the WLS problem.
For $\fset$, the approximation error is $ (\mathbf{y}_1)_\fset-(\mathbf{X}_1 \phi)_{\fset} = \nu(\fset)-\sum_{i\in\fset}\phi(i)$, which is zero if efficiency holds.
Intuitively, with $\mu_\infty \to \infty$, the soft constraint becomes a hard constraint, which requires zero approximation error, i.e. efficiency with $\nu(\fset) = \sum_{i\in \fset} \phi(i)$.
This was argued in KernelSHAP \cite{Lundberg.2017} without formal proof.
\cref{thm_kernel_sv} is a formal proof that this intuition holds, and indeed yields the SV satisfying the efficiency constraint from \cref{align_kernelshap}. 

For pairwise SII, we now present a novel representation akin to \cref{thm_kernel_sv}.
\begin{theorem}[KernelSHAP-IQ, $k=2$]\label{thm_kernel_sii}
Let $\fnum\geq4$ and $(\mathbf{W}_2)_{TT} := \mu_2(t)$. Then the pairwise SII is represented as
    \begin{align*}
       \phi^{\text{SII}}_2  = \lim_{\mu_\infty \to \infty} \argmin_{\phi_2 \in \mathbb{R}^{\binom{\fnum}{2}}} \left\Vert \sqrt{\mathbf{W}_2}  \left(\mathbf{y}_2 - \mathbf{X}_2  \phi_2\right)\right\Vert_2^2.
    \end{align*}
\end{theorem}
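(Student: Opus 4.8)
The plan is to characterise $\phi^{\text{SII}}_2$ through the optimality conditions of the WLS problem \eqref{eq_WLS} in the penalty limit $\mu_\infty\to\infty$, mirroring the argument for \cref{thm_kernel_sv}. For every finite $\mu_\infty>0$ the objective is strictly convex, because the columns of $\mathbf{X}_2$ are linearly independent for $\fnum\geq4$: evaluating $\mathbf{X}_2\phi=\mathbf{0}$ on the rows $T=\{i\}$ gives $\sum_{l\neq i}\phi(il)=0$ (as $\lambda(2,1)=-\tfrac12$), and then the rows $T=\{i,j\}$ force $\phi(ij)=0$ (as $\lambda(2,2)=0$). Hence the minimiser $\phi^*_{\mu_\infty}=(\mathbf{X}_2^T\mathbf{W}_2\mathbf{X}_2)^{-1}\mathbf{X}_2^T\mathbf{W}_2\mathbf{y}_2$ is unique, and a standard perturbation argument shows $\lim_{\mu_\infty\to\infty}\phi^*_{\mu_\infty}$ is the minimiser of the \emph{interior} problem (rows $2\leq t\leq\fnum-2$) subject to matching $\mathbf{y}_2$, in the least-squares sense, on the \emph{boundary} rows $t\in\{0,1,\fnum-1,\fnum\}$. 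The crucial simplification, via \cref{prop_iterative_approx}, is that the residual at $\phi^{\text{SII}}_2$ is the $2$-additive approximation error, $(\mathbf{y}_2-\mathbf{X}_2\phi^{\text{SII}}_2)_T=\nu(T)-\hat\nu_2(T)=:r_T$, and that the column of $\mathbf{X}_2$ for a pair $S_0=ij$ is supported on the $T$ with $|T\cap S_0|=1$, where its entry equals $-\tfrac12$. It thus remains to verify the two optimality conditions at $\phi=\phi^{\text{SII}}_2$.

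\emph{Boundary feasibility.} The rows $t\in\{0,\fnum\}$ impose only $0=0$, since $\lambda(2,0)=\lambda(2,2)=0$ and $(\mathbf{y}_2)_\fset=\nu(\fset)-\sum_i\phi^{\text{SV}}(i)=0$ by efficiency of the SV. The rows $\{m\}$ and $\fset\setminus m$ of $\mathbf{X}_2$ coincide (both equal $-\tfrac12\mathbf{1}_{m\in S}$), so in the limit a feasible $\phi$ must satisfy $-\tfrac12\sum_{l\neq m}\phi(ml)=\tfrac12\big((\mathbf{y}_2)_{\{m\}}+(\mathbf{y}_2)_{\fset\setminus m}\big)$; substituting $(\mathbf{y}_2)_T=\nu(T)-\hat\nu_1(T)$ and using efficiency again, this is the complementation identity
\begin{equation*}
\sum\nolimits_{l\neq m}\phi^{\text{SII}}(ml)=\nu(\fset)-\nu(\{m\})-\nu(\fset\setminus m)\qquad(\forall\, m\in\fset),
\end{equation*}
equivalently $r_{\{m\}}+r_{\fset\setminus m}=0$. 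I would prove this by linearity in $\nu$ and reduction to the unanimity games $u_R(T)=\mathbf{1}_{R\subseteq T}$, for which $\phi^{\text{SII}}$ is explicit and the identity collapses to the elementary cases $R=\{m\}$, $m\notin R$, and $m\in R$ with $|R|\geq2$; it also implies that $\mathbf{X}_2^T\mathbf{W}_2 r$ is independent of $\mu_\infty$, since its $\mu_\infty$-weighted part equals $-\tfrac12\mu_\infty\big[(r_{\{i\}}+r_{\fset\setminus i})+(r_{\{j\}}+r_{\fset\setminus j})\big]=0$.

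\emph{Interior stationarity.} Feasible perturbations lie in $\ker\mathbf{X}_2^{\text{bd}}$, whose orthogonal complement is spanned by the vectors $(e_m)_S=-\tfrac12\mathbf{1}_{m\in S}$, $m\in\fset$. Stationarity thus requires $\big[\mathbf{X}_2^T\mathbf{W}_2 r\big]_{ij}=-\tfrac12(c_i+c_j)$ for some constants $(c_m)$, i.e.\ that the interior sum
\begin{equation*}
\sum\nolimits_{T\subseteq\fset}^{2\leq t\leq\fnum-2,\ |T\cap\{i,j\}|=1}\binom{\fnum-4}{t-2}^{-1}\big(\nu(T)-\hat\nu_2(T)\big)
\end{equation*}
is additively separable in $i$ and $j$. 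Using linearity in $\nu$ and $\nu-\hat\nu_2=\sum_{k=3}^{\fnum}(\hat\nu_k-\hat\nu_{k-1})$ (valid by \cref{cor_d_sii}), with each difference given by \cref{prop_iterative_approx}, this reduces to the purely combinatorial claim that for every $k\geq3$ and every $S$ with $|S|=k$ the quantity $g_S(ij):=\sum_{T:\,2\leq t\leq\fnum-2,\,|T\cap ij|=1}\binom{\fnum-4}{t-2}^{-1}\lambda(k,|T\cap S|)$ depends on $i,j$ only through $n_{ij}:=\mathbf{1}_{i\in S}+\mathbf{1}_{j\in S}$ and is affine in $n_{ij}$. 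One evaluates $g_S(ij)$ by stratifying $T$ over its intersection sizes with $ij\cap S$, $ij\setminus S$, $S\setminus ij$ and $\fset\setminus(S\cup ij)$ — so the count of admissible $T$ becomes a product of binomials — and then carrying out the resulting alternating summation over $t$, in which the Bernoulli-number weights inside $\lambda$ collapse the sum to the claimed form.

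Given both conditions and strict convexity, $\phi^{\text{SII}}_2$ is the unique constrained minimiser and hence the limit, which proves the theorem. The main obstacle is the combinatorial separability identity for $g_S$: it is an explicit alternating sum over subset sizes with Bernoulli-number coefficients, and while it closes for the target order $2$, the analogous cancellation for target orders $\geq3$ — which would extend the same WLS characterisation to higher-order SII — is precisely what resists a general proof, and is therefore recorded only as \cref{conj_sii}.
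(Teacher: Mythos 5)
Your overall strategy is genuinely different from the paper's. The paper computes the limit of the hat matrix $(\mathbf{X}_2^T\mathbf{W}_2\mathbf{X}_2)^{-1}\mathbf{X}_2^T\mathbf{W}_2$ entry by entry — it writes $\mathbf{X}_2^T\mathbf{W}_2\mathbf{X}_2 = \mu_{2,1}\mathbf{J}+(\mu_{2,0}-\mu_{2,1})\mathbf{Q}+(\mu_{2,2}-\mu_{2,1})\mathbf{I}$, inverts this via an explicit lemma in the algebra generated by $\mathbf{J},\mathbf{Q},\mathbf{I}$, and checks that the limiting weights coincide with the known closed-form SII weights $\frac{(-1)^{|T\cap S|}}{\fnum-1}\binom{\fnum-2}{t-|T\cap S|}^{-1}$, separately for the finite-weight and the $\mu_\infty$-weighted rows. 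You instead verify the KKT conditions of the limiting constrained problem at $\phi^{\text{SII}}_2$. Several of your steps are correct and checkable: the full-column-rank argument, the identification of the limiting feasible set as the average of the duplicated boundary rows $\{m\}$ and $\fset\setminus m$, and the boundary identity $\sum_{l\neq m}\phi^{\text{SII}}(ml)=\nu(\fset)-\nu(\{m\})-\nu(\fset\setminus m)$, which indeed reduces by linearity to the three elementary unanimity-game cases and holds. The reformulation of stationarity as membership of $\mathbf{X}_2^T\mathbf{W}_2 r$ in the span of the vectors $(e_m)_S=-\tfrac12\mathbf{1}_{m\in S}$ is also the right condition.

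The gap is the interior stationarity step. The claim that
\begin{equation*}
g_S(ij)\;=\;\sum\nolimits_{T\subseteq\fset}^{2\leq t\leq\fnum-2,\ |T\cap ij|=1}\binom{\fnum-4}{t-2}^{-1}\lambda(k,|T\cap S|)
\end{equation*}
is affine in $n_{ij}=\mathbf{1}_{i\in S}+\mathbf{1}_{j\in S}$ for every $k\geq 3$ and every $|S|=k$ is not a side remark: it is the entire analytic content of the theorem, playing exactly the role that the explicit inverse computation plays in the paper, and ``the Bernoulli-number weights inside $\lambda$ collapse the sum'' is an announcement rather than a proof. Note also that you cannot infer this identity from the truth of the theorem without circularity, and that the stratified sum you would have to evaluate involves the weights $\lambda(k,\ell)=\sum_{r=1}^{\ell}\binom{\ell}{r}B_{k-r}$ for all $k\le\fnum$, so it is at least as delicate as the paper's order-$2$ computation; indeed the paper reports that the analogous manipulations already resist closed-form treatment for target order $k\ge 3$ (Conjecture~3.7). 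Until $g_S(ij)$ is actually evaluated and shown to be of the form $c^S_i+c^S_j$, the proposal establishes boundary feasibility of $\phi^{\text{SII}}_2$ but not that it is the minimiser, so the theorem is not yet proved. A secondary, smaller point: the ``standard perturbation argument'' for the penalty limit should at least record that the boundary constraint system is consistent (rank $\fnum\le\binom{\fnum}{2}$ for $\fnum\ge 4$), which you implicitly use.
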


While $\mu_\infty \to \infty$ corresponds to a constraint for $k=1$ that ensures efficiency, its behavior for $k=2$ is less clear.
In fact, $\mu_2(1) = \mu_2(n-1) = \mu_{\infty}$, but for two such subsets $T \in \{\ell,N \setminus \ell\}$ with $\ell \in N$ the approximation is for both the same value as $(X_2 \phi_2)_{T}= -\frac{1}{2}\sum_{ij \subseteq \fset}^{\vert T \cap ij\vert =1} \phi(ij) = -\frac 1 2 \sum_{j\in \fset}^{j \neq \ell} \phi(j\ell)$, which implies that the corresponding hard constraint of zero approximation error cannot be satisfied. 
Consequently, contrary to $k=1$, the soft constraints with $\mu_\infty$ cannot be re-written to a hard-constrained WLS problem, since it would not have a solution in general.

\subsection{Optimal Higher-Order Approximations}
For $k>2$ we were unable to find closed-form solutions and we suspect that our proof may not be suited for finding these in general.
However, we empirically validated that $\mu_k$ can also be used for higher-order approximations, which we summarize in the following conjectures.
The first one concerns the structure of the inverse in \cref{eq_WLS_solution}.
\begin{conjecture}\label{conj_inverse}
Let $\fnum\geq 2k$ and define the \emph{precision matrix} 
\begin{align*}
    \mathbf{A}_k := \lim_{\mu_\infty \to \infty} (\mathbf{X}^T_k \mathbf{W}_k \mathbf{X}_k)^{-1},
\end{align*}
Then, for $S_1,S_2 \subseteq \fset$ with $\vert S_1 \vert=\vert S_2 \vert = k$, it holds
\begin{equation*}
    (\mathbf{A}_k)_{S_1 S_2} = \frac{(-1)^{k-\vert S_1 \cap S_2\vert}}{\fnum -k +1}\binom{\fnum-k}{k-\vert S_1 \cap S_2\vert}^{-1}.
\end{equation*}
\end{conjecture}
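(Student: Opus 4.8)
The plan is to exploit the permutation symmetry of the construction. Since $\mu_k(t)$ depends only on $\vert T\vert$, the entry $(\mathbf{X}_k^T\mathbf{W}_k\mathbf{X}_k)_{S_1S_2} = \sum_{T\subseteq\fset}\mu_k(t)\,\lambda(k,\vert T\cap S_1\vert)\,\lambda(k,\vert T\cap S_2\vert)$ depends only on $j:=\vert S_1\cap S_2\vert$, so $\mathbf{X}_k^T\mathbf{W}_k\mathbf{X}_k$ lies in the Bose--Mesner algebra of the Johnson scheme $J(\fnum,k)$, spanned by the associate matrices $\mathbf{D}^{(j)}$ with $(\mathbf{D}^{(j)})_{S_1S_2} = \mathbf{1}_{\vert S_1\cap S_2\vert = j}$. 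These are simultaneously diagonalized by fixed eigenspaces $V_0,\dots,V_k$ (with $\dim V_i = \binom{\fnum}{i}-\binom{\fnum}{i-1}$), the eigenvalue of $\sum_j c_j\mathbf{D}^{(j)}$ on $V_i$ being obtained by pairing the coefficients $c_j$ with the $i$-th Eberlein polynomial. The conjectured $\mathbf{A}_k$ is also of this form, so the statement reduces to an identity between eigenvalues, a polynomial identity in $\fnum$ and $k$; the cases $k=1$ and $k=2$ are already \cref{thm_kernel_sv,thm_kernel_sii}.

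The first genuine step is to push the limit $\mu_\infty\to\infty$ through the inverse. Write $\mathbf{W}_k = \mathbf{W}_k^{\mathrm{fin}} + \mu_\infty\mathbf{W}_k^{\mathrm{ext}}$, where $\mathbf{W}_k^{\mathrm{ext}}$ is the diagonal indicator of the extreme sizes $t<k$ and $t>\fnum-k$, so $\mathbf{X}_k^T\mathbf{W}_k\mathbf{X}_k = \mathbf{G}_k^{\mathrm{fin}} + \mu_\infty\mathbf{C}_k$ with both summands in the Johnson algebra. Block elimination with respect to $\mathcal K := \ker\mathbf{C}_k$ shows $\mathbf{A}_k = \Pi_{\mathcal K}\,(\Pi_{\mathcal K}\mathbf{G}_k^{\mathrm{fin}}\Pi_{\mathcal K})^{-1}\Pi_{\mathcal K}$, i.e.\ $\mathbf{A}_k$ is supported on $\mathcal K$ and inverts $\mathbf{G}_k^{\mathrm{fin}}$ there. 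Since $\mathbf{C}_k$ is in the algebra, $\mathcal K$ is a sum of some of the $V_i$; identifying which ones requires computing the eigenvalues of $\mathbf{C}_k$, where the vanishing $\lambda(k,0)=0$ and $\lambda(k,k)=0$ (\cref{cor_nuhat}) eliminates many strata of extreme subsets and should make $\mathcal K$ large (for $k=1$, $\mathcal K=V_0$, the efficiency eigenspace).

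With $\mathcal K$ identified, I would compute, on each $V_i\subseteq\mathcal K$, the eigenvalue of $\mathbf{G}_k^{\mathrm{fin}}$ — a sum over $t\in[k,\fnum-k]$ of $\mu_k(t)$ times the $\mathbf{D}^{(j)}$-coefficients $\sum_{T:\vert T\vert = t}\lambda(k,\vert T\cap S_1\vert)\lambda(k,\vert T\cap S_2\vert)$ paired with the $i$-th Eberlein polynomial — invert it, reassemble $\mathbf{A}_k = \sum_{i:V_i\subseteq\mathcal K}(\text{eigenvalue})^{-1}\Pi_{V_i}$, and expand the projections $\Pi_{V_i}$ back in the $\mathbf{D}^{(j)}$-basis; the resulting alternating sum over $i$ must collapse to $\frac{(-1)^{k-j}}{\fnum-k+1}\binom{\fnum-k}{k-j}^{-1}$. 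Equivalently, and perhaps more cleanly, one can take the claimed matrix as an ansatz in the algebra and verify the two finite identities $\mathbf{A}_k\mathbf{C}_k = \mathbf{0}$ (so $\mathbf{A}_k$ is supported on $\mathcal K$) and $\Pi_{\mathcal K}(\mathbf{G}_k^{\mathrm{fin}}\mathbf{A}_k-\mathbf{I})\Pi_{\mathcal K} = \mathbf{0}$, both of which become explicit combinatorial identities once the entries of $\mathbf{C}_k$ and $\mathbf{G}_k^{\mathrm{fin}}$ are written out.

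The main obstacle is twofold. First, pinning down $\mathcal K$ — proving that exactly the right eigenspaces of $\mathbf{C}_k$ blow up while the complement survives — has no evident shortcut past a direct eigenvalue computation, and unlike the one-dimensional efficiency constraint at $k=1$, the extreme subsets of many different sizes mix nontrivially through $\lambda$. Second, $\lambda(k,\ell) = \sum_{r=1}^{\ell}\binom{\ell}{r}B_{k-r}$ carries Bernoulli numbers, so the entries of $\mathbf{G}_k^{\mathrm{fin}}$ and $\mathbf{C}_k$ are sums of products of such $\lambda$'s against binomial weights, and reducing these to the clean binomial-reciprocal answer requires Bernoulli summation identities — of the kind that already drive the proof of \cref{thm_kernel_sii} — that do not telescope transparently. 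This is presumably why closed forms have so far resisted for $k\geq 3$.
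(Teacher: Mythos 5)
First, a point of calibration: the paper does not prove this statement --- it is stated as \cref{conj_inverse} precisely because the authors could not establish it for $k\geq 3$; they prove only the $k=1$ and $k=2$ instances (inside the proofs of \cref{thm_kernel_sv} and \cref{thm_kernel_sii}, via an explicit ansatz for the inverse in the span of $\mathbf{J},\mathbf{Q},\mathbf{I}$, i.e.\ Lemmas A.1 and A.2) and otherwise validate the formula numerically for $\fnum\leq 11$. Your framework is the natural generalization of exactly that ansatz: the paper's $\{\mathbf{J},\mathbf{I}\}$ and $\{\mathbf{J},\mathbf{Q},\mathbf{I}\}$ computations are the $k=1,2$ cases of working in the Bose--Mesner algebra of $J(\fnum,k)$, and your observation that the Gram matrix, the conjectured $\mathbf{A}_k$, and the singular part $\mathbf{C}_k$ all lie in this commutative algebra (hence are simultaneously diagonalized by the fixed eigenspaces $V_0,\dots,V_k$) is correct and is a genuinely better-organized attack than brute-force inversion, since it replaces a $(k+1)\times(k+1)$ nonlinear system by $k+1$ decoupled scalar eigenvalue identities. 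The reduction $\mathbf{A}_k=\Pi_{\mathcal K}(\Pi_{\mathcal K}\mathbf{G}_k^{\mathrm{fin}}\Pi_{\mathcal K})^{-1}\Pi_{\mathcal K}$ with $\mathcal K=\ker\mathbf{C}_k$ is also sound (and consistent with the conjectured matrix being singular: its row sums vanish for $k=1,2$).

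However, this is a proof plan, not a proof, and the two steps you yourself flag as open are precisely the content of the conjecture. (i) You never determine $\ker\mathbf{C}_k$ for general $k$; note that already your sanity check is off --- for $k=1$ one has $\mathbf{C}_1=\mathbf{J}$ (only $T=\fset$ contributes, since $\lambda(1,0)=0$), so $\mathcal K=V_1$, the orthogonal complement of the all-ones vector, not $V_0$ as you write; for $k=2$ one computes $(\mathbf{C}_2)_{S_1S_2}=\tfrac12\vert S_1\cap S_2\vert$, whence $\mathcal K=V_2$. The pattern $\mathcal K=V_k$ is plausible but would itself need proof, and for $k\geq 3$ the extreme strata $t\in\{0,\dots,k-1\}\cup\{\fnum-k+1,\dots,\fnum\}$ contribute with non-vanishing Bernoulli weights $\lambda(k,\ell)$ for $0<\ell<k$, so $\mathbf{C}_k$ is no longer a single rank-structured matrix. (ii) The eigenvalues of $\mathbf{G}_k^{\mathrm{fin}}$ on $\mathcal K$ involve sums of products of $\lambda(k,\cdot)$ (Bernoulli numbers) against intersection counts and Eberlein polynomials, and no identity reducing these to $\frac{(-1)^{k-j}}{\fnum-k+1}\binom{\fnum-k}{k-j}^{-1}$ is supplied. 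Until both are closed, the proposal establishes nothing beyond the $k\leq 2$ cases already proved in the paper; what it does offer is a concrete and, in my view, promising reformulation of the open problem as a family of scalar identities in the Johnson-scheme eigenbasis.
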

Referring to $\mathbf{A}_k$ as the precision matrix is due to the interpretation of $\mathbf{A}_1^{-1}$ as the \emph{covariance matrix} for the SV by \citet{Covert.2021}.
The second conjecture poses a higher-order representation of SII.
In fact, if Conjecture~\ref{conj_inverse} holds, then the subsets $T \subseteq \fset$ with finite weights, i.e.\ $k \leq \vert T \vert \leq \fnum-k$ yield the correct weights for SII.

\begin{conjecture}\label{conj_sii}
Let $\fnum\geq 2k$ and $\mathcal T_k := \{T \subseteq \fset : k \leq \vert T \vert \leq \fnum-k\}$, which splits for $T \subseteq \fset$ the vector $\mathbf{y}_k$ into
\begin{align*}
    &(\mathbf{y}^{+}_k)_T := \mathbf{1}_{T \in \mathcal T_k} \cdot \mathbf{y}_k(T),    &(\mathbf{y}^{-}_k)_T := \mathbf{1}_{T \notin \mathcal T_k} \cdot \mathbf{y}_k(T).
\end{align*}
Let further for $T \subseteq \fset$ and interaction $S \subseteq \fset$ with $\vert S \vert = k$
\begin{align*}
    (\mathbf{Q}_k)_{ST} := \begin{cases}
        (-1)^{s-\vert S\cap T\vert} m_s(t-\vert S \cap T \vert), &\text{if } T \notin \mathcal{T}_k,
        \\
        0, &\text{if } T \in \mathcal{T}_k,
    \end{cases}
\end{align*}
where $m_s(t) := \frac{(\fnum-s-t)!\cdot t!}{(\fnum-s+1)!}$ from \cref{def_sii}. Then,
\begin{equation*}
       \phi^{\text{SII}}_k  = \mathbf{Q}_k \mathbf{y}^{-}_k + \lim_{\mu_\infty \to \infty} \argmin_{\phi_k \in \mathbb{R}^{\binom{\fnum}{k}}} \left\Vert \sqrt{\mathbf{W}_k} \left(\mathbf{y}^+_k - \mathbf{X}_k \phi_k\right)\right\Vert_2^2.
\end{equation*}
\end{conjecture}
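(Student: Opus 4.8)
The plan is to peel away the linear algebra until only combinatorial identities remain. First, observe that $\mathbf{Q}_k$ is precisely the SII operator of \cref{def_sii} restricted to order-$k$ interactions: expanding the discrete derivative $\Delta_S$ gives, for $\vert S\vert=k$, the identity $\phi^{\text{SII}}(S)=\sum_{U\subseteq\fset}\nu(U)(-1)^{s-\vert S\cap U\vert}m_s(\vert U\setminus S\vert)$, whose right-hand side is the $S$-entry of $\mathbf{Q}_k$ applied to the value vector $(\nu(T))_T$; by linearity this holds for every game. Applying $\mathbf{Q}_k$ to $(\hat\nu_{k-1}(T))_T$ therefore computes the order-$k$ SII of $\hat\nu_{k-1}$, which is $(k-1)$-additive (its Möbius transform is supported on sets of size $<k$) and hence has vanishing order-$k$ SII since $\Delta_S\equiv0$ there. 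As $\mathbf{y}_k=(\nu(T)-\hat\nu_{k-1}(T))_T$, this yields $\phi^{\text{SII}}_k=\mathbf{Q}_k\mathbf{y}_k=\mathbf{Q}_k\mathbf{y}^{+}_k+\mathbf{Q}_k\mathbf{y}^{-}_k$, so \cref{conj_sii} is \emph{equivalent} to the single identity $\phi^{*}=\mathbf{Q}_k\mathbf{y}^{+}_k$, where $\phi^{*}$ denotes the limiting WLS solution on $\mathbf{y}^{+}_k$; moreover, since $\mathbf{y}_k=\mathbf{X}_k\phi^{\text{SII}}_k$ by \cref{prop_iterative_approx}, we obtain the useful identity $\mathbf{X}_k\mathbf{Q}_k\mathbf{y}_k=\mathbf{X}_k\phi^{\text{SII}}_k=\mathbf{y}_k$.

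Second, I would isolate the role of $\mu_\infty$. Because $\mathbf{y}^{+}_k$ is supported on $\mathcal T_k$, where $\mathbf{W}_k$ carries only the finite weights $\mu_k(t)$, the product $\mathbf{X}^{T}_k\mathbf{W}_k\mathbf{y}^{+}_k$ does not involve $\mu_\infty$, so the only $\mu_\infty$-dependence of $\phi^{*}$ is through $(\mathbf{X}^{T}_k\mathbf{W}_k\mathbf{X}_k)^{-1}\to\mathbf{A}_k$ of \cref{conj_inverse}. Granting that conjecture, $\phi^{*}=\mathbf{Q}_k\mathbf{y}^{+}_k$ reduces, entry by entry for $\vert S\vert=k$ and $T\in\mathcal T_k$, to
\[
\mu_k(t)\sum\nolimits_{\vert S'\vert=k}(\mathbf{A}_k)_{SS'}\,\lambda\!\left(k,\vert T\cap S'\vert\right)=(-1)^{k-\vert S\cap T\vert}m_k\!\left(t-\vert S\cap T\vert\right).
\]
I would attack this by stratifying $S'$ according to the sizes $p,q,r$ of $S'\cap(S\cap T)$, $S'\cap(S\setminus T)$, $S'\cap(T\setminus S)$: there are $\binom{\vert S\cap T\vert}{p}\binom{k-\vert S\cap T\vert}{q}\binom{t-\vert S\cap T\vert}{r}\binom{\fnum-k-t+\vert S\cap T\vert}{k-p-q-r}$ such $S'$, each carrying weight $(-1)^{k-p-q}(\fnum-k+1)^{-1}\binom{\fnum-k}{k-p-q}^{-1}\lambda(k,p+r)$, after which the inner convolutions ought to collapse by Vandermonde-type identities together with $\lambda(k,\ell)=\sum_{r=1}^{\ell}\binom{\ell}{r}B_{k-r}$ and standard Bernoulli-number relations; a first sanity check is that $k=1,2$ recover \cref{thm_kernel_sv} and \cref{thm_kernel_sii}. \cref{conj_inverse} itself I would prove by guess-and-verify: show the claimed $\mathbf{A}_k$ annihilates every row $\mathbf{x}_T$ of $\mathbf{X}_k$ with $T\notin\mathcal T_k$, so that $\mathbf{A}_k\mathbf{X}^{T}_k\mathbf{W}_k\mathbf{X}_k$ is $\mu_\infty$-free and should equal the orthogonal projector onto $\operatorname{range}(\mathbf{A}_k)$, and then identify $\mathbf{A}_k$ with $\lim_{\mu_\infty\to\infty}(\mathbf{X}^{T}_k\mathbf{W}_k\mathbf{X}_k)^{-1}$ by a Schur-complement/perturbation argument — again reducing to combinatorial identities of the same flavour.

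A structurally different route avoids $\mathbf{A}_k$ altogether by checking the Karush--Kuhn--Tucker conditions of the hard-constraint limit directly. As $\mu_\infty\to\infty$, the WLS problem on $\mathbf{y}^{+}_k$ becomes the minimization of $\Vert\sqrt{\mathbf{W}^{\mathrm{fin}}_k}(\mathbf{y}^{+}_k-\mathbf{X}_k\phi)\Vert_2^2$ subject to $(\mathbf{X}_k\phi)_T=0$ for all $T\notin\mathcal T_k$, with $\mathbf{W}^{\mathrm{fin}}_k$ keeping the finite weights and zeroing the rest. Using $\mathbf{X}_k\mathbf{Q}_k\mathbf{y}_k=\mathbf{y}_k$, the residual at $\phi=\mathbf{Q}_k\mathbf{y}^{+}_k$ equals $\mathbf{X}_k\mathbf{Q}_k\mathbf{y}^{-}_k-\mathbf{y}^{-}_k$, so feasibility rewrites as $(\mathbf{X}_k\mathbf{Q}_k)_{TU}=0$ for all $T\notin\mathcal T_k$ and $U\in\mathcal T_k$ (a one-line $m_1$-telescoping when $k=1$), while optimality becomes the statement that this residual, restricted to $\mathcal T_k$, is $\mathbf{W}^{\mathrm{fin}}_k$-orthogonal to the feasible cone, i.e.\ lies in the span of the active constraint functionals — an identity about the ``tail game'' $\mathbf{y}^{-}_k$ rather than about bare binomial sums, which for $k=1$ is exactly the efficiency constraint.

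In either route the main obstacle is the same family of multi-index summation identities: sums of products of up to four binomial coefficients indexed by a four-way split of $\fset$ relative to $S$ and $T$, weighted through $\lambda(k,\cdot)$ by Bernoulli numbers and by factors $\binom{\fnum-k}{k-\cdot}^{-1}$. For $k=2$ there are only a handful of strata and the sums telescope by hand (the substance of \cref{thm_kernel_sii}), but for general $k$ the proliferation of strata together with the alternating signs of the $B_{k-r}$ appears to block a uniform closed-form evaluation — precisely the difficulty flagged in the text. The two most promising ways past it are (a) an induction on $k$ exploiting the telescoping $\mathbf{y}_{k+1}=\mathbf{y}_k-\mathbf{X}_k\phi^{\text{SII}}_k$ from \cref{prop_iterative_approx}, provided one can relate $\mathbf{X}^{T}_{k+1}\mathbf{W}_{k+1}\mathbf{X}_{k+1}$ to lower orders, and (b) a generating-function encoding of the columns of $\mathbf{X}_k$, $\mathbf{Q}_k$ and of the weights $\lambda(k,\cdot)$ that converts the strata sums into coefficient extractions.
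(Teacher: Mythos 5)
The first thing to say is that the paper does not prove this statement at all: it is explicitly a conjecture, and the authors state that their proof technique for \cref{thm_kernel_sii} does not appear to extend beyond $k=2$; the only support offered is numerical validation (random SOUMs and direct matrix computation for $\fnum\leq 11$, cf.\ \cref{sec_appendix_additional_results}). Your proposal is therefore not competing with an existing argument, and judged on its own terms it is a proof \emph{plan}, not a proof. Its genuinely valuable part is the opening reduction: since $(\mathbf{Q}_k)_{ST}$ is exactly the order-$k$ SII weighting of \citet{Fumagalli.2023} and $\hat\nu_{k-1}$ is a $(k-1)$-additive game whose order-$k$ discrete derivatives vanish, you correctly get $\mathbf{Q}_k\mathbf{y}_k=\phi^{\text{SII}}_k$, so the conjecture becomes the single claim that the limiting WLS solution on $\mathbf{y}^{+}_k$ equals $\mathbf{Q}_k\mathbf{y}^{+}_k$; and since $\mathbf{y}^{+}_k$ is supported on $\mathcal T_k$, the $\mu_\infty$-dependence is confined to the inverse, so granting \cref{conj_inverse} everything reduces to the entrywise identity you display. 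That reduction is sound and consistent with the paper's remark that $\mathbf{Q}_k$ supplies the correct SII weights for $T\notin\mathcal T_k$. But the proposal then stops precisely at the wall the authors flag: the four-way stratified sums of binomial coefficients weighted by $\lambda(k,\cdot)$ and the entries of $\mathbf{A}_k$ are asserted to ``collapse by Vandermonde-type identities'' without any of them being carried out, and \cref{conj_inverse} is itself left as a guess-and-verify step. Nothing beyond $k=2$ is actually established.

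There is also one concrete error. You read the paper's display $\mathbf{y}_k=\mathbf{X}_k\phi^{\text{SII}}_k$ as an exact identity and deduce $\mathbf{X}_k\mathbf{Q}_k\mathbf{y}_k=\mathbf{y}_k$. By \cref{prop_iterative_approx}, $\mathbf{X}_k\phi^{\text{SII}}_k$ equals $\hat\nu_k-\hat\nu_{k-1}$, whereas $(\mathbf{y}_k)_T=\nu(T)-\hat\nu_{k-1}(T)$; these differ by the residual $\nu-\hat\nu_k$, which is nonzero for $k<\fnum$ (the display is the regression model being fitted, not an identity). Indeed $\mathbf{X}_k\mathbf{Q}_k$ has rank at most $\binom{\fnum}{k}<2^{\fnum}$ and cannot act as the identity on $\mathbb{R}^{2^\fnum}$. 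This invalidates the rewriting of the residual in your second (KKT) route, whose feasibility and orthogonality conditions would have to be re-derived without that identity; the first route is unaffected but remains incomplete for the reasons above.
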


\cref{conj_sii} states that higher-order SII are again represented as the solution to a WLS problem.
However, in contrast to $k\leq 2$, the representation does only hold partially for $T \in \mathcal{T}_k$.
For $T \notin \mathcal T_k$, $\mathbf{Q}_k$ contains all SII weights \cite{Fumagalli.2023}, which are required for the correct weighting.
This is necessary as, in contrast to $k\leq 2$, the weights do not converge to the SII weights for $\mu_\infty \to \infty$ and $T \notin \mathcal T_k$, which requires to adjust the representation using $\mathbf{y}^+_k$ and $\mathbf{y}^-_k$.
Presumably, different $\mu_\infty$ could resolve this issue, which we leave to future research.

\subsection{KernelSHAP-IQ for Shapley Interactions}
We introduce two variants of KernelSHAP-IQ for SII based on distinct optimization problems.
Inconsistent KernelSHAP-IQ solves a single WLS problem and does not converge to SII.
In contrast, (consistent) KernelSHAP-IQ solves iteratively a WLS problem for every order and converges to SII.
Similar to KernelSHAP, we solve an \emph{approximated objective} by sampling subsets from a distribution $p^*(T) \propto q(t)$ with \emph{sampling weights} $q(t) \geq 0$ for $t=0,\dots,\fnum$.
We re-write \cref{eq_WLS} as
\begin{equation*}
    \phi^* = \argmin_{\phi_k \in \mathbb{R}^{\binom{\fnum}{k}}}\mathbb{E}_{T \sim p^*}\left[\left\Vert \sqrt{\mathbf{W}^*_k}  \left(\mathbf{y}_k - \mathbf{X}_k  \phi_k\right)\right\Vert_2^2\right],
\end{equation*}
where we adjusted $(\mathbf{W}^*_k)_{TT} := (\mathbf{W}_k)_{TT}/p^*(T)$.
According to our previous results, the right-hand side has to be solved for $\mu_\infty \to \infty$, and we thus set $\mu_\infty \gg 1$ to account for the limit.
KernelSHAP-IQ estimates are computed by solving the \emph{approximated} WLS objective with $\hat{\mathbf{W}}_k,\hat{\mathbf{X}}_k$ and $\hat{\mathbf{y}}_k$, containing sampled subsets $T \subseteq \fset$ at each row.
More details and pseudocode can be found in \cref{appx_sec_algos}.

\textbf{Inconsistent KernelSHAP-IQ.}
Our baseline method, \emph{inconsistent KernelSHAP}, solves a single WLS problem using the KernelSHAP weights.
We include all interactions up to order $k$ via the weighting in Proposition~\ref{prop_iterative_approx}, i.e.\ in total $\sum_{\ell=1}^k \binom{\fnum}{\ell}$ interactions.
More formally, introducing the stacked matrices $\mathbf{X}_{\leq k} := (\mathbf{X}_1,\dots,\mathbf{X}_k)$ and $\phi_{\leq k} := [\phi_1,\dots,\phi_k]^T$, this approach solves
    \begin{align}\label{eq_inconsistent}
       \phi^* = \lim_{\mu_\infty \to \infty} \argmin_{\phi_{\leq k}} \left\Vert \sqrt{\mathbf{W}_1} \left(\mathbf{y}_1 - \mathbf{X}_{\leq k} \phi_{\leq k}\right)\right\Vert_2^2.
    \end{align}
Inconsistent KernelSHAP solves the WLS problem with the approximated objective to obtain estimates for SII.

\begin{remark}
Interestingly, we observe empirically that solving \cref{eq_inconsistent} yields the exact SVs.
\end{remark}

\begin{remark}
Inconsistent KernelSHAP-IQ is related, but not equivalent, to $k_\text{ADD}$-SHAP \cite{Pelegrina.2023} and FSI.
$k_\text{ADD}$-SHAP includes $\phi^{\text{SII}}(\emptyset)$ and a sum starting from zero in $\lambda$ in $\mathbf{X}_{\leq k}$.
FSI uses binary weights $\mathbf{X}_{\leq k}$, and effectively solves for $k$-SII $\Phi_k$ instead of SII $\phi^{\text{SII}}_{\leq k}$.
\end{remark}

While higher-order estimates do not converge to SII, inconsistent KernelSHAP-IQ surprisingly yields high-quality estimations in low-budget settings.
Utilizing our novel representations we formalize (consistent) KernelSHAP-IQ, which yields SOTA approximation \emph{and} converges to SII.

\begin{algorithm}[t]
    \caption{KernelSHAP-IQ}
    \label{alg_kernelshap_iq}
    \begin{algorithmic}[1]
    \REQUIRE order $k$, sampling weights $q$, budget $b$.
    \STATE $\{T_i\}_{i=1,\dots,b}, \{w_{T}\}_{T=T_1,\dots,T_b} \gets \textsc{\texttt{Sample}}(q,b)$ 
    \STATE $\hat{\mathbf{y}}_1 \gets [\nu(T_1),\dots,\nu(T_b)]^T$
    \FOR[iterative approximation]{$\ell=1,\dots,k$}
        \FOR{$T \in \{T_i\}$ and $\vert S \vert = \ell$}
            \STATE $(\hat{\mathbf{X}}_{\ell})_{TS} \gets \lambda(\vert S \vert,\vert T \cap S \vert)$ \COMMENT{Bernoulli weighting}
            \STATE $(\hat{\mathbf{W}}^*_{\ell})_{TT} \gets \mu_{\ell}(t) \cdot w_{T}$ 
        \COMMENT{weight adjustment}
        \ENDFOR
        \IF[higher order split]{$\ell \leq 2$}
            \STATE $\hat{\phi}_\ell \gets \textsc{\texttt{SolveWLS}}(\hat{\mathbf{X}}_\ell,\hat{\mathbf{y}}_\ell,\hat{\mathbf{W}}^*_\ell)$
        \ELSE
        \FOR{$T \in \{T_i\}$ and $\vert S \vert = \ell$} 
        \STATE $(\mathbf{Q}_\ell)_{ST} \gets \mathbf{1}_{T \notin \mathcal T_\ell}\cdot \textsc{\texttt{SIIWeight}}(T,S)$
        \STATE $(\hat{\mathbf{y}}^+_\ell)_T \gets \mathbf{1}_{T \in \mathcal T_\ell}(\hat{\mathbf{y}}_\ell)_T$
        \STATE $(\hat{\mathbf{y}}^-_\ell)_T \gets \mathbf{1}_{T \notin \mathcal T_\ell}(\hat{\mathbf{y}}_\ell)_T$
        \ENDFOR
        \STATE $\hat{\phi}_\ell \gets \mathbf{Q}_\ell \hat{\mathbf{y}}^-_\ell +  \textsc{\texttt{SolveWLS}}(\hat{\mathbf{X}}_\ell,\hat{\mathbf{y}}^+_\ell,\hat{\mathbf{W}}^*_\ell)$
        \ENDIF
        \STATE $\hat{\mathbf{y}}_{\ell+1} \gets \hat{\mathbf{y}}_{\ell} - \hat{\mathbf{X}}_\ell \cdot \hat\phi_{\ell}$ \COMMENT{compute residuals}
    \ENDFOR
    \STATE $\hat\Phi_k \gets \textsc{\texttt{AggregateSII}}(\hat\phi_1,\dots,\hat\phi_k)$ \COMMENT{compute $k$-SII} 
    \STATE \textbf{return} $k$-SII estimates $\hat\Phi_k$, SII estimates $\hat{\phi}_{1},\dots,\hat\phi_k$
    \end{algorithmic}
\end{algorithm}

\textbf{KernelSHAP-IQ.}
We now introduce KernelSHAP-IQ based on the novel representation in \cref{thm_kernel_sii}, which ensures that KernelSHAP-IQ converges to SII for $k=2$.
Further, with Conjecture~\ref{conj_sii}, KernelSHAP-IQ can be extended to $k>2$, where we empirically validate its convergence.
KernelSHAP-IQ once samples subsets given the sampling weights $q$, where the weights $w$ account for the sampling probabilities and the number of Monte Carlo samples (line~1).
As a default, we propose the KernelSHAP weights for sampling, i.e $p(T) \propto \mu_1(t)$, and apply the \emph{border-trick} \cite{Fumagalli.2023}.
For further details regarding the sampling and the weights $w$, we refer to \cref{appx_sec_sampling}.
The game is then evaluated on all sampled subsets, stored in $\hat{\mathbf{y}}_1$, which determines the computational complexity (line~2).
Then, $\hat\nu_\ell$ is iteratively constructed for $\ell=1,\dots,k$ by computing the SII estimates (line~3-19).
Starting from $\ell=1$, where KernelSHAP-IQ reduces to KernelSHAP, $\hat\phi_\ell$ is estimated and the residual $\hat{\mathbf{y}}_{\ell+1} := \hat{\mathbf{y}}_{\ell} - \mathbf{X}_\ell \cdot \hat\phi_\ell$ is computed for the next iteration (line~18).
We repeat this process until $\ell=k$, summarized in \cref{alg_kernelshap_iq}.
At each step, we set the matrices for the WLS problem (line~4-7).
The coefficient matrix $\hat{\mathbf{X}}_\ell$ is set for every sampled subset and every interaction of the current order $\ell$ (line~5).
The diagonal weight matrix $\hat{\mathbf{W}}^*_\ell$ is set for every sampled subset, and contains $\mu_\ell$ adjusted by $w_T$, which accounts for the sampling probabilities $p^*(T)$ and the number of Monte Carlo samples (line~6).
If $\ell \leq 2$, then the WLS problem is solved directly with $\hat{\mathbf{X}}_\ell,\hat{\mathbf{y}}_\ell$, and $\hat{\mathbf{W}}^*_\ell$ (line~9).
If $\ell > 2$, then Conjecture~\ref{conj_sii} applies, and the WLS problem is split by $\mathcal{T}_\ell$ (line~11-15).
For every sampled subset and every interaction of the current order $\ell$, the SII weights are assigned to $\mathbf{Q}_{\ell}$ (line~12), and the residuals $\hat{\mathbf{y}}_{\ell}$ are split by $\mathcal{T}_k$ into $\hat{\mathbf{y}}^+_{\ell}$ (line~13) and $\hat{\mathbf{y}}^-_{\ell}$ (line~14).
The SII estimates $\hat\phi_\ell$ are then computed with the SII weights $\mathbf{Q}_\ell$ for $\hat{\mathbf{y}}_\ell^-$ and by solving the WLS problem with $\hat{\mathbf{X}}_\ell,\hat{\mathbf{y}}^+_\ell$ and $\hat{\mathbf{W}}^*_\ell$ (line~16). 
After reaching $\ell=k$, the SII estimates are aggregated via the non-recursive formula (cf.  \cref{appx_sec_ksii_weights}) to $k$-SII values, and used for final interpretation (line 20-21).

\section{Experiments}
\label{sec_experiments}

We conduct multiple experiments to compare KernelSHAP-IQ with existing baseline methods for estimating SII and $k$-SII values.
For each method, we assess estimation quality with \emph{mean-squared error} (MSE; lower is better) and \emph{precision at ten} (Prec@10; higher is better) compared to ground truth (GT) SII and $k$-SII.
We compute confidence bands with the standard error of the mean (SEM).
Prec@10 measures the accuracy of correctly identifying the ten highest interaction scores in terms of absolute values.
The GT values are calculated once via brute force by evaluating $2^{n}$ coalitions for each game of $n$ players (i.e.\ features).

\textbf{Baselines.}
We compare KernelSHAP-IQ and inconsistent KernelSHAP-IQ with all available baseline algorithms; permutation sampling \cite{Castro.2009,Tsai.2022}, SHAP-IQ \cite{Fumagalli.2023}, and SVARM-IQ \cite{Kolpaczki.2024b}, as shown in \cref{tab_approx_landscape}.

\begin{table}[t]
    \caption{Summary of the benchmark datasets and models used.}\label{tab_setup}
    \vspace{0.5em}
    \adjustbox{max width=\columnwidth}{%
    \begin{tabular}{ccccc}
    \toprule
    \textbf{ID} & \textbf{Model} & \textbf{Removal Strategy} & $\fnum$ & $\mathcal{Y}$ \\ \midrule
    \rowcolor{tablegray}\emph{SOUM} & Synthetic & -- & 20, 40 & $[0, 1]$  \\
    \emph{LM} & DistilBert & Token Removal & 14 & $[-1, 1]$  \\
    \rowcolor{tablegray}\emph{CH} & Neural Net & Mean & 8 & $\mathbb{R}$ \\
    \emph{BR} & XGBoost & Mean/Mode & 12 & $\mathbb{R}$ \\
    \rowcolor{tablegray}\emph{ViT} & ViT-32-384 & Token Removal & 16 & $[0, 1]$ \\
    \emph{CNN} & ResNet18 & Superpixel & 14 & $[0, 1]$ \\
    \rowcolor{tablegray}\emph{AC} & RF & Mean/Mode & 14 & $[0, 1]$ \\
    \bottomrule
    \end{tabular}}
\end{table}

\begin{figure*}[t]
    \centering
    \includegraphics[width=0.75\textwidth]{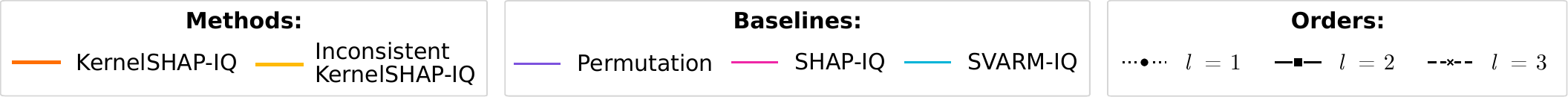}
    \\
    \begin{minipage}[c]{0.24\textwidth}
    \includegraphics[width=\textwidth]{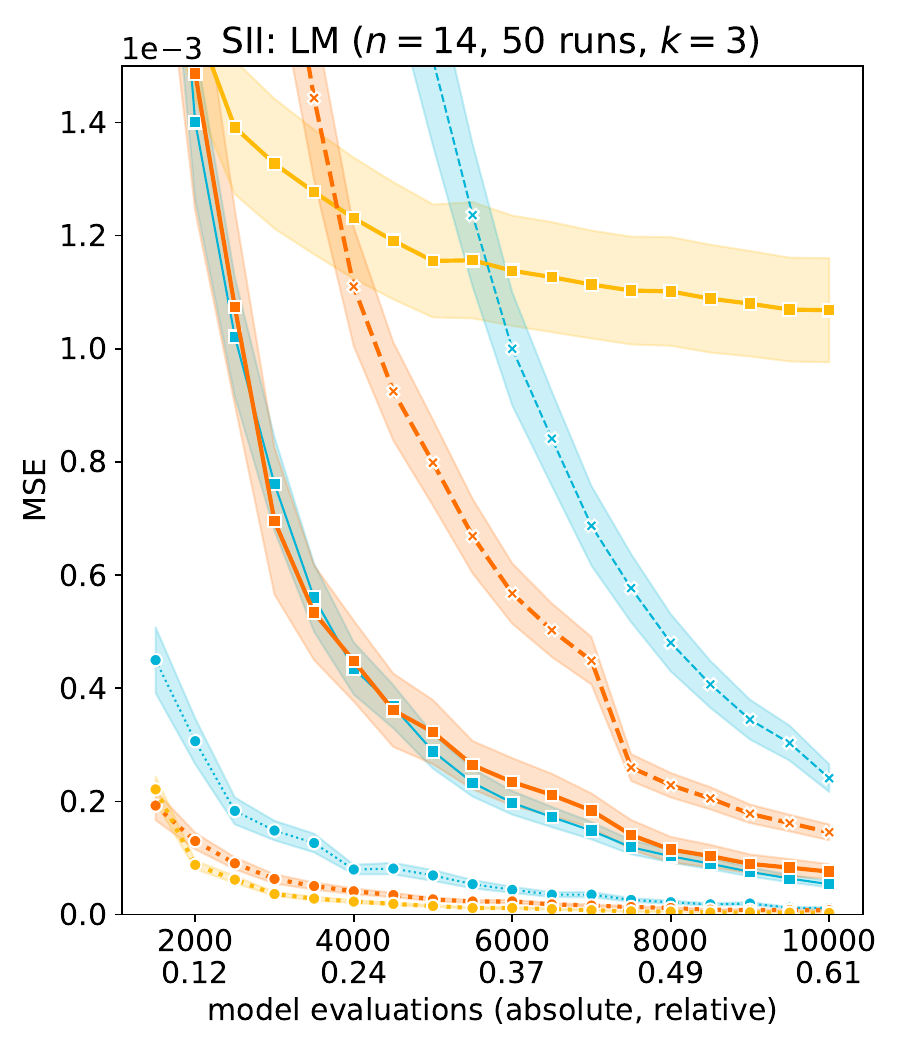}
    \end{minipage}
    \hfill
    \begin{minipage}[c]{0.24\textwidth}
    \includegraphics[width=\textwidth]{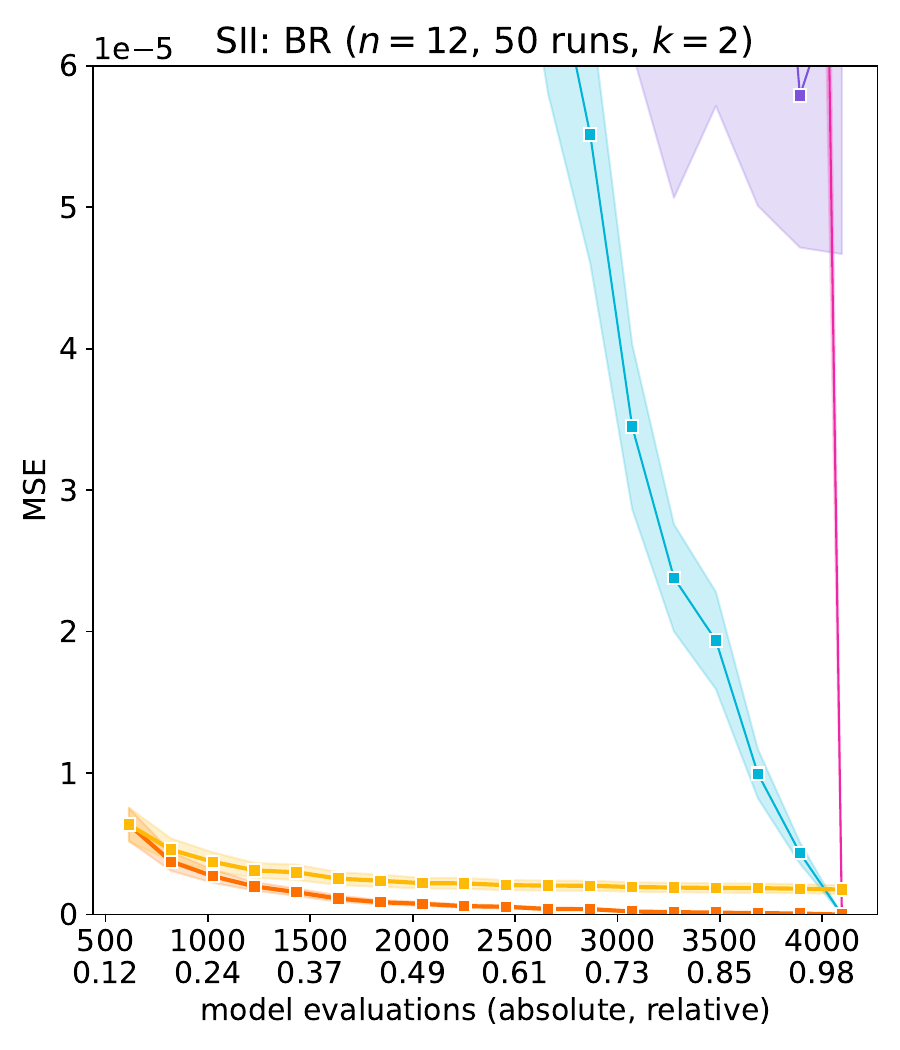}
    \end{minipage}
    \hfill
    \begin{minipage}[c]{0.24\textwidth}
    \includegraphics[width=\textwidth]{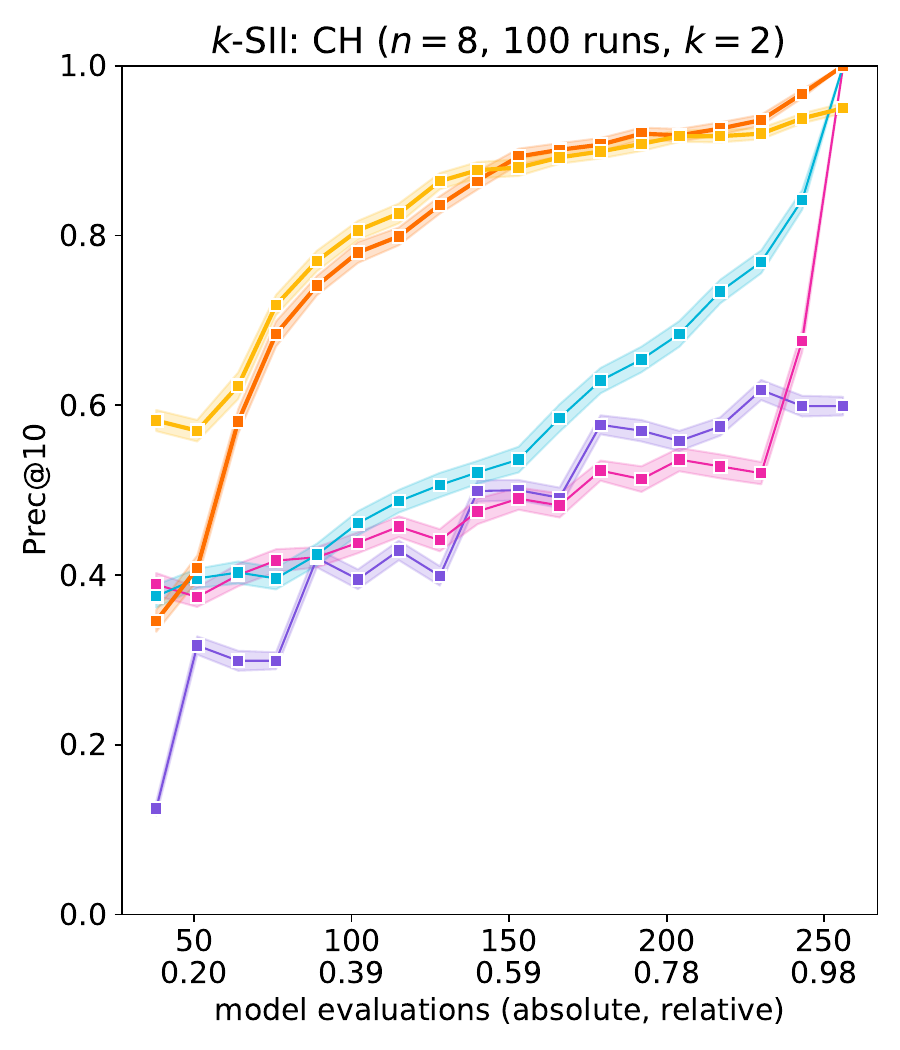}
    \end{minipage}
    \hfill
    \begin{minipage}[c]{0.24\textwidth}
    \includegraphics[width=\textwidth]{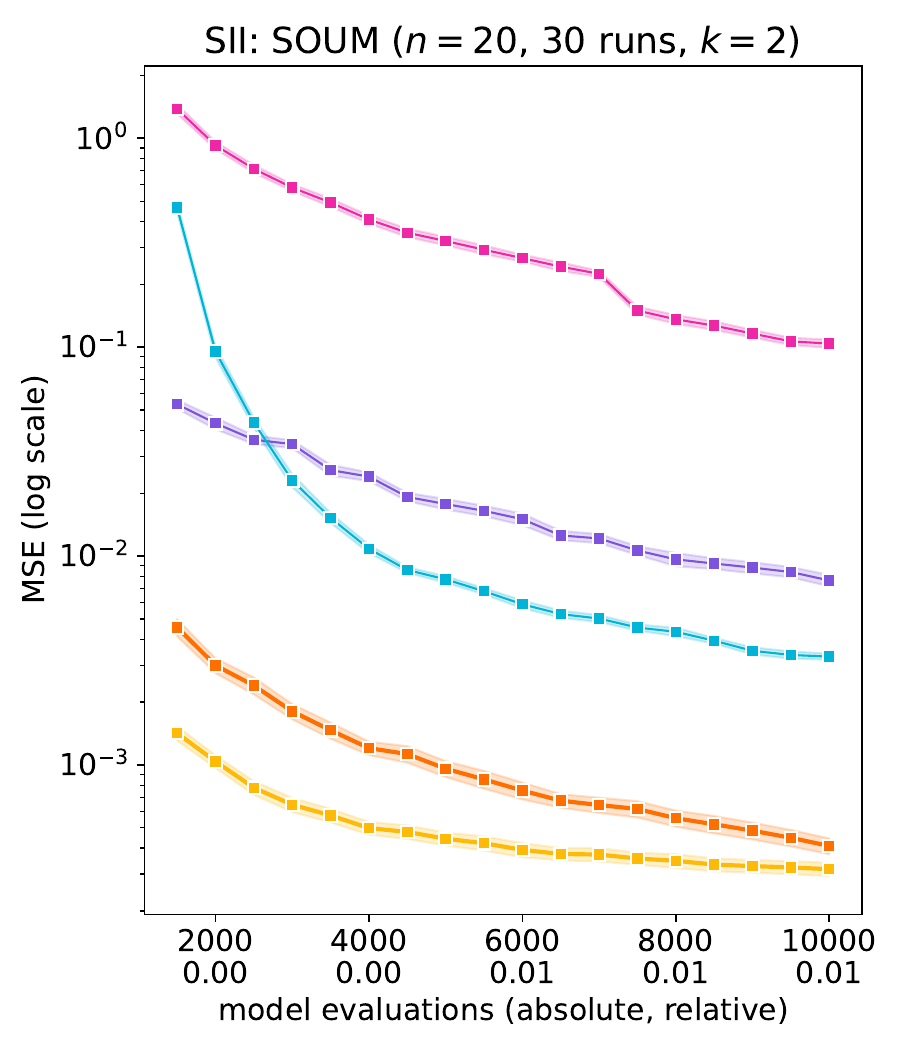}
    \end{minipage}
    \caption{Approximation quality of KernelSHAP-IQ (\textcolor{kernelshapiq}{orange}) and inconsistent KernelSHAP-IQ (\textcolor{biaskernelshapiq}{yellow}) compared to the permutation sampling (\textcolor{permutation}{purple}), SHAP-IQ (\textcolor{shapiq}{pink}) and SVARM-IQ (\textcolor{svarmiq}{blue}) baselines for estimating SII values for the LM (left; $\fnum=14,l\in\{1,2,3\}$) the \emph{bike rental} dataset (center left, $\fnum=12,l=2$), the \emph{california housing} dataset (center right; $\fnum=8,l=2$), and the SOUM (right; $\fnum=20,l=2$). The shaded bands represent the standard error of the mean (SEM).}
    \label{fig_approx_quality}
\end{figure*}

\textbf{Benchmark Datasets and Models.}
Based on recent work by \citet{Fumagalli.2023,Kolpaczki.2024b,Tsai.2022}, we create different benchmark scenarios\footnote{{\repository}}.
\cref{tab_setup} summarizes the scenarios and corresponding removal approaches after \citet{Covert.2021b}.
For a detailed description regarding the experimental setup and model training we refer to \cref{sec_appendix_experimental_setup,sec_appendix_additional_results} including a runtime analysis. 
First, we create synthetic \emph{sum of unanimity models} (SOUMs), also known as sum of unanimity games and induced subgraph game \cite{Deng.1994}, where GT SII can be computed, cf. \cref{appx_sii_gt}.
Second, we compute feature (i.e.\ token) interaction values for a sentiment analysis \emph{language model} (LM).
The LM is a fine-tuned version of \texttt{DistilBert} \cite{Sanh.2019} on movie review excerpts from the IMDB dataset \cite{Maas.2011}.
Third and forth, we explain local instances of the \emph{bike rental} (BR) \cite{FanaeeT.2014} and \emph{california housing} (CH) \cite{Kelley.1997} regression datasets.
On BR we train an XGBoost regressor and on CH a small neural network. 
The target variables are logarithmized.
Fifth and sixth, we explain a \texttt{ResNet18} \emph{convolutional neural network} (CNN) \cite{He.2018} and a \emph{vision transformer} (ViT) image classifiers, which were pre-trained on ImageNet \cite{Deng.2009}. 
The ViT considers patches of $32 \times 32$ pixels.
In the CNN, individual pixels are grouped together into super-pixels.
Seventh, we compute feature interactions for instances of the \emph{adult census} (AC) \cite{Kohavi.1996} classification dataset and a random forest (RF) classifier. 

\textbf{Approximation Quality of SII and $k$-SII.} 
\cref{fig_approx_quality} depicts the approximation quality of KernelSHAP-IQ and inconsistent KernelSHAP-IQ compared to the baseline estimators on a selection of benchmark tasks. 
Further results are provided in \cref{sec_appendix_additional_results}.
KernelSHAP-IQ and inconsistent KernelSHAP-IQ clearly outperform all three sampling-based baselines on both regression benchmarks (BR and CH).
Inconsistent KernelSHAP-IQ achieves high estimation qualities in low-budget scenarios.
This trend also materializes for second order SII values, in particular in high-player tasks (e.g.\ SOUM in \cref{fig_approx_quality}).
Yet, unlike KernelSHAP-IQ, the inconsistent version does not converge to the GT SII values. 
Interestingly, we observe that the inconsistent version does converge to the GT SV independent of the approximation order $k \geq 1$.
Notably, in benchmark tasks like the LM, KernelSHAP-IQ and SVARM-IQ rapidly outperform all baselines as well as inconsistent KernelSHAP-IQ. 
Both SVARM-IQ and KernelSHAP-IQ achieve SOTA results for the LM task and second order SII values.
By relying on \cref{conj_sii}, KernelSHAP-IQ yields high-quality estimations for SII of higher orders ($k > 2$) and outperforms SVARM-IQ for order $3$ on the LM.

\textbf{Example Use Case: Feature Attribution.}
As illustrated in \cref{fig_language_illustration,fig_intro_illustration}, 2-SII values can be used to enhance current feature attribution techniques.
Similar to \citet{Fumagalli.2023,Sundararajan.2020,Tsai.2022}, we show that feature interactions are relevant for understanding intricate LMs. 
In \cref{fig_language_illustration}, 2-SII scores reveal that the predicted positive sentiment largely stems from the interaction of the two words ``never'' and ``forget'', while ``forget'' individually points towards a negative sentiment. 
In the CH example illustrated in \cref{fig_intro_illustration}, the \emph{longit.} and \emph{latit.} features are both contributing positively to the prediction considering 1-SII (SV). 
The 2-SII explanation, however, reveals that the positive contribution of \emph{latit.} vanishes (very low \emph{latit.} score) and can be mostly attributed to the \emph{latit. x longit.} interaction. 
Hence, the \emph{exact location} of the property is meaningful. 

\section{Limitations}
\label{sec_limitations}
We linked the SII to a solution of a WSL optimization problem.
As previously discussed, our theoretical results extend to higher orders, provided that $2k \leq \fnum$, although we were unable to give a rigorous proof.
We expect that other proof techniques are required to further understand these coherences, which could resolve improper weighting in Conjecture~\ref{conj_sii}.
In practice, due to iterative computation of KernelSHAP-IQ, higher-order estimates are negatively affected by previous low-quality estimates, where interpretation may be flawed.
Lastly, the exponentially growing number of interactions requires human-centered post processing, and we give modest suggestions for visualizations.

\section{Conclusion and Future Work}
\label{sec_conclusion}
In this work, we clarified the link between SII, an axiomatic interaction index, and $k$-SII, an aggregation of SII that yields a $k$-additive interaction index used for interpretability.
We demonstrated that the approximation of any game induced by $k$-SII is iteratively constructed via SII.
Similar to the SV, we then established that SII of order $k$ is represented as the solution to a WLS problem, where SII yields an \emph{optimal} $k$-additive approximation.
We rigorously prove our results for the SV and pairwise SII and give empirically validated conjectures for higher orders.
Consequently, we introduce KernelSHAP-IQ, a direct extension of KernelSHAP, which efficiently approximates SII for higher orders and yields SOTA performance.
We apply KernelSHAP-IQ for local interpretability and demonstrate benefits of enriching feature attributions with interactions.

In future work, we suspect that a rigorous proof of our conjectures would reveal novel techniques and further insights.
Another interesting link are orthogonal projections, where we suspect a link of pairwise SII to the \emph{Shapley residual} \cite{Kumar.2021}.
Our novel representation of SII opens up additional possibilities, including the investigation of \emph{amortized} Shapley interactions through methods akin to FastSHAP \cite{Jethani.2022}.
Apart from local interpretability with feature interactions, KernelSHAP-IQ is applicable in any game-theoretic setting, which includes global interpretability or data valuation.

\clearpage

\section*{Acknowledgment}
We gratefully thank the anonymous reviewer for their valuable feedback for improving this work.
Fabian Fumagalli and Maximilian Muschalik gratefully acknowledge funding by the Deutsche Forschungsgemeinschaft (DFG, German Research Foundation): TRR 318/1 2021 – 438445824. Patrick Kolpaczki was supported supported by the research training group Dataninja (Trustworthy AI for Seamless Problem Solving: Next Generation Intelligence Joins Robust Data Analysis) funded by the German federal state of North Rhine-Westphalia.

\section*{Impact Statement}
This paper presents work whose goal is to advance the field of Machine Learning, specifically the field of Explainable Artificial Intelligence.
There are many potential societal consequences of our work.
Our work can positively impact Machine Learning adoption and potentially reveal biases or unwanted behavior in Machine Learning systems.
However, it may also enable explainability-based white-washing in organizations, firms, or policing.

\bibliography{references}

\begin{thebibliography}{63}
\providecommand{\natexlab}[1]{#1}
\providecommand{\url}[1]{\texttt{#1}}
\expandafter\ifx\csname urlstyle\endcsname\relax
  \providecommand{\doi}[1]{doi: #1}\else
  \providecommand{\doi}{doi: \begingroup \urlstyle{rm}\Url}\fi

\bibitem[Achanta et~al.(2012)Achanta, Shaji, Smith, Lucchi, Fua, and S{\"{u}}sstrunk]{Achanta.2012}
Achanta, R., Shaji, A., Smith, K., Lucchi, A., Fua, P., and S{\"{u}}sstrunk, S.
\newblock Slic superpixels compared to state-of-the-art superpixel methods.
\newblock \emph{{IEEE} Transactions on Pattern Analysis and Machine Intelligence}, 34\penalty0 (11):\penalty0 2274--2282, 2012.
\newblock \doi{10.1109/TPAMI.2012.120}.

\bibitem[Banzhaf~III(1964)]{Banzhaf.1964}
Banzhaf~III, J.~F.
\newblock Weighted voting doesn't work: A mathematical analysis.
\newblock \emph{Rutgers Law Review}, 19:\penalty0 317, 1964.

\bibitem[Bordt \& von Luxburg(2023)Bordt and von Luxburg]{Bord.2023}
Bordt, S. and von Luxburg, U.
\newblock {From Shapley Values to Generalized Additive Models and back}.
\newblock In \emph{International Conference on Artificial Intelligence and Statistics {(AISTATS 2023)}}, volume 206 of \emph{Proceedings of Machine Learning Research}, pp.\  709--745. {PMLR}, 2023.

\bibitem[Castro et~al.(2009)Castro, G{\'{o}}mez, and Tejada]{Castro.2009}
Castro, J., G{\'{o}}mez, D., and Tejada, J.
\newblock {Polynomial calculation of the Shapley value based on sampling}.
\newblock \emph{Computers \& Operations Research}, 36\penalty0 (5):\penalty0 1726--1730, 2009.
\newblock \doi{10.1016/j.cor.2008.04.004}.

\bibitem[Charnes et~al.(1988)Charnes, Golany, Keane, and Rousseau]{Charnes.1988}
Charnes, A., Golany, B., Keane, M., and Rousseau, J.
\newblock \emph{Extremal Principle Solutions of Games in Characteristic Function Form: Core, Chebychev and Shapley Value Generalizations}, volume~11, pp.\  123–133.
\newblock Springer Netherlands, 1988.
\newblock \doi{10.1007/978-94-009-3677-5_7}.

\bibitem[Chen et~al.(2023)Chen, Covert, Lundberg, et~al.]{Chen.2023}
Chen, H., Covert, I., Lundberg, S., et~al.
\newblock {Algorithms to estimate Shapley value feature attributions}.
\newblock \emph{Nature Machine Intelligence}, 5:\penalty0 590–601, 2023.
\newblock \doi{10.1038/s42256-023-00657-x}.

\bibitem[Chen \& Guestrin(2016)Chen and Guestrin]{Chen.2016}
Chen, T. and Guestrin, C.
\newblock Xgboost: {A} scalable tree boosting system.
\newblock In \emph{Proceedings of the 22nd {ACM} {SIGKDD} International Conference on Knowledge Discovery and Data Mining ({SIGKDD} 2016)}, pp.\  785--794. {ACM}, 2016.
\newblock \doi{10.1145/2939672.2939785}.

\bibitem[Covert \& Lee(2021)Covert and Lee]{Covert.2021}
Covert, I. and Lee, S.
\newblock {Improving KernelSHAP: Practical Shapley Value Estimation Using Linear Regression}.
\newblock In \emph{The 24th International Conference on Artificial Intelligence and Statistics, ({AISTATS} 2021)}, volume 130 of \emph{Proceedings of Machine Learning Research}, pp.\  3457--3465. {PMLR}, 2021.

\bibitem[Covert et~al.(2021)Covert, Lundberg, and Lee]{Covert.2021b}
Covert, I., Lundberg, S.~M., and Lee, S.
\newblock {Explaining by Removing: A Unified Framework for Model Explanation}.
\newblock \emph{Journal of Machine Learning Research}, 22\penalty0 (209):\penalty0 1--90, 2021.
\newblock \doi{10.5555/3546258.3546467}.

\bibitem[Deng et~al.(2024)Deng, Zou, Du, Chen, Feng, Yang, Li, and Zhang]{Deng.2024}
Deng, H., Zou, N., Du, M., Chen, W., Feng, G., Yang, Z., Li, Z., and Zhang, Q.
\newblock {Unifying Fourteen Post-hoc Attribution Methods with Taylor Interactions}.
\newblock \emph{IEEE Transactions on Pattern Analysis and Machine Intelligence}, pp.\  1--17, 2024.
\newblock \doi{10.1109/TPAMI.2024.3358410}.

\bibitem[Deng et~al.(2009)Deng, Dong, Socher, Li, Li, and Fei{-}Fei]{Deng.2009}
Deng, J., Dong, W., Socher, R., Li, L., Li, K., and Fei{-}Fei, L.
\newblock Imagenet: A large-scale hierarchical image database.
\newblock In \emph{2009 IEEE Computer Society Conference on Computer Vision and Pattern Recognition, (CVPR 2009)}, pp.\  248--255. {IEEE} Computer Society, 2009.
\newblock \doi{10.1109/CVPR.2009.5206848}.

\bibitem[Deng \& Papadimitriou(1994)Deng and Papadimitriou]{Deng.1994}
Deng, X. and Papadimitriou, C.~H.
\newblock On the complexity of cooperative solution concepts.
\newblock \emph{Mathematics of Operations Research}, 19\penalty0 (2):\penalty0 257--266, 1994.
\newblock \doi{10.1287/moor.19.2.257}.

\bibitem[Fanaee-T \& Gama(2014)Fanaee-T and Gama]{FanaeeT.2014}
Fanaee-T, H. and Gama, J.
\newblock {Event Labeling Combining Ensemble Detectors and Background Knowledge}.
\newblock \emph{Progress in Artificial Intelligence}, 2\penalty0 (2):\penalty0 113--127, 2014.
\newblock \doi{10.1007/s13748-013-0040-3}.

\bibitem[Feurer et~al.(2020)Feurer, van Rijn, Kadra, Gijsbers, Mallik, Ravi, Mueller, Vanschoren, and Hutter]{Feurer.2020}
Feurer, M., van Rijn, J.~N., Kadra, A., Gijsbers, P., Mallik, N., Ravi, S., Mueller, A., Vanschoren, J., and Hutter, F.
\newblock {OpenML-Python: an extensible Python API for OpenML}.
\newblock \emph{CoRR}, abs/1911.02490, 2020.

\bibitem[Fujimoto et~al.(2006)Fujimoto, Kojadinovic, and Marichal]{Fujimoto.2006}
Fujimoto, K., Kojadinovic, I., and Marichal, J.
\newblock {Axiomatic characterizations of probabilistic and cardinal-probabilistic interaction indices}.
\newblock \emph{Games and Economic Behavior}, 55\penalty0 (1):\penalty0 72--99, 2006.
\newblock \doi{10.1016/j.geb.2005.03.002}.

\bibitem[Fumagalli et~al.(2023)Fumagalli, Muschalik, Kolpaczki, H{\"u}llermeier, and Hammer]{Fumagalli.2023}
Fumagalli, F., Muschalik, M., Kolpaczki, P., H{\"u}llermeier, E., and Hammer, B.~E.
\newblock {SHAP}-{IQ}: Unified approximation of any-order shapley interactions.
\newblock In \emph{Thirty-seventh Conference on Neural Information Processing Systems {(NeurIPS 2023)}}, 2023.

\bibitem[Ghorbani \& Zou(2019)Ghorbani and Zou]{Ghorbani.2019}
Ghorbani, A. and Zou, J.~Y.
\newblock Data shapley: Equitable valuation of data for machine learning.
\newblock In \emph{Proceedings of the 36th International Conference on Machine Learning, {(ICML 2019)}}, volume~97 of \emph{Proceedings of Machine Learning Research}, pp.\  2242--2251. PMLR, 2019.

\bibitem[Grabisch(1997)]{Grabisch.1997}
Grabisch, M.
\newblock k-order additive discrete fuzzy measures and their representation.
\newblock \emph{Fuzzy Sets and Systems}, 92\penalty0 (2):\penalty0 167–189, 1997.
\newblock \doi{10.1016/S0165-0114(97)00168-1}.

\bibitem[Grabisch(2016)]{Grabisch.2016}
Grabisch, M.
\newblock \emph{Set Functions, Games and Capacities in Decision Making}, volume~46.
\newblock Springer International Publishing Switzerland, 2016.
\newblock ISBN 978-3-319-30690-2.
\newblock \doi{10.1007/978-3-319-30690-2}.

\bibitem[Grabisch \& Roubens(1999)Grabisch and Roubens]{Grabisch.1999}
Grabisch, M. and Roubens, M.
\newblock {An axiomatic approach to the concept of interaction among players in cooperative games}.
\newblock \emph{International Journal of Game Theory}, 28\penalty0 (4):\penalty0 547--565, 1999.
\newblock \doi{10.1007/s001820050125}.

\bibitem[Grabisch et~al.(2000)Grabisch, Marichal, and Roubens]{Grabisch.2000}
Grabisch, M., Marichal, J., and Roubens, M.
\newblock Equivalent representations of set functions.
\newblock \emph{Mathematics of Operations Research}, 25\penalty0 (2):\penalty0 157--178, 2000.
\newblock \doi{10.1287/moor.25.2.157.12225}.

\bibitem[Hammer \& Holzman(1992)Hammer and Holzman]{Hammer.1992}
Hammer, P.~L. and Holzman, R.
\newblock Approximations of pseudo-boolean functions; applications to game theory.
\newblock \emph{{ZOR} Mathematical Methods of Operations Research}, 36\penalty0 (1):\penalty0 3--21, 1992.
\newblock \doi{10.1007/BF01541028}.

\bibitem[Harris et~al.(2022)Harris, Pymar, and Rowat]{Harris.2022}
Harris, C., Pymar, R., and Rowat, C.
\newblock Joint shapley values: a measure of joint feature importance.
\newblock In \emph{The Tenth International Conference on Learning Representations, ({ICLR} 2022)}. OpenReview.net, 2022.

\bibitem[He et~al.(2016)He, Zhang, Ren, and Sun]{He.2018}
He, K., Zhang, X., Ren, S., and Sun, J.
\newblock Deep residual learning for image recognition.
\newblock In \emph{2016 IEEE Conference on Computer Vision and Pattern Recognition, {(CVPR 2016)}}, pp.\  770--778. {IEEE} Computer Society, 2016.
\newblock \doi{10.1109/CVPR.2016.90}.

\bibitem[Herbinger et~al.(2023)Herbinger, Bischl, and Casalicchio]{Herbinger.2023}
Herbinger, J., Bischl, B., and Casalicchio, G.
\newblock {Decomposing Global Feature Effects Based on Feature Interactions}.
\newblock \emph{CoRR}, abs/2306.00541, 2023.

\bibitem[Herren \& Hahn(2022)Herren and Hahn]{Herren.2022}
Herren, A. and Hahn, P.~R.
\newblock Statistical aspects of shap: Functional anova for model interpretation.
\newblock \emph{CoRR}, abs/2208.09970, 2022.

\bibitem[Hiabu et~al.(2023)Hiabu, Meyer, and Wright]{Hiabu.2023}
Hiabu, M., Meyer, J.~T., and Wright, M.~N.
\newblock {Unifying local and global model explanations by functional decomposition of low dimensional structures}.
\newblock In \emph{International Conference on Artificial Intelligence and Statistics ({AISTATS 2023})}, volume 206 of \emph{Proceedings of Machine Learning Research}, pp.\  7040--7060. {PMLR}, 2023.

\bibitem[Hooker(2004)]{Hooker.2004}
Hooker, G.
\newblock {Discovering additive structure in black box functions}.
\newblock In Kim, W., Kohavi, R., Gehrke, J., and DuMouchel, W. (eds.), \emph{Proceedings of the Tenth {ACM} {SIGKDD} International Conference on Knowledge Discovery and Data Mining ({SIGKDD} 2004)}, pp.\  575--580. {ACM}, 2004.
\newblock \doi{10.1145/1014052.1014122}.

\bibitem[Hooker(2007)]{Hooker.2007}
Hooker, G.
\newblock {Generalized Functional ANOVA Diagnostics for High-Dimensional Functions of Dependent Variables}.
\newblock \emph{Journal of Computational and Graphical Statistics}, 16\penalty0 (3):\penalty0 709--732, 2007.
\newblock \doi{10.1198/106186007X237892}.

\bibitem[Janizek et~al.(2021)Janizek, Sturmfels, and Lee]{Janizek.2021}
Janizek, J.~D., Sturmfels, P., and Lee, S.
\newblock Explaining explanations: Axiomatic feature interactions for deep networks.
\newblock \emph{Journal of Machine Learning Research}, 22\penalty0 (104):\penalty0 1--54, 2021.

\bibitem[Jethani et~al.(2022)Jethani, Sudarshan, Covert, Lee, and Ranganath]{Jethani.2022}
Jethani, N., Sudarshan, M., Covert, I.~C., Lee, S., and Ranganath, R.
\newblock {FastSHAP: Real-Time Shapley Value Estimation}.
\newblock In \emph{The Tenth International Conference on Learning Representations ({ICLR} 2022)}. OpenReview.net, 2022.

\bibitem[{Kelley Pace} \& Barry(1997){Kelley Pace} and Barry]{Kelley.1997}
{Kelley Pace}, R. and Barry, R.
\newblock {Sparse spatial autoregressions}.
\newblock \emph{Statistics \& Probability Letters}, 33\penalty0 (3):\penalty0 291--297, 1997.
\newblock \doi{https://doi.org/10.1016/S0167-7152(96)00140-X}.

\bibitem[Kohavi(1996)]{Kohavi.1996}
Kohavi, R.
\newblock Scaling up the accuracy of naive-bayes classifiers: A decision-tree hybrid.
\newblock In \emph{Proceedings of International Conference on Knowledge Discovery and Data Mining {(KDD 1996)}}, pp.\  202--207, 1996.

\bibitem[Kolpaczki et~al.(2024{\natexlab{a}})Kolpaczki, Bengs, Muschalik, and Hüllermeier]{Kolpaczki.2024a}
Kolpaczki, P., Bengs, V., Muschalik, M., and Hüllermeier, E.
\newblock {Approximating the Shapley Value without Marginal Contributions}.
\newblock In \emph{{Thirty-Eighth {AAAI} Conference on Artificial Intelligence, (AAAI 2024)}}, pp.\  13246--13255. {AAAI} Press, 2024{\natexlab{a}}.
\newblock \doi{10.1609/AAAI.V38I12.29225}.

\bibitem[Kolpaczki et~al.(2024{\natexlab{b}})Kolpaczki, Muschalik, Fumagalli, Hammer, and H\"{u}llermeier]{Kolpaczki.2024b}
Kolpaczki, P., Muschalik, M., Fumagalli, F., Hammer, B., and H\"{u}llermeier, E.
\newblock {SVARM-IQ}: Efficient approximation of any-order {S}hapley interactions through stratification.
\newblock In \emph{{Proceedings of The 27th International Conference on Artificial Intelligence and Statistics, (AISTATS 2024)}}, volume 238 of \emph{Proceedings of Machine Learning Research}, pp.\  3520--3528. PMLR, 2024{\natexlab{b}}.

\bibitem[Kumar et~al.(2021)Kumar, Scheidegger, Venkatasubramanian, and Friedler]{Kumar.2021}
Kumar, I., Scheidegger, C., Venkatasubramanian, S., and Friedler, S.~A.
\newblock {Shapley Residuals: Quantifying the limits of the Shapley value for explanations}.
\newblock In \emph{Advances in Neural Information Processing Systems 34: Annual Conference on Neural Information Processing Systems 2021 {NeurIPS 2021}}, pp.\  26598--26608, 2021.

\bibitem[Kumar et~al.(2020)Kumar, Venkatasubramanian, Scheidegger, and Friedler]{Kumar.2020}
Kumar, I.~E., Venkatasubramanian, S., Scheidegger, C., and Friedler, S.~A.
\newblock {Problems with Shapley-value-based explanations as feature importance measures}.
\newblock In \emph{Proceedings of the 37th International Conference on Machine Learning ({ICML} 2020)}, volume 119 of \emph{Proceedings of Machine Learning Research}, pp.\  5491--5500. {PMLR}, 2020.

\bibitem[Lengerich et~al.(2020)Lengerich, Tan, Chang, Hooker, and Caruana]{Lengerich.2020}
Lengerich, B.~J., Tan, S., Chang, C., Hooker, G., and Caruana, R.
\newblock {Purifying Interaction Effects with the Functional ANOVA: An Efficient Algorithm for Recovering Identifiable Additive Models}.
\newblock In \emph{The 23rd International Conference on Artificial Intelligence and Statistics ({AISTATS} 2020)}, volume 108 of \emph{Proceedings of Machine Learning Research}, pp.\  2402--2412. {PMLR}, 2020.

\bibitem[Lhoest et~al.(2021)Lhoest, Villanova~del Moral, von Platen, Wolf, Šaško, Jernite, Thakur, Tunstall, Patil, Drame, Chaumond, Plu, Davison, Brandeis, Sanh, Le~Scao, Canwen~Xu, Patry, Liu, McMillan-Major, Schmid, Gugger, Raw, Lesage, Lozhkov, Carrigan, Matussière, von Werra, Debut, Bekman, and Delangue]{Lhoest.2021}
Lhoest, Q., Villanova~del Moral, A., von Platen, P., Wolf, T., Šaško, M., Jernite, Y., Thakur, A., Tunstall, L., Patil, S., Drame, M., Chaumond, J., Plu, J., Davison, J., Brandeis, S., Sanh, V., Le~Scao, T., Canwen~Xu, K., Patry, N., Liu, S., McMillan-Major, A., Schmid, P., Gugger, S., Raw, N., Lesage, S., Lozhkov, A., Carrigan, M., Matussière, T., von Werra, L., Debut, L., Bekman, S., and Delangue, C.
\newblock Datasets: A community library for natural language processing.
\newblock In \emph{Proceedings of the 2021 Conference on Empirical Methods in Natural Language Processing: System Demonstrations, {(EMNLP 2021)}}, pp.\  175--184. Association for Computational Linguistics, 2021.

\bibitem[Lundberg \& Lee(2017)Lundberg and Lee]{Lundberg.2017}
Lundberg, S.~M. and Lee, S.
\newblock {A Unified Approach to Interpreting Model Predictions}.
\newblock In \emph{Advances in Neural Information Processing Systems 30: Annual Conference on Neural Information Processing Systems 2017, {NeurIPS 2017}}, pp.\  4765--4774, 2017.

\bibitem[Lundberg et~al.(2020)Lundberg, Erion, Chen, DeGrave, Prutkin, Nair, Katz, Himmelfarb, Bansal, and Lee]{Lundberg.2020}
Lundberg, S.~M., Erion, G.~G., Chen, H., DeGrave, A.~J., Prutkin, J.~M., Nair, B., Katz, R., Himmelfarb, J., Bansal, N., and Lee, S.
\newblock {From local explanations to global understanding with explainable AI for trees}.
\newblock \emph{Nature Machine Intelligence}, 2\penalty0 (1):\penalty0 56--67, 2020.
\newblock \doi{10.1038/s42256-019-0138-9}.

\bibitem[Maas et~al.(2011)Maas, Daly, Pham, Huang, Ng, and Potts]{Maas.2011}
Maas, A.~L., Daly, R.~E., Pham, P.~T., Huang, D., Ng, A.~Y., and Potts, C.
\newblock Learning word vectors for sentiment analysis.
\newblock In \emph{Proceedings of the 49th Annual Meeting of the Association for Computational Linguistics: Human Language Technologies, (HLT 2011)}, pp.\  142--150. Association for Computational Linguistics, 2011.

\bibitem[Marichal \& Roubens(1999)Marichal and Roubens]{Marichal.1999}
Marichal, J.-L. and Roubens, M.
\newblock \emph{The Chaining Interaction Index among Players in Cooperative Games}, pp.\  69–85.
\newblock Springer Netherlands, 1999.
\newblock \doi{10.1007/978-94-017-0647-6_5}.

\bibitem[Murdoch et~al.(2018)Murdoch, Liu, and Yu]{Murdoch.2018}
Murdoch, W.~J., Liu, P.~J., and Yu, B.
\newblock Beyond word importance: Contextual decomposition to extract interactions from lstms.
\newblock In \emph{6th International Conference on Learning Representations, {(ICLR 2018)}}, 2018.

\bibitem[Muschalik et~al.(2024)Muschalik, Fumagalli, Hammer, and H{\"{u}}llermeier]{Muschalik.2024}
Muschalik, M., Fumagalli, F., Hammer, B., and H{\"{u}}llermeier, E.
\newblock Beyond treeshap: Efficient computation of any-order shapley interactions for tree ensembles.
\newblock In \emph{Thirty-Eighth {AAAI} Conference on Artificial Intelligence, ({AAAI} 2024)}, pp.\  14388--14396. {AAAI} Press, 2024.
\newblock \doi{10.1609/AAAI.V38I13.29352}.

\bibitem[Paszke et~al.(2019)Paszke, Gross, Massa, Lerer, Bradbury, Chanan, Killeen, Lin, Gimelshein, Antiga, Desmaison, Kopf, Yang, DeVito, Raison, Tejani, Chilamkurthy, Steiner, Fang, Bai, and Chintala]{Paszke.2017}
Paszke, A., Gross, S., Massa, F., Lerer, A., Bradbury, J., Chanan, G., Killeen, T., Lin, Z., Gimelshein, N., Antiga, L., Desmaison, A., Kopf, A., Yang, E., DeVito, Z., Raison, M., Tejani, A., Chilamkurthy, S., Steiner, B., Fang, L., Bai, J., and Chintala, S.
\newblock Pytorch: An imperative style, high-performance deep learning library.
\newblock In \emph{Advances in Neural Information Processing Systems 32: Annual Conference on Neural Information Processing Systems 2019, {(NeurIPS 2019)}}, pp.\  8024--8035. Curran Associates, Inc., 2019.

\bibitem[Pedregosa et~al.(2011)Pedregosa, Varoquaux, Gramfort, Michel, Thirion, Grisel, Blondel, Prettenhofer, Weiss, Dubourg, VanderPlas, Passos, Cournapeau, Brucher, Perrot, and Duchesnay]{Pedregosa.2011}
Pedregosa, F., Varoquaux, G., Gramfort, A., Michel, V., Thirion, B., Grisel, O., Blondel, M., Prettenhofer, P., Weiss, R., Dubourg, V., VanderPlas, J., Passos, A., Cournapeau, D., Brucher, M., Perrot, M., and Duchesnay, E.
\newblock Scikit-learn: Machine learning in python.
\newblock \emph{Journal of Machine Learning Research}, 12:\penalty0 2825--2830, 2011.
\newblock \doi{10.5555/1953048.2078195}.

\bibitem[Pelegrina et~al.(2023)Pelegrina, Duarte, and Grabisch]{Pelegrina.2023}
Pelegrina, G.~D., Duarte, L.~T., and Grabisch, M.
\newblock A \emph{k}-additive choquet integral-based approach to approximate the {SHAP} values for local interpretability in machine learning.
\newblock \emph{Artificial Intelligence}, 325:\penalty0 104014, 2023.
\newblock \doi{10.1016/J.ARTINT.2023.104014}.

\bibitem[Ruiz et~al.(1998)Ruiz, Valenciano, and Zarzuelo]{Ruiz.1998}
Ruiz, L.~M., Valenciano, F., and Zarzuelo, J.~M.
\newblock The family of least square values for transferable utility games.
\newblock \emph{Games and Economic Behavior}, 24\penalty0 (1):\penalty0 109–130, 1998.
\newblock \doi{https://doi.org/10.1006/game.1997.0622}.

\bibitem[Sanh et~al.(2019)Sanh, Debut, Chaumond, and Wolf]{Sanh.2019}
Sanh, V., Debut, L., Chaumond, J., and Wolf, T.
\newblock Distilbert, a distilled version of bert: smaller, faster, cheaper and lighter.
\newblock \emph{CoRR}, abs/1910.01108, 2019.

\bibitem[Shapley(1953)]{Shapley.1953}
Shapley, L.~S.
\newblock {A Value for n-Person Games}.
\newblock In \emph{Contributions to the Theory of Games (AM-28), Volume II}, pp.\  307--318. Princeton University Press, 1953.

\bibitem[Slack et~al.(2020)Slack, Hilgard, Jia, Singh, and Lakkaraju]{Slack.2020}
Slack, D., Hilgard, S., Jia, E., Singh, S., and Lakkaraju, H.
\newblock Fooling {LIME} and {SHAP:} adversarial attacks on post hoc explanation methods.
\newblock In \emph{{AAAI/ACM} Conference on AI, Ethics, and Society ({AIES} 2020)}, pp.\  180--186. {ACM}, 2020.
\newblock \doi{10.1145/3375627.3375830}.

\bibitem[Sundararajan et~al.(2020)Sundararajan, Dhamdhere, and Agarwal]{Sundararajan.2020}
Sundararajan, M., Dhamdhere, K., and Agarwal, A.
\newblock {The Shapley Taylor Interaction Index}.
\newblock In \emph{Proceedings of the 37th International Conference on Machine Learning, ({ICML} 2020)}, volume 119 of \emph{Proceedings of Machine Learning Research}, pp.\  9259--9268. {PMLR}, 2020.

\bibitem[Tsai et~al.(2023)Tsai, Yeh, and Ravikumar]{Tsai.2022}
Tsai, C., Yeh, C., and Ravikumar, P.
\newblock {Faith-Shap: The Faithful Shapley Interaction Index}.
\newblock \emph{Journal of Machine Learning Research}, 24\penalty0 (94):\penalty0 1--42, 2023.

\bibitem[Tsang et~al.(2018)Tsang, Cheng, and Liu]{Tsang.2018}
Tsang, M., Cheng, D., and Liu, Y.
\newblock Detecting statistical interactions from neural network weights.
\newblock In \emph{6th International Conference on Learning Representations, {(ICLR 2018)}}, 2018.

\bibitem[Tsang et~al.(2020{\natexlab{a}})Tsang, Cheng, Liu, Feng, Zhou, and Liu]{Tsang.2020b}
Tsang, M., Cheng, D., Liu, H., Feng, X., Zhou, E., and Liu, Y.
\newblock Feature interaction interpretability: A case for explaining ad-recommendation systems via neural interaction detection.
\newblock In \emph{8th International Conference on Learning Representations, {(ICLR 2020)}}, 2020{\natexlab{a}}.

\bibitem[Tsang et~al.(2020{\natexlab{b}})Tsang, Rambhatla, and Liu]{Tsang.2020}
Tsang, M., Rambhatla, S., and Liu, Y.
\newblock {How does This Interaction Affect Me? Interpretable Attribution for Feature Interactions}.
\newblock In \emph{Advances in Neural Information Processing Systems 31: Annual Conference on Neural Information Processing Systems (NeurIPS 2020)}, pp.\  6147--6159, 2020{\natexlab{b}}.

\bibitem[Winham et~al.(2012)Winham, Colby, Freimuth, Wang, de~Andrade, Huebner, and Biernacka]{Winham.2012}
Winham, S.~J., Colby, C.~L., Freimuth, R.~R., Wang, X., de~Andrade, M., Huebner, M., and Biernacka, J.~M.
\newblock {SNP interaction detection with Random Forests in high-dimensional genetic data}.
\newblock \emph{BMC Bioinformatics}, 13:\penalty0 164, 2012.
\newblock \doi{10.1186/1471-2105-13-164}.

\bibitem[Wolf et~al.(2020)Wolf, Debut, Sanh, Chaumond, Delangue, Moi, Cistac, Rault, Louf, Funtowicz, Davison, Shleifer, von Platen, Ma, Jernite, Plu, Xu, Scao, Gugger, Drame, Lhoest, and Rush]{Wolf.2020}
Wolf, T., Debut, L., Sanh, V., Chaumond, J., Delangue, C., Moi, A., Cistac, P., Rault, T., Louf, R., Funtowicz, M., Davison, J., Shleifer, S., von Platen, P., Ma, C., Jernite, Y., Plu, J., Xu, C., Scao, T.~L., Gugger, S., Drame, M., Lhoest, Q., and Rush, A.~M.
\newblock Transformers: State-of-the-art natural language processing.
\newblock In \emph{Proceedings of the 2020 Conference on Empirical Methods in Natural Language Processing: System Demonstrations, {(EMNLP 2020)}}, pp.\  38--45. Association for Computational Linguistics, 2020.
\newblock \doi{10.18653/v1/2020.emnlp-demos.6}.

\bibitem[Wright et~al.(2016)Wright, Ziegler, and K{\"{o}}nig]{Wright.2016}
Wright, M.~N., Ziegler, A., and K{\"{o}}nig, I.~R.
\newblock Do little interactions get lost in dark random forests?
\newblock \emph{{BMC} Bioinform.}, 17:\penalty0 145, 2016.
\newblock \doi{10.1186/s12859-016-0995-8}.

\bibitem[Yu et~al.(2022)Yu, Bifet, Read, and Xu]{Yu.2022}
Yu, P., Bifet, A., Read, J., and Xu, C.
\newblock {Linear tree shap}.
\newblock In \emph{Advances in Neural Information Processing Systems 35: Annual Conference on Neural Information Processing Systems 2022, {(NeurIPS 2022)}}, 2022.

\bibitem[Zern et~al.(2023)Zern, Broelemann, and Kasneci]{Zern.2023}
Zern, A., Broelemann, K., and Kasneci, G.
\newblock Interventional {SHAP} values and interaction values for piecewise linear regression trees.
\newblock In \emph{Thirty-Seventh {AAAI} Conference on Artificial Intelligence, ({AAAI} 2023)}, pp.\  11164--11173. {AAAI} Press, 2023.
\newblock \doi{10.1609/AAAI.V37I9.26322}.

\bibitem[Zhang et~al.(2021)Zhang, Xie, Zheng, Zhang, and Zhang]{Zhang.2021}
Zhang, H., Xie, Y., Zheng, L., Zhang, D., and Zhang, Q.
\newblock {Interpreting Multivariate Shapley Interactions in DNNs}.
\newblock In \emph{Thirty-Fifth {AAAI} Conference on Artificial Intelligence, ({AAAI} 2021)}, pp.\  10877--10886. {AAAI} Press, 2021.

\end{thebibliography}
\bibliographystyle{icml2024}

%%%%%%%%%%%%%%%%%%%%%%%%%%%%%%%%%%%%%%%%%%%%%%%%%%%%%%%%%%%%%%%%%%%%%%%%%%%%%%%
%%%%%%%%%%%%%%%%%%%%%%%%%%%%%%%%%%%%%%%%%%%%%%%%%%%%%%%%%%%%%%%%%%%%%%%%%%%%%%%
% APPENDIX
%%%%%%%%%%%%%%%%%%%%%%%%%%%%%%%%%%%%%%%%%%%%%%%%%%%%%%%%%%%%%%%%%%%%%%%%%%%%%%%
%%%%%%%%%%%%%%%%%%%%%%%%%%%%%%%%%%%%%%%%%%%%%%%%%%%%%%%%%%%%%%%%%%%%%%%%%%%%%%%

\newpage
\appendix
\onecolumn

\section*{Organisation of the Supplementary Material}

The supplementary material is organized as follows. \cref{appx_proofs} contains all proofs. \cref{appx_sec_algos} contains algorithmic details and ground truth computation procedures. \cref{sec_appendix_experimental_setup} contains details regarding the experimental setup and reproducibility. Lastly, \cref{sec_appendix_additional_results} contains additional experimental results. 

% A
\contentsline {section}{\numberline{A}Proofs}{15}{}
\contentsline {subsection}{\numberline{A.1}Proof of Proposition 3.2}{15}{}
\contentsline {subsection}{\numberline{A.2}Proof of Corollary 3.3}{15}{}
\contentsline {subsection}{\numberline{A.3}Proof of Corollary 3.4}{15}{}
\contentsline {subsection}{\numberline{A.4}Proof of Theorem 3.6}{16}{}
\contentsline {subsection}{\numberline{A.5}Proof of Theorem 3.7}{17}{}
% B
\contentsline {section}{\numberline{B}Algorithms and Ground Truth Calculations}{23}{}
\contentsline {subsection}{\numberline{B.1}KernelSHAP-IQ Default Parameters}{23}{}
\contentsline {subsection}{\numberline{B.2}Inconsistent KernelSHAP-IQ}{23}{}
\contentsline {subsection}{\numberline{B.3}Sampling Algorithm}{23}{}
\contentsline {subsection}{\numberline{B.4}Solving WLS}{24}{}
\contentsline {subsection}{\numberline{B.5}Subset Weights SII}{25}{}
\contentsline {subsection}{\numberline{B.6}Aggregate SII to k-SII}{25}{}
\contentsline {subsection}{\numberline{B.7}Baseline Methods: SHAP-IQ, Permutation Sampling and SVARM-IQ}{26}{}
\contentsline {subsection}{\numberline{B.8}Analytic Solution for SII of SOUMs}{26}{}
\contentsline {subsection}{\numberline{B.9}Intuition about the KernelSHAP-IQ Weights}{27}{}
% C
\contentsline {section}{\numberline{C}Experimental Setup and Reproducibility}{28}{}
\contentsline {subsection}{\numberline{C.1}Model, Datasets and Task Descriptions}{28}{}
\contentsline {subsection}{\numberline{C.2}Computational Effort}{29}{}
% D
\contentsline {section}{\numberline{D} Additional Empirical Results}{30}{}
\contentsline {subsection}{\numberline{D.1}Runtime Analysis}{30}{}
\contentsline {subsection}{\numberline{D.2}Validations of Higher-Order Conjecture}{30}{}
\contentsline {subsection}{\numberline{D.3}Additional Approximation Results}{31}{}

\clearpage

\section{Proofs}\label{appx_proofs}

\subsection{Proof of Proposition 3.2}
\begin{proof}
By Definition~\ref{def_k_order}, we have $\hat\nu_k(T) := \sum_{S \subseteq T}^{1\leq \vert S \vert \leq k}\Phi_{k}(S)$.
According to Appendix A.2 in \cite{Bord.2023}, $\Phi_k(S)$ is explicitly computed as
\begin{equation*}
    \Phi_k(S) = \sum_{S \subseteq \tilde S \subseteq \fset}^{\vert \tilde S \vert \leq k} B_{\vert \tilde S \vert- \vert S \vert} \phi^{\text{SII}}(\tilde S) \text{ for } 1 \leq \vert S \vert \leq k.
\end{equation*}
Hence, it follows by re-arranging terms
\begin{align*}
    \hat\nu_k(T) &= \sum_{S \subseteq T}^{1 \leq \vert S \vert \leq k} \Phi_{k}(S)
    = \sum_{S \subseteq T}^{1\leq \vert S \vert \leq k} \sum_{S \subseteq \tilde S \subseteq \fset}^{\vert \tilde S \vert \leq k} B_{\vert \tilde S \vert- \vert S \vert} \phi^{\text{SII}}(\tilde S)
    = \sum_{S \subseteq T}^{1\leq \vert S \vert}  \sum_{\tilde S \subseteq \fset}^{\vert \tilde S \vert \leq k} B_{\vert \tilde S \vert- \vert S \vert} \phi^{\text{SII}}(\tilde S) \mathbf{1}(S \subseteq \tilde S)
    \\
    &= \sum_{\tilde S \subseteq \fset}^{\vert \tilde S \vert \leq k} \phi^{\text{SII}}(\tilde S)  \sum_{S \subseteq T}^{1\leq \vert S \vert}  B_{\vert \tilde S \vert- \vert S \vert} \mathbf{1}(S \subseteq \tilde S) 
    = \sum_{\tilde S \subseteq \fset}^{\vert \tilde S \vert \leq k} \phi^{\text{SII}}(\tilde S)  \sum_{r = 1}^{\vert \tilde S \vert} B_{\vert \tilde S \vert- r} \sum_{S \subseteq \fset}^{\vert S \vert = r}  \mathbf{1}(S \subseteq \tilde S \cap T)  
    \\
    &= \sum_{\tilde S \subseteq \fset}^{\vert \tilde S \vert \leq k} \phi^{\text{SII}}(\tilde S)  \sum_{r = 1}^{\vert \tilde S \cap T \vert} B_{\vert \tilde S \vert-r} \binom{\vert \tilde S \cap T\vert}{r} =  \sum_{\tilde S \subseteq \fset}^{\vert \tilde S \vert \leq k} \phi^{\text{SII}}(\tilde S)  \lambda(\vert \tilde S\vert,\vert \tilde S \cap T \vert),
\end{align*}
where we introduced $\binom{0}{r} := 0$ for $r>0$ and $\lambda(k,\ell) := \sum_{r=1}^{\ell}\binom{\ell}{r} B_{k-r}$.
The result follows then immediately by observing that the terms do not depend on $k$ and separating above sum into terms of order $k$ and order less than $k$ as

\begin{align*}
    \hat\nu_k(T) = \hat\nu_{k-1}(T) + \sum_{\tilde S \subseteq \fset}^{\vert \tilde S \vert = k} \phi^{\text{SII}}(\tilde S)  \lambda(\vert \tilde S \vert,\vert \tilde S \cap T \vert). 
\end{align*}
\end{proof}

\subsection{Proof of Corollary 3.3}
\begin{proof}
    From Proposition~\ref{prop_iterative_approx} it suffices to compute $\lambda(k,\ell) := \sum_{r=1}^{\ell}\binom{\ell}{r} B_{k-r}$.
    Clearly, $\lambda(1,1)=B_0=1$, which yields $\hat\nu_1(T) = \sum_{i\in T} \phi^{\text{SII}}(i) = \sum_{i\in T} \phi^{\text{SV}}(i)$. 
    For $k>1$, we have by the symmetry of the binomial coefficient $\lambda(k,k) = \sum_{r=1}^{k}\binom{k}{r} B_{k-r} =  \sum_{r=0}^{k-1}\binom{k}{r} B_{k} = 0$, which is the defining property of the Bernoulli numbers.
    Hence, for $k=2$, $\lambda(2,0)=\lambda(2,2)=0$.
    Lastly, $\lambda(2,1)=B_1=-1/2$, which yields
\begin{align*}
    \hat\nu_2(T) = \hat\nu_{1}(T) -\frac 1 2 \sum_{ij \subseteq \fset}^{\vert ij \cap T \vert = 1}\phi^{\text{SII}}(ij). 
\end{align*}
    
\end{proof}

\subsection{Proof of Corollary 3.4}
\begin{proof}
    This proof follows immediately from the fact that $\hat\nu_\fnum(T)$ is constructed from $k$-SII values $\Phi_k$, which are the Möbius transform for $k=\fnum$, cf. Theorem 4 by \citet{Bord.2023}.
    A defining property of the Möbius transform \cite{Grabisch.2016} is
    \begin{equation*}
        \nu(T) = \sum_{S \subseteq T} \Phi_\fnum(S),
    \end{equation*}
    and thus, by \cref{def_k_order}, $\nu(T) = \hat\nu_n(T)$.
\end{proof}

\subsection{Proof of Theorem 3.6}
\begin{proof}
The solution of the optimization problem is explicitly given as
\begin{equation}
\phi^* = (\mathbf{X}^T_1 \mathbf{W}_1 \mathbf{X}_1)^{-1} \mathbf{X}^T_1 \mathbf{W}_1 \cdot \mathbf{y}_1.
\end{equation}
Our goal is show that for $\mu_\infty \to \infty$ we obtain the correct weight of the SV for $\left((\mathbf{X}^T_1 \mathbf{W}_1 \mathbf{X}_1)^{-1} \mathbf{X}^T_1 \mathbf{W}_1\right)_{iT}$.
We will show that those weights are equal to the representation given in Theorem 4.4 by \citet{Fumagalli.2023} as
\begin{equation*}
\phi^{\text{SV}}(i) = \frac{1}{\fnum}\nu(\fset) +  \sum_{T \subseteq \fset}^{1\leq \vert T \vert \leq \fnum-1}\nu(T)\frac{\mu_1(t)}{\fnum-1}\left[ \mathbf{1}(i \in T) - \frac{t}{\fnum}\right].
\end{equation*}
Note that the values $\nu(T)$ are encoded in $\mathbf{y}_1$.
The proof is structured as follows:
\begin{itemize}
    \item Find structure of $\mathbf{X}_1^T \mathbf{W}_1 \mathbf{X}_1$.
    \item Find inverse $(\mathbf{X}_1^T \mathbf{W}_1 \mathbf{X}_1)^{-1}$.
    \item Compute $\lim_{\mu_\infty\to\infty}(\mathbf{X}_1^T \mathbf{W}_1 \mathbf{X}_1)^{-1} \mathbf{X}_1^T \mathbf{W}_1$ 
    \begin{itemize} 
        \item for $T$ with finite weight, i.e.\ $1\leq \vert T \vert \leq \fnum-1$.
        \item for $T$ with infinite weight $\mu_1(t) = \mu_\infty$, i.e.\ $T \in \{\emptyset,\fset\}$.
    \end{itemize}
\end{itemize}
We proceed by computing the terms in $\phi^*$ separately.
First, with $(\mathbf{X}_1)_{Ti} = \mathbf{1}(i \in T)$
\begin{align*}
    (\mathbf{X}^T_1 \mathbf{W}_1 \mathbf{X}_1)_{i,j} = \sum_{T \subseteq \fset} \mu_1(\vert T \vert)\mathbf{1}(i \in T) \mathbf{1}(j \in T) = \mathbf{1}(i=j) \underbrace{\sum_{t=1}^{\fnum} \mu_1(t)\binom{\fnum-1}{t-1}}_{=: \mu_{1,1}} + \mathbf{1}(i \neq j)\underbrace{\sum_{t=2}^\fnum \mu_1(t) \binom{\fnum-2}{t-2}}_{=: \mu_{1,0}}.
\end{align*}
Introducing a matrix $(\mathbf{J})_{i,j} \equiv 1$ of all ones and the identity $\mathbf{I}$, we have
$(\mathbf{X}^T_1 \mathbf{W}_1 \mathbf{X}_1) = \mu_{1,0}\mathbf{J}+(\mu_{1,1}-\mu_{1,0})\mathbf{I}$.
Due to this simplistic structure, we can compute the inverse explicitly using the following Lemma.
\begin{lemma}[\citet{Fumagalli.2023}]\label{lem_inverse_sv}
Let $\eta_0,\eta_1 > 0$ with $\eta_0\neq \eta_1$. Then, $(\eta_0 \mathbf{J} + (\eta_1-\eta_0)\mathbf{I})^{-1} = \tilde \eta_0 \mathbf{J} + (\tilde\eta_1-\tilde\eta_0)\mathbf{I}$
with
\begin{align*}
    &\tilde \eta_0  = \frac{-\eta_0}{(\eta_1-\eta_0)(\eta_1+(\fnum-1)\eta_0)} 
    & \tilde\eta_1= \frac{\eta_1+ (\fnum-2) \eta_0}{(\eta_1-\eta_0)(\eta_1+(\fnum-1)\eta_0)}.
\end{align*}
\end{lemma}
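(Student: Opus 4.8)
The plan is to exploit the fact that the matrix to be inverted lives in the two-dimensional commutative algebra spanned by $\mathbf{I}$ and $\mathbf{J}$, so that its inverse can be found by an ansatz within that same algebra. The starting point is the identity $\mathbf{J}^2 = \fnum\,\mathbf{J}$, which holds because every entry of $\mathbf{J}^2$ equals $\sum_{l} 1 = \fnum$. Consequently, for any scalars $\alpha_1,\beta_1,\alpha_2,\beta_2$,
\begin{equation*}
    (\alpha_1 \mathbf{I} + \beta_1 \mathbf{J})(\alpha_2 \mathbf{I} + \beta_2 \mathbf{J}) = \alpha_1\alpha_2 \, \mathbf{I} + \left(\alpha_1\beta_2 + \beta_1\alpha_2 + \fnum\,\beta_1\beta_2\right)\mathbf{J},
\end{equation*}
so the set $\{\alpha \mathbf{I} + \beta \mathbf{J}\}$ is closed under multiplication. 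Writing the given matrix as $\mathbf{M} := \eta_0 \mathbf{J} + (\eta_1 - \eta_0)\mathbf{I} = \alpha\mathbf{I} + \beta\mathbf{J}$ with $\alpha := \eta_1 - \eta_0$ and $\beta := \eta_0$, I would look for its inverse in the form $\tilde\alpha \mathbf{I} + \tilde\beta \mathbf{J}$.

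Next I would impose $\mathbf{M}(\tilde\alpha\mathbf{I} + \tilde\beta\mathbf{J}) = \mathbf{I}$ and read off the coefficients of $\mathbf{I}$ and $\mathbf{J}$ from the product formula above. This yields the triangular system $\alpha\tilde\alpha = 1$ and $\alpha\tilde\beta + \beta\tilde\alpha + \fnum\,\beta\tilde\beta = 0$. The first equation gives $\tilde\alpha = 1/\alpha = 1/(\eta_1 - \eta_0)$; substituting into the second and solving gives $\tilde\beta = -\beta/[\alpha(\alpha + \fnum\beta)]$. Re-expressing $\alpha + \fnum\beta = \eta_1 + (\fnum-1)\eta_0$ shows that $\tilde\beta = \tilde\eta_0$ exactly as claimed, and matching $\tilde\alpha = \tilde\eta_1 - \tilde\eta_0$ recovers $\tilde\eta_1$ after a one-line simplification of $1/(\eta_1-\eta_0)$ into the stated fraction with common denominator $(\eta_1-\eta_0)\,(\eta_1+(\fnum-1)\eta_0)$. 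An equivalent derivation applies the Sherman--Morrison formula to $\mathbf{M} = (\eta_1-\eta_0)\mathbf{I} + \eta_0\,\mathbf{1}\mathbf{1}^\top$, since $\mathbf{J} = \mathbf{1}\mathbf{1}^\top$ is rank one; both routes produce the same two scalars, so I would present the algebra-matching version and remark on Sherman--Morrison as a cross-check.

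The only genuine subtlety, rather than a true obstacle, is well-definedness of the formula: the denominators $\eta_1 - \eta_0$ and $\eta_1 + (\fnum-1)\eta_0$ must be nonzero. The first is guaranteed by the hypothesis $\eta_0 \neq \eta_1$, and the second by $\eta_0,\eta_1 > 0$ together with $\fnum \geq 1$, which forces $\eta_1 + (\fnum-1)\eta_0 > 0$. These two quantities are precisely the eigenvalues of $\mathbf{M}$, namely $\eta_1 - \eta_0$ with multiplicity $\fnum-1$ on the orthogonal complement of $\mathbf{1}$ and $\eta_1 + (\fnum-1)\eta_0$ on $\mathbf{1}$, so their nonvanishing is equivalent to invertibility and doubles as a sanity check that the posited inverse is legitimate. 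I would close by noting that $(\tilde\alpha\mathbf{I}+\tilde\beta\mathbf{J})\mathbf{M} = \mathbf{I}$ holds automatically because the algebra is commutative, so the one-sided verification suffices and the proof is complete.
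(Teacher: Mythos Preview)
Your proof is correct and complete. The paper itself does not prove this lemma: it is stated with attribution to \citet{Fumagalli.2023} and used as a black box inside the proof of Theorem~\ref{thm_kernel_sv}, so there is no ``paper's own proof'' to compare against. Your algebra-matching argument via $\mathbf{J}^2 = \fnum\mathbf{J}$ is the natural one, and the eigenvalue remark confirming that the hypotheses guarantee invertibility is a nice touch.
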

Hence,
\begin{align*}
(\mathbf{X}^T_1 \mathbf{W}_1 \mathbf{X}_1)^{-1} = (\mu_{1,0}\mathbf{J}+(\mu_{1,1}-\mu_{1,0})\mathbf{I})^{-1} = \tilde \mu_{1,0} \mathbf{J} + (\tilde\mu_{1,1}-\tilde\mu_{1,0})\mathbf{I}.
\end{align*}
By definition of $\mu_1$, we have
\begin{align*}
    \mu_{1,1} = \mu_\infty + \sum_{t=1}^{\fnum-1}\binom{\fnum-2}{t-1}^{-1} \binom{\fnum-1}{t-1} = \mu_\infty + \sum_{t=1}^{\fnum-1} \frac{\fnum-1}{\fnum-t} = \mu_\infty + (\fnum-1)h_{\fnum-1} = \mu_\infty + (\fnum-1)h_{\fnum-2} + 1,
\end{align*}
where $h_n := \sum_{k=1}^{n} 1/k$ is the harmonic number.
Further,
\begin{align*}
    \mu_{1,0} = \mu_\infty + \sum_{t=2}^{\fnum-1}\binom{\fnum-2}{t-1}^{-1} \binom{\fnum-2}{t-2} = \mu_\infty + \sum_{t=2}^{\fnum-1} \frac{t-1}{\fnum-t} =  \mu_\infty + \sum_{t=1}^{\fnum-2} \frac{\fnum-t-1}{t} = \mu_\infty + (\fnum-1) h_{\fnum-2}-(\fnum-2).
\end{align*}
Hence, $\mu_{1,1}-\mu_{1,0} = \fnum-1$.
Note that if $\mu_\infty \to \infty$, then $\tilde\mu_{1,0} \overset{\mu_\infty\to\infty}{\longrightarrow} \frac{1}{\fnum(\fnum-1)}$ and $\tilde\mu_{1,1} \overset{\mu_\infty\to\infty}{\longrightarrow} \frac 1 \fnum$, which proves the special case for $k=1$ of Conjecture~\ref{conj_inverse}.
We are now able to compute the weights
\begin{align*}
    (\mathbf{X}^T_1 \mathbf{W}_1 \mathbf{X}_1)^{-1}\mathbf{X}_1^T\mathbf{W}_1 = \tilde \mu_{1,0} \mathbf{J}\cdot\mathbf{X}_1^T\mathbf{W}_1 + (\tilde\mu_{1,1}-\tilde\mu_{1,0})\mathbf{I}\cdot \mathbf{X}_1^T\mathbf{W}_1.
\end{align*}
By Lemma~\ref{lem_inverse_sv}, $\tilde\mu_{1,1}-\tilde\mu_{1,0} = \frac{1}{\mu_{1,1}-\mu_{1,0}} = \frac{1}{\fnum-1}$.
With $(\mathbf{J} \mathbf{X}_1^T \mathbf{W}_1)_{ST} = \vert T \vert \mu_1(t)$ it follows
\begin{align*}
    &\left(\tilde \mu_{1,0} \mathbf{J}\cdot\mathbf{X}_1^T\mathbf{W}_1 + (\tilde\mu_{1,1}-\tilde\mu_{1,0})\mathbf{I}\cdot \mathbf{X}_1^T\mathbf{W}_1\right)_{iT} = \tilde\mu_{1,0} \vert T \vert \mu_1(t) + \frac{1}{\fnum-1}\mathbf{1}(i \in T) \mu_1(t)
    \\
    &= \frac{1}{\fnum-1} \mu_1(t) \left(\frac{-\mu_{1,0}\vert T \vert+ \mathbf{1}(i\in T)(\mu_{1,1}+(\fnum-1)\mu_{1,0})}{\mu_{1,1}+(\fnum-1)\mu_{1,0}} \right).
\end{align*}
Now, for $T$ with $ \mu_1(t) \neq \mu_\infty$, i.e.\ $1\leq \vert T \vert \leq \fnum-1$, we have that $\mu_1(t) \overset{\mu_\infty\to\infty}{\longrightarrow} \mu_1(t)$ and thus compute
\begin{align*}
     \left((\mathbf{X}^T_1 \mathbf{W}_1 \mathbf{X}_1)^{-1}\mathbf{X}_1^T\mathbf{W}_1\right)_{iT} \overset{\mu_\infty \to \infty}{\longrightarrow} \frac{1}{\fnum-1}\mu_1(t) \left(\mathbf{1}(i \in T)-\frac{\vert T \vert}{\fnum}\right),
\end{align*}
where we used that $\mu_\infty$ appears in $\mu_{1,0}$ and $\mu_{1,1}$.
Clearly, this directly yields the weight in the representation of Theorem 4.4 in \cite{Fumagalli.2023}.
For the cases, where $\mu_1(t)=\mu_\infty$, i.e.\ $T \in \{\emptyset,\fset\}$, we have 
zero weight for $\emptyset$, as $\vert T \vert =0$ and $\mathbf{1}(i\in T)=0$ for all $i \in \fset$.
For $T=\fset$, we have $\vert T \vert = \fnum$ and $\mathbf{1}(i\in T) =1$ for all $i \in \fset$.
Hence,
\begin{align*}
     \left((\mathbf{X}^T_1 \mathbf{W}_1 \mathbf{X}_1)^{-1}\mathbf{X}_1^T\mathbf{W}_1\right)_{i\fset} 
     = \frac{1}{\fnum-1}\mu_1(t) \left(\frac{\mu_{1,1}-\mu_{1,0}}{\mu_{1,1}+(\fnum-1)\mu_{1,0}} \right) 
    = \mu_1(t) \left(\frac{1}{\mu_{1,1}+(\fnum-1)\mu_{1,0}} \right) 
     \overset{\mu_\infty\to\infty}{\longrightarrow} \frac 1 \fnum,
\end{align*}
which is again the weight in Theorem 4.4 in \cite{Fumagalli.2023}.
\end{proof}

\subsection{Proof of Theorem 3.7}
\begin{proof}
Again, the solution of the optimization problem is explicitly given as
\begin{equation}
\phi^* = (\mathbf{X}^T_2 \mathbf{W}_2 \mathbf{X}_2)^{-1} \mathbf{X}^T_2 \mathbf{W}_2 \cdot \mathbf{y}_2.
\end{equation}
We show that for $\mu_\infty \to \infty$, we obtain the correct weight of the SII in $\left((\mathbf{X}^T_2 \mathbf{W}_2 \mathbf{X}_2)^{-1} \mathbf{X}^T_2 \mathbf{W}_2\right)_{ST}$ with $S=ij$.
We will show that those weights are equal to the representation given in Theorem 4.1 in \cite{Fumagalli.2023} with $\vert S \vert = 2$ as
\begin{equation}\label{eq_sii_weight}
    \phi^{\text{SII}}(S) =  \sum_{T \subseteq \fset}\nu(T) (-1)^{\vert S\vert-\vert T \cap S\vert} \frac{1}{\fnum-\vert S \vert +1}\binom{\fnum-\vert S \vert}{\vert T \vert -\vert T \cap S\vert}^{-1} =  \frac{1}{\fnum-1}\sum_{T \subseteq \fset}\nu(T) (-1)^{\vert T \cap S\vert}\binom{\fnum-2}{\vert T \vert-\vert T \cap S\vert}^{-1}.
\end{equation}
Note that for $T \in \{\emptyset,\fset\}$, we have $(\mathbf{y}_2)_T = \nu(T) - \hat\nu_1(T) = \nu(T)-\sum_{i \in T} \phi^{\text{SV}}(i) = 0$ and thus we do not have to consider these cases.
The proof is again structured as follows:
\begin{itemize}
    \item Find structure of $\mathbf{X}_2^T \mathbf{W}_2 \mathbf{X}_2$.
    \item Find conditions for inverse $(\mathbf{X}_2^T \mathbf{W}_2 \mathbf{X}_2)^{-1}$.
    \item Compute $\lim_{\mu_\infty\to\infty} (\mathbf{X}_2^T \mathbf{W}_2 \mathbf{X}_2)^{-1} \mathbf{X}_2^T \mathbf{W}_2$ 
    \begin{itemize} 
        \item for $T$ with finite weight, i.e.\ $2\leq \vert T \vert \leq \fnum-2$.
        \item for $T$ with infinite weight $\mu_2(t) = \mu_\infty$, i.e.\ $\vert T \vert = 1$ or $\vert T \vert = \fnum-1$.
    \end{itemize}
\end{itemize}

    Recall from Corollary~\ref{cor_nuhat} that $\lambda(2,0)=\lambda(2,2)=0$ and $\lambda(2,1)=-1/2$.
    Hence, 
    \begin{align*}
        \left(\mathbf{X}_2^T \mathbf{W}_2 \mathbf{X}_2\right)_{S_1S_2} = \frac 1 4\sum_{T\subseteq \fset}\mu_2(t) \mathbf{1}(\vert T\cap S_1\vert=1)\mathbf{1}(\vert T\cap S_2\vert = 1) =: \mu_{2,\vert S_1\cap S_2\vert}.
    \end{align*}
    For each $T$, the element may either be from $S_1\cap S_2$ or from $T\setminus S_1$ and $T\setminus S_2$, and thus
    \begin{align*}
        \mu_{2,0} &= \frac 1 4 \sum_{t=2}^{\fnum-2} \binom{\fnum-4}{t-2}^{-1} \binom{\fnum-4}{t-2}4 = \fnum-3 
        \\
        \mu_{2,1} &= \frac{1}{2} \mu_\infty + \frac{1}{4} \sum_{t=2}^{\fnum-2}\binom{\fnum-4}{t-2}^{-1}\left[\binom{\fnum-3}{t-1}+\binom{\fnum-3}{t-2}\right] 
        = \frac{1}{2} \mu_\infty + \frac{1}{4} \sum_{t=2}^{\fnum-2} \frac{(\fnum-2)(\fnum-3)}{(\fnum-t-1)(t-1)} 
        \\
        &= \frac{1}{2} \mu_\infty + \frac{\fnum-3}{4} \sum_{t=2}^{\fnum-2}\left(\frac{1}{t-1} + \frac{1}{\fnum-t-1}\right)
        = \frac{1}{2} \mu_\infty + \frac{\fnum-3}{2}h_{\fnum-3}.
        \\
        \mu_{2,2} &= \mu_\infty + \frac{1}{4} \sum_{t=2}^{\fnum-2}\binom{\fnum-4}{t-2}^{-1}\binom{\fnum-2}{t-1}2 = \mu_\infty +   (\fnum-3)h_{\fnum-3} = 2 \mu_{2,1},
    \end{align*}
    where for $\vert T \vert =1$ or $\vert T \vert = \fnum-1$ with weight $\mu_\infty$ no combination is found for $\vert S_1\cap S_2 \vert = \emptyset$, two for $\vert S_1\cap S_2\vert = 1$, i.e.\ $T= S_1\cap S_2$ and $T=\fset\setminus (S_1\cap S_2)$ and four for $S_1=S_2$, i.e.\ $T=i$ and $T=\fset\setminus i$ with $i \in S_1$.
    Due to this simplistic structure of $\mathbf{X}_2^T \mathbf{W}_2 \mathbf{X}_2$, we now introduce three square matrices indexed with all subsets of size $2$, i.e.\ $\binom{\fnum}{2}$ many.
    We introduce the matrix with all ones, $\mathbf{J}$, the identity $\mathbf{I}$ and the \emph{intersection matrix} $\mathbf{Q} := \mathbf{1}(S_1 \cap S_2 = \emptyset)$.
    Using these matrices, we can rewrite
    \begin{equation*}
        \mathbf{X}_2^T \mathbf{W}_2 \mathbf{X}_2 = \mu_{2,1} \mathbf{J} + (\mu_{2,0}-\mu_{2,1}) \mathbf{Q} + (\mu_{2,2}-\mu_{2,1}) \mathbf{I}.
    \end{equation*}
    We now first prove the following lemma.
    \begin{lemma}\label{lem_inverse_sii}
        Let $\eta_0,\eta_1,\eta_2>0$, $q_k := \binom{\fnum-4+k}{2}$ for $k=0,\dots,2$ and let square matrices $\mathbf{J},\mathbf{Q},\mathbf{I}$ indexed by all subsets of size $2$ of $\fset$. Then, $\left(\eta_1 \mathbf{J} + (\eta_0-\eta_1) \mathbf{Q} + (\eta_2-\eta_1) \mathbf{I}\right)^{-1} =  \tilde\eta_1 \mathbf{J} + (\tilde\eta_0-\tilde\eta_1) \mathbf{Q} + (\tilde\eta_2-\tilde\eta_1) \mathbf{I}$,  where
        \begin{align*}
            \tilde\eta_2 -\tilde\eta_1 &= \frac{(\eta_2-\eta_1)+(q_0-q_1)(\eta_0-\eta_1)}{(\eta_2-\eta_1)^2+(q_0-q_1)(\eta_2-\eta_1)(\eta_0-\eta_1)-(q_2-q_1)(\eta_0-\eta_1)^2},
            \\
            \tilde\eta_0 - \tilde\eta_1 &= - \frac{(\eta_0-\eta_1)(\tilde\eta_2-\tilde\eta_1)}{\eta_2-\eta_1 + (q_0-q_1)(\eta_0-\eta_1)}
            \\
            \tilde\eta_1 &= -\frac{(q_2\eta_1+q_1(\eta_0-\eta_1))(\tilde\eta_0-\tilde\eta_1)+\eta_1(\tilde\eta_2-\tilde\eta_1)}{(\binom{\fnum}{2}-1-q_2)\eta_1 + q_2\eta_0 + \eta_2}
        \end{align*}
        provided that the inverse exists and all denominators are unequal zero.

    \end{lemma}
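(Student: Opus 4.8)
The plan is to exploit that the three matrices $\mathbf{J}$, $\mathbf{Q}$, $\mathbf{I}$, all indexed by the $2$-subsets of $\fset$, span a commutative subalgebra of $\mathbb{R}^{\binom{\fnum}{2}\times\binom{\fnum}{2}}$ that is closed under matrix multiplication: it is the Bose--Mesner algebra of the Johnson scheme on $2$-subsets, with $\mathbf{A}_0=\mathbf{I}$, $\mathbf{A}_1=\mathbf{J}-\mathbf{Q}-\mathbf{I}$ the adjacency matrix of pairs meeting in exactly one point, and $\mathbf{A}_2=\mathbf{Q}$. Consequently, whenever $\mathbf{M}:=\eta_1\mathbf{J}+(\eta_0-\eta_1)\mathbf{Q}+(\eta_2-\eta_1)\mathbf{I}$ is invertible, its inverse again lies in the span, which motivates the ansatz $\tilde{\mathbf{M}}:=\tilde\eta_1\mathbf{J}+(\tilde\eta_0-\tilde\eta_1)\mathbf{Q}+(\tilde\eta_2-\tilde\eta_1)\mathbf{I}$; it then remains to pin down the three scalars so that $\mathbf{M}\tilde{\mathbf{M}}=\mathbf{I}$, and one finishes by verifying this identity directly (since the matrices are square, $\mathbf{M}\tilde{\mathbf{M}}=\mathbf{I}$ certifies $\tilde{\mathbf{M}}=\mathbf{M}^{-1}$).

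First I would record the multiplication table by counting $2$-subsets with prescribed intersection patterns: $\mathbf{J}^2=\binom{\fnum}{2}\mathbf{J}$, $\mathbf{J}\mathbf{Q}=\mathbf{Q}\mathbf{J}=\binom{\fnum-2}{2}\mathbf{J}=q_2\mathbf{J}$, and the crucial identity
\[
\mathbf{Q}^2=q_1\mathbf{J}+(q_0-q_1)\mathbf{Q}+(q_2-q_1)\mathbf{I},
\]
because the number of $2$-subsets disjoint from both $S_1$ and $S_2$ equals $q_2=\binom{\fnum-2}{2}$, $q_1=\binom{\fnum-3}{2}$, or $q_0=\binom{\fnum-4}{2}$ according to whether $\vert S_1\cap S_2\vert$ is $2$, $1$, or $0$; this is exactly why $q_k=\binom{\fnum-4+k}{2}$ is the right definition, and it shows $\mathbf{Q}^2$ has the same shape as $\mathbf{M}$. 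Expanding $\mathbf{M}\tilde{\mathbf{M}}$ with this table and collecting the coefficients of $\mathbf{I}$, $\mathbf{Q}$, $\mathbf{J}$ turns $\mathbf{M}\tilde{\mathbf{M}}=\mathbf{I}$ into three scalar equations. The $\mathbf{I}$- and $\mathbf{Q}$-equations involve only $\tilde\eta_2-\tilde\eta_1$ and $\tilde\eta_0-\tilde\eta_1$; solving the $\mathbf{Q}$-equation for $\tilde\eta_0-\tilde\eta_1$ in terms of $\tilde\eta_2-\tilde\eta_1$ and substituting into the $\mathbf{I}$-equation yields the stated closed form for $\tilde\eta_2-\tilde\eta_1$, and hence for $\tilde\eta_0-\tilde\eta_1$. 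The remaining $\mathbf{J}$-equation is linear in $\tilde\eta_1$; after simplifying $\binom{\fnum}{2}\eta_1+(\eta_0-\eta_1)q_2+(\eta_2-\eta_1)=(\binom{\fnum}{2}-1-q_2)\eta_1+q_2\eta_0+\eta_2$ (using $\binom{\fnum}{2}=1+2(\fnum-2)+q_2$), it gives the last formula, and the denominators arising in the elimination are precisely the quantities the lemma asks to be nonzero.

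The only step that is not pure bookkeeping is the combinatorial evaluation of $\mathbf{Q}^2$, i.e.\ the closure of the algebra; I expect that, together with keeping the index and sign bookkeeping in the $\mathbf{J}$-coefficient straight, to be the main (and rather mild) obstacle. A shorter but less self-contained alternative is simultaneous diagonalization: $\mathbf{M}$ acts as a scalar on each of the three eigenspaces of the Johnson scheme, with eigenvalues $\binom{\fnum}{2}\eta_1+(\eta_0-\eta_1)q_2+(\eta_2-\eta_1)$ on the all-ones line, $(\eta_2-\eta_1)-(\fnum-3)(\eta_0-\eta_1)$ on the $(\fnum-1)$-dimensional eigenspace, and $(\eta_2-\eta_1)+(\eta_0-\eta_1)$ on the remaining one, these being read off from the Kneser-graph eigenvalues $q_2$, $-(\fnum-3)$, $1$ of $\mathbf{Q}$; inverting these scalars and re-expressing the result in the $\{\mathbf{J},\mathbf{Q},\mathbf{I}\}$ basis reproduces the same formulas and makes transparent exactly when $\mathbf{M}$ is invertible.
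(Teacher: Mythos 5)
Your proposal is correct and follows essentially the same route as the paper's proof: the ansatz that the inverse lies in the span of $\mathbf{J},\mathbf{Q},\mathbf{I}$, the key product rule $\mathbf{Q}^2 = q_1\mathbf{J}+(q_0-q_1)\mathbf{Q}+(q_2-q_1)\mathbf{I}$ obtained by the same intersection count, and elimination of the three scalar equations in the same order ($\mathbf{Q}$-equation for $\tilde\eta_0-\tilde\eta_1$, then the $\mathbf{I}$-equation for $\tilde\eta_2-\tilde\eta_1$, then the $\mathbf{J}$-equation for $\tilde\eta_1$). The Bose--Mesner/Johnson-scheme framing and the spectral alternative via the Kneser-graph eigenvalues are pleasant additions that the paper does not make explicit, but they do not change the argument.
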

        \begin{remark}
        Note that this system of equations can be directly solved by computing the first equation.
        Then, inserting this solution into the second equation and solving it explicitly.
        Finally inserting both results into the third equation to compute $\tilde\eta_1$, and consequently, $\tilde\eta_0$ and $\tilde\eta_2$.
        However, we do not need this explicit structure and will rely on the above conditions to prove our result.
        \end{remark}
    \begin{proof}[Proof of Lemma A.2.]
        We explicitly compute the product
        \begin{align*}
        \left(\eta_1 \mathbf{J} + (\eta_0-\eta_1) \mathbf{Q} + (\eta_2-\eta_1) \mathbf{I}\right) \left(\tilde\eta_1 \mathbf{J} + (\tilde\eta_0-\tilde\eta_1) \mathbf{Q} + (\tilde\eta_2-\tilde\eta_1) \mathbf{I}\right).
        \end{align*}
        In this computation the products $\mathbf{J}^2=\binom{\fnum}{2} \mathbf{J}$, $\mathbf{J} \mathbf{I} = \mathbf{I}\mathbf{J}= \mathbf{J}$, $\mathbf{I}^2 = \mathbf{I}$, $\mathbf{Q}\mathbf{I} = \mathbf{I}\mathbf{Q} = \mathbf{Q}$ are trivial to compute.
        Additionally,
        \begin{equation*}
            (\mathbf{Q}^2)_{S_1S_2} = \sum_{S\subseteq \fset}^{\vert S \vert = 2} \mathbf{1}(S_1\cap S = \emptyset) \mathbf{1}( S \cap S_2 = \emptyset) = \sum_{S\subseteq \fset}^{\vert S \vert = 2} \mathbf{1}((S_1\cup S_2) \cap S = \emptyset) = \binom{\fnum-\vert S_1 \cup S_2\vert}{2} = q_{\vert S_1\cap S_2\vert},
        \end{equation*}
        since $\vert S_1\cup S_2\vert = \vert S_1\vert + \vert S_2\vert - \vert S_1\cap S_2\vert = 4-\vert S_1\cap S_2\vert$ and $\mathbf{J}\mathbf{Q}=\mathbf{Q}\mathbf{J} = q_2 \mathbf{J}$.
        Hence, we can write $\mathbf{Q}^2 = q_1 \mathbf{J} + (q_0-q_1)\mathbf{Q} + (q_2-q_1) \mathbf{I}$.
        We then collect all coefficients  of $\mathbf{J}, \mathbf{Q},\mathbf{I}$ in the above product separately as
        \begin{align*}
            &\left(\eta_1 \mathbf{J} + (\eta_0-\eta_1) \mathbf{Q} + (\eta_2-\eta_1) \mathbf{I}\right) \left(\tilde\eta_1 \mathbf{J} + (\tilde\eta_0-\tilde\eta_1) \mathbf{Q} + (\tilde\eta_2-\tilde\eta_1) \mathbf{I}\right) 
            \\
            &=\mathbf{J}\left(\underbrace{\eta_1 \tilde\eta_1 \binom{\fnum}{2}}_{\text{from } \mathbf{J}^2}+\underbrace{q_2(\eta_0-\eta_1)\tilde\eta_1}_{\text{from } \mathbf{QJ}} + \underbrace{q_2\eta_1(\tilde\eta_0-\tilde\eta_1)}_{\text{from } \mathbf{JQ}} + \underbrace{\eta_1(\tilde\eta_2-\tilde\eta_1)}_{\text{from } \mathbf{JI}}+ \underbrace{\tilde\eta_1(\eta_2-\eta_1)}_{\text{from } \mathbf{IJ}} + \underbrace{q_1 (\eta_0-\eta_1)(\tilde\eta_0-\tilde\eta_1)}_{\text{from } \mathbf{Q}^2}\right)
            \\
            &+ \mathbf{Q}\left((\underbrace{\eta_2-\eta_1)(\tilde\eta_0-\tilde\eta_1)}_{\text{from }\mathbf{IQ}}+\underbrace{(\eta_0-\eta_1)(\tilde\eta_2-\tilde\eta_1)}_{\text{from } \mathbf{QI}} + \underbrace{(q_0-q_1)(\eta_0-\eta_1)(\tilde\eta_0-\tilde\eta_1)}_{\text{from } \mathbf{Q}^2}\right)
            \\
            &+ \mathbf{I}\left(\underbrace{(\eta_2-\eta_1)(\tilde\eta_2-\tilde\eta_1)}_{\text{from } \mathbf{I}^2} + \underbrace{(q_2-q_1)(\eta_0-\eta_1)(\tilde\eta_0-\tilde\eta_1)}_{\text{from } \mathbf{Q}^2} \right)
        \end{align*}

        Clearly, the coefficient of $\mathbf{J}$ and $\mathbf{Q}$ should be zero, whereas the coefficient of $\mathbf{I}$ should be equal to one to yield the identity matrix. 
        We thus obtain the following system of equations for the coefficients of $\mathbf{J}, \mathbf{Q}$ and $\mathbf{I}$, respectively:
        \begin{align*}
            0 &= (\tilde\eta_0-\tilde\eta_1) \cdot (q_2\eta_1 + q_1(\eta_0-\eta_1)) &&+ (\tilde\eta_2-\tilde\eta_1)\cdot \eta_1  &&+\tilde\eta_1 \cdot ((\binom{\fnum}{2}-1-q_2)\eta_1+q_2\eta_0+\eta_2)
            \\
            0 &= (\tilde\eta_0-\tilde\eta_1) \cdot ((\eta_2-\eta_1) + (q_0-q_1)(\eta_0-\eta_1)) &&+ (\tilde\eta_2-\tilde\eta_1) \cdot (\eta_0-\eta_1)
            \\
            1 &= (\tilde\eta_0-\tilde\eta_1) \cdot (q_2-q_1)(\eta_0-\eta_1) &&+ (\tilde\eta_2-\tilde\eta_1) \cdot (\eta_2-\eta_1).
        \end{align*}
        Solving the second equation for $\tilde\eta_0-\tilde\eta_1$ directly yields the second condition in Lemma~\ref{lem_inverse_sii}.
        Inserting this result into the third equation yields
        \begin{align*}
            1 &= (\tilde\eta_2-\tilde\eta_1) \cdot \left((\eta_2-\eta_1) - \frac{(q_2-q_1)(\eta_0-\eta_1)^2}{\eta_2-\eta_1+(q_0-q_1)(\eta_0-\eta_1)}\right) 
            \\
            &= (\tilde\eta_2-\tilde\eta_1) \cdot \left(\frac{(\eta_2-\eta_1)^2 + (q_0-q_1)(\eta_0-\eta_1)(\eta_2-\eta_1)-(q_2-q_1)(\eta_0-\eta_1)^2}{\eta_2-\eta_1+(q_0-q_1)(\eta_0-\eta_1)}\right),
        \end{align*}
        where solving for $\tilde\eta_2-\tilde\eta_1$ yields the first condition in Lemma~\ref{lem_inverse_sii}.
        Note that the value of $\tilde\eta_2-\tilde\eta_1$ can be explicitly computed, which implies and explicit representation of $\tilde\eta_0-\tilde\eta_1$.
        Lastly, treating $\tilde\eta_2-\tilde\eta_1$ and $\tilde\eta_0-\tilde\eta_1$ as known and solving the first equation for $\tilde\eta_1$ yields the third condition in Lemma~\ref{lem_inverse_sii}.
    \end{proof}
    Having established the conditions for the inverse, we apply Lemma~\ref{lem_inverse_sii} to $\mathbf{X}_2^T \mathbf{W}_2 \mathbf{X}_2 = \mu_{2,1} \mathbf{J} + (\mu_{2,0}-\mu_{2,1}) \mathbf{Q} + (\mu_{2,2}-\mu_{2,1}) \mathbf{I}$ to obtain $(\mathbf{X}_2^T \mathbf{W}_2 \mathbf{X}_2)^{-1} = \tilde\mu_{2,1} \mathbf{J} + (\tilde\mu_{2,0}-\tilde\mu_{2,1}) \mathbf{Q} + (\tilde\mu_{2,2}-\tilde\mu_{2,1}) \mathbf{I}$.
    For the following calculations the explicit form of $\mu_{2,\ell}$ and the following relations will be used
    \begin{align*}
       &q_2 - q_1 = \fnum-3, &&q_0-q_1=-(\fnum-4), &&q_2+q_1 = (\fnum-3)^2, &&\mu_{2,2}-\mu_{2,1}=\mu_{2,1}
    \end{align*}
    We now further simplify the terms using the structure of $\mu$ and $q$ as
    \begin{align}
        \tilde\mu_{2,2}-\tilde\mu_{2,1} &=  \frac{(\mu_{2,2}-\mu_{2,1})+(q_0-q_1)(\mu_{2,0}-\mu_{2,1})}{(\mu_{2,2}-\mu_{2,1})^2+(q_0-q_1)(\mu_{2,2}-\mu_{2,1})(\mu_{2,0}-\mu_{2,1})-(q_2-q_1)(\mu_{2,0}-\mu_{2,1})^2},\notag
        \\
        &=\frac{\mu_{2,1}-(\fnum-4)(\mu_{2,0}-\mu_{2,1})}{\mu_{2,1}^2-(\fnum-4)\mu_{2,1}(\mu_{2,0}-\mu_{2,1})-(\fnum-3)(\mu_{2,0}-\mu_{2,1})^2}\notag
        \\
        &=\frac{(\fnum-3)\mu_{2,1}-(\fnum-4)\mu_{2,0}}{\mu_{2,1}^2+(\mu_{2,1}-(\fnum-3)\mu_{2,0})(\mu_{2,0}-\mu_{2,1})} \notag
        \\
        &= \frac{(\fnum-3)\mu_{2,1}-(\fnum-4)\mu_{2,0}}{(\fnum-2)\mu_{2,1}\mu_{2,0}-(\fnum-3)\mu_{2,0}^2}.\label{eq_mu21}
    \end{align}
    We continue with the second equation and the previous result as
    \begin{align}
        \tilde\mu_{2,0}-\tilde\mu_{2,1} &= - \frac{(\mu_{2,0}-\mu_{2,1})(\tilde\mu_{2,2}-\tilde\mu_{2,1})}{\mu_{2,2}-\mu_{2,1} + (q_0-q_1)(\mu_{2,0}-\mu_{2,1})} \notag
        \\
        &=-\frac{(\mu_{2,0}-\mu_{2,1})}{\mu_{2,1} -(\fnum-4)(\mu_{2,0}-\mu_{2,1})} \cdot (\tilde\mu_{2,2}-\tilde\mu_{2,1}) \notag
        \\
        &= -\frac{\mu_{2,0}-\mu_{2,1}}{(\fnum-3)\mu_{2,1}-(\fnum-4)\mu_{2,0}} \cdot (\tilde\mu_{2,2}-\tilde\mu_{2,1}) \label{eq_mu01}
    \end{align}
    Lastly, the third equation yields
    \begin{align}
        \tilde\mu_{2,1} &= -\frac{(q_2\mu_{2,1}+q_1(\mu_{2,0}-\mu_{2,1}))(\tilde\mu_{2,0}-\tilde\mu_{2,1})+\mu_{2,1}(\tilde\mu_{2,2}-\tilde\mu_{2,1})}{(\binom{\fnum}{2}-1-q_2)\mu_{2,1} + q_2\mu_{2,0} + \mu_{2,2}} \notag
        \\
        &= -\frac{((\fnum-3)\mu_{2,1}+\binom{\fnum-3}{2}\mu_{2,0})(\tilde\mu_{2,0}-\tilde\mu_{2,1})+\mu_{2,1}(\tilde\mu_{2,2}-\tilde\mu_{2,1})}{(\binom{\fnum}{2}+1-\binom{\fnum-2}{2})\mu_{2,1} + \binom{\fnum-2}{2}\mu_{2,0}}\notag
        \\
        &= -\frac{(\fnum-3)\mu_{2,1}+\binom{\fnum-3}{2}\mu_{2,0}}{2(\fnum-1)\mu_{2,1} + \binom{\fnum-2}{2}\mu_{2,0}} \cdot (\tilde\mu_{2,0}-\tilde\mu_{2,1}) -\frac{\mu_{2,1}}{2(\fnum-1)\mu_{2,1} + \binom{\fnum-2}{2}\mu_{2,0}} \cdot (\tilde\mu_{2,2}-\tilde\mu_{2,1}) \label{eq_mu1}.
    \end{align}
    Since our goal is to compute $\lim_{\mu_\infty\to\infty} (\mathbf{X}_2^T \mathbf{W}_2 \mathbf{X}_2)^{-1} \mathbf{X}_2^T \mathbf{W}_2$ we distinguish between the finite subsets, where $\mu_2(t) \neq \mu_\infty$, i.e.\ $2 \leq \vert T \vert \leq \fnum-2$ and the infinite subsets, where $\mu_2(t) = \mu_\infty$, i.e.\ $\vert T \vert = 1$ or $\vert T \vert = \fnum-1$.
    \paragraph{$T$ with finite weight.}
    Clearly, for the $T$ with $\mu_2(t) \neq \mu_\infty$, we have that $\lim_{\mu_\infty\to\infty} (\mathbf{X}_2^T \mathbf{W}_2)_{TS} = (\mathbf{X}_2^T \mathbf{W}_2)_{TS}$ and thus we can compute the limit of $(\mathbf{X}_2^T\mathbf{W}_2\mathbf{X}_2)^{-1}$ separately.
    Recall that $\mu_1 \propto \mu_\infty$ and $\mu_0=\fnum-3$ does not depend on $\mu_\infty$.
    Hence, by \cref{eq_mu21}, \cref{eq_mu01} and \cref{eq_mu1}, it follows for $\mu_\infty \to \infty$
    \begin{align*}
        \tilde\mu_{2,2}-\tilde\mu_{2,1} \to \frac{1}{\fnum-2}, &&\tilde\mu_{2,0}-\tilde\mu_{2,1} \to \frac{1}{(\fnum-3)(\fnum-2)}, &&\tilde\mu_{2,1} \to -\frac{1}{(\fnum-1)(\fnum-2)}.
    \end{align*}
    Therefore, this proves the special case of Conjecture~\ref{conj_inverse} for $k=2$ as
    \begin{align*}
        &\tilde\mu_{2,1} \to -\frac{1}{(\fnum-1)(\fnum-2)} &&\tilde\mu_{2,2} \to \frac{1}{\fnum-1} &&\tilde\mu_{2,0} \to \frac{2}{(\fnum-1)(\fnum-2)(\fnum-3)}.
    \end{align*}
    With $\lambda(2,1)=-1/2$ it holds, $(\mathbf{J}\mathbf{X}_2^T\mathbf{W}_2)_{ST}=-\frac 1 2 t (\fnum-t)\mu_2(t)$, $(\mathbf{Q}\mathbf{X}_2^T\mathbf{W}_2)_{ST}=-\frac 1 2 (t-\vert T \cap S\vert)(\fnum-\vert T \cup S\vert)\mu_2(t)$ and $(\mathbf{I}\mathbf{X}_2^T\mathbf{W}_2)_{ST} = -\frac 1 2\mathbf{1}(\vert T \cap S \vert = 1)\mu_2(t)$.
    We can thus compute
    \begin{align*}
        &\left((\mathbf{X}_2^T \mathbf{W}_2 \mathbf{X}_2)^{-1} \mathbf{X}_2^T \mathbf{W}_2\right)_{ST} 
        \\
        &\overset{\mu_\infty \to \infty}{\longrightarrow} 
        \mu_2(t) \left(\underbrace{\frac {t (\fnum-t)} {2(\fnum-1)(\fnum-2)}}_{\text{from } \tilde\mu_{2,1} \mathbf{J}\mathbf{X}_2^T\mathbf{W}_2}
        -\underbrace{\frac{(t-\vert T \cap S \vert)(\fnum-t-2 + \vert T \cap S \vert)}{2(\fnum-3)(\fnum-2)}}_{\text{from } (\tilde\mu_{2,0}-\tilde\mu_{2,1})\mathbf{Q}\mathbf{X}_2^T\mathbf{W}_2} 
        - \underbrace{\frac{\mathbf{1}(\vert T \cap S \vert = 1)}{2(\fnum-2)}}_{\text{from } (\tilde\mu_{2,2}-\tilde\mu_{2,1})\mathbf{I}\mathbf{X}_2^T\mathbf{W}_2}\right)
        \\
        &= \frac{(\fnum-t-2)!(t-2)!}{(\fnum-4)!} \cdot
        \begin{cases}
            \frac {t (\fnum-t)} {2(\fnum-1)(\fnum-2)}- \frac{t (\fnum-t-2)}{2(\fnum-3)(\fnum-2)} = \frac{t((\fnum-t)(\fnum-1)-(\fnum-t-2)(\fnum-3))}{2(\fnum-1)(\fnum-2)(\fnum-3)} = \frac{t(t-1)}{(\fnum-1)(\fnum-2)(\fnum-3)}&\text{, if } T \cap S = \emptyset
            \\
            \frac{(\fnum-3)t(\fnum-t)-(\fnum-1)(t-1)(\fnum-t-1)-(\fnum-1)(\fnum-3)}{2(\fnum-1)(\fnum-2)(\fnum-3)} = -\frac{(t-1)(\fnum-t-1)}{(\fnum-1)(\fnum-2)(\fnum-3)} &\text{, if } \vert T \cap S \vert = 1
            \\
            \frac {t (\fnum-t)} {2(\fnum-1)(\fnum-2)}-\frac{(t-2)(\fnum-t)}{2(\fnum-3)(\fnum-2)} = \frac{(\fnum-t)((\fnum-3)t-(t-2)(\fnum-1))}{2(\fnum-1)(\fnum-2)(\fnum-3)} = \frac{(\fnum-t)(\fnum-t-1)}{(\fnum-1)(\fnum-2)(\fnum-3)}&\text{, if } \vert T \cap S \vert = 2
        \end{cases}
        \\
        &= (-1)^{\vert T \cap S\vert} \frac{(t-\vert T \cap S\vert)!(\fnum-t-2+\vert T \cap S \vert)!}{(\fnum-1)!} 
        = \frac{(-1)^{\vert T \cap S \vert }}{\fnum-1}\binom{\fnum-2}{t-\vert T\cap S\vert}^{-1},
    \end{align*}
    which yields the SII weights in \cref{eq_sii_weight} and concludes the proof for this case.   
        
    \paragraph{$T$ with infinite weight.}
    Without loss of generality, we consider $T=i$, since $(\mathbf{X}_2)_{iS}=(\mathbf{X}_2)_{(\fset\setminus i) S}$, and $(\mathbf{X}_2)_{iS}=\lambda(2,1)=-1/2$ for all $S=ij$ with $i \neq j \in \fset$, and zero otherwise.
    It holds $(\mathbf{J}\mathbf{X}_2^T\mathbf{W}_2)_{ST}=-\frac 1 2 (\fnum-1)\mu_\infty$, $(\mathbf{Q}\mathbf{X}_2^T\mathbf{W}_2)_{ST}=-\frac 1 2 \mathbf{1}(i\notin S)(\fnum-3)\mu_\infty$ and $(\mathbf{I}\mathbf{X}_2^T\mathbf{W}_2)_{ST} = -\frac 1 2\mathbf{1}(i \in S)\mu_\infty$.
    \begin{align*}
       \left((\mathbf{X}_2^T \mathbf{W}_2 \mathbf{X}_2)^{-1} \mathbf{X}_2^T \mathbf{W}_2\right)_{Si} 
        &= -\frac 1 2 \mu_\infty \left((\fnum-1)\tilde\mu_{2,1}+\mathbf{1}(i \notin S)(\fnum-3)(\tilde\mu_{2,0}-\tilde\mu_{2,1})+\mathbf{1}(i \in S)(\tilde\mu_{2,2}-\tilde\mu_{2,1}) \right)
       \\
       &=-\frac 1 2 \mu_\infty \cdot
       \begin{cases}
           (\fnum-1)\tilde\mu_{2,1} + (\fnum-3)(\tilde\mu_{2,0}-\tilde\mu_{2,1}) &\text{, if } i \notin S
           \\
           (\fnum-1)\tilde\mu_{2,1} + (\tilde\mu_{2,2}-\tilde\mu_{2,1} )&\text{, if } i \in S
       \end{cases}
    \end{align*}
    We now first explicitly compute $\tilde\mu_{2,1}$ with \cref{eq_mu1} and \cref{eq_mu01} as
    \begin{align}
        \tilde\mu_{2,1} 
        &\overset{(\ref{eq_mu1})}{=} -\frac{(\fnum-3)\mu_{2,1}+\binom{\fnum-3}{2}\mu_{2,0}}{2(\fnum-1)\mu_{2,1} + \binom{\fnum-2}{2}\mu_{2,0}} \cdot (\tilde\mu_{2,0}-\tilde\mu_{2,1}) -\frac{\mu_{2,1}}{2(\fnum-1)\mu_{2,1} + \binom{\fnum-2}{2}\mu_{2,0}} \cdot (\tilde\mu_{2,2}-\tilde\mu_{2,1}) \notag
        \\
        &\overset{(\ref{eq_mu01})}{=} \left(\frac{(\fnum-3)\mu_{2,1}+\binom{\fnum-3}{2}\mu_{2,0}}{2(\fnum-1)\mu_{2,1} + \binom{\fnum-2}{2}\mu_{2,0}} 
        \cdot
        \frac{\mu_{2,0}-\mu_{2,1}}{(\fnum-3)\mu_{2,1}-(\fnum-4)\mu_{2,0}} 
        -\frac{\mu_{2,1}}{2(\fnum-1)\mu_{2,1} + \binom{\fnum-2}{2}\mu_{2,0}}\right) \cdot (\tilde\mu_{2,2}-\tilde\mu_{2,1}) \notag
        \\
        &=\left(\frac{((\fnum-3)\mu_{2,1}+\binom{\fnum-3}{2}\mu_{2,0})
        \cdot (\mu_{2,0}-\mu_{2,1})
        - \mu_{2,1}((\fnum-3)\mu_{2,1}-(\fnum-4)\mu_{2,0})}
        {(2(\fnum-1)\mu_{2,1} + \binom{\fnum-2}{2}\mu_{2,0})
        \cdot((\fnum-3)\mu_{2,1}-(\fnum-4)\mu_{2,0})} 
        \right) \cdot (\tilde\mu_{2,2}-\tilde\mu_{2,1}) \notag
        \\
        &= \left(\frac{((\fnum-3)\mu_{2,1}+\binom{\fnum-3}{2}\mu_{2,0})
        \cdot (\mu_{2,0}-\mu_{2,1})
        - \mu_{2,1}((\fnum-3)\mu_{2,1}-(\fnum-4)\mu_{2,0})}
        {(2(\fnum-1)\mu_{2,1} + \binom{\fnum-2}{2}\mu_{2,0})
        \cdot((\fnum-3)\mu_{2,1}-(\fnum-4)\mu_{2,0})} 
        \right) \cdot (\tilde\mu_{2,2}-\tilde\mu_{2,1}) \notag
        \\
        &= \left(\frac{
        -2(\fnum-3)\mu_{2,1}^2
        +((\fnum-4)+(\fnum-3)-\binom{\fnum-3}{2})\mu_{2,1}\mu_{2,0}
        +\binom{\fnum-3}{2} \mu_{2,0}^2}
        {(2(\fnum-1)\mu_{2,1} + \binom{\fnum-2}{2}\mu_{2,0})
        \cdot((\fnum-3)\mu_{2,1}-(\fnum-4)\mu_{2,0})} 
        \right) \cdot (\tilde\mu_{2,2}-\tilde\mu_{2,1}) \notag
        \\
        &=:
        \frac{\gamma^\uparrow}{\gamma^{\downarrow}} \cdot (\tilde\mu_{2,2}-\tilde\mu_{2,1}) \label{eq_mu121}   
    \end{align}
    Utilizing the new notation, we obtain for $i \in S$
    \begin{align*}
        \left((\mathbf{X}_2^T \mathbf{W}_2 \mathbf{X}_2)^{-1} \mathbf{X}_2^T \mathbf{W}_2\right)_{Si} 
        &= -\frac 1 2 \mu_\infty((\fnum-1)\tilde\mu_{2,1} + (\tilde\mu_{2,2}-\tilde\mu_{2,1})) 
        \overset{(\ref{eq_mu121})}{=} -\frac 1 2 \mu_\infty((\fnum-1)\frac{\gamma^\uparrow}{\gamma^\downarrow}+1) (\tilde\mu_{2,2}-\tilde\mu_{2,1}) 
        \\
        &= -\frac 1 2 \mu_\infty\frac{(\fnum-1)\gamma^\uparrow+\gamma^\downarrow}{\gamma^\downarrow}(\tilde\mu_{2,2}-\tilde\mu_{2,1}).
    \end{align*}
    In the following, we use $\approx$ to indicate that terms grow similarly for $\mu_\infty \to \infty$.
    We further observe that the terms with $\mu_{2,1}^2$ vanish in $(\fnum-1)\gamma^\uparrow+\gamma^\downarrow$, and
    \begin{align*}
        (\fnum-1)\gamma^\uparrow+\gamma^\downarrow &\approx \mu_{2,1} \left((\fnum-1)((\fnum-4)+(\fnum-3)-\binom{\fnum-3}{2})\mu_{2,0} + \mu_{2,0}(\binom{\fnum-2}{2}(\fnum-3)-2(\fnum-1)(\fnum-4))\right)
        \\
        &= \mu_{2,1}\mu_{2,0}((\fnum-1) - (\fnum-1)\binom{\fnum-3}{2} +(\fnum-3)\binom{\fnum-2}{2})
        \\
        &= \mu_{2,1}\mu_{2,0}((\fnum-1) - (\fnum-3)(\binom{\fnum-3}{2}-\binom{\fnum-2}{2})-2\binom{\fnum-3}{2})
        \\
        &= \mu_{2,1}\mu_{2,0}((\fnum-1) - (\fnum-3)(\fnum-3)-(\fnum-3)(\fnum-2))
        \\
        &= 2\mu_{2,1}\mu_{2,0}(\fnum-2).
    \end{align*}
    Clearly, $\gamma^\downarrow \approx 2(\fnum-1)(\fnum-3)\mu_{2,1}^2$ are the dominating terms.
    Hence, with $\mu_{2,1} \approx \frac 1 2 \mu_\infty$, we have $\frac 1 2\mu_\infty\frac{\mu_{2,1}}{\mu_{2,1}^2} \to 1$ and with $\mu_{2,0}=(\fnum-3)$ we compare the coefficients of $\mu_{2,1}$ to obtain
   \begin{align*}
        \lim_{\mu_\infty \to \infty} \left((\mathbf{X}_2^T \mathbf{W}_2 \mathbf{X}_2)^{-1} \mathbf{X}_2^T \mathbf{W}_2\right)_{Si} 
        = -\underbrace{\frac{ 2(\fnum-3)(\fnum-2)}{2(\fnum-1)(\fnum-3)}}_{\text{from } \frac 1 2 \mu_\infty \frac{(\fnum-1)\gamma^\uparrow + \gamma^\downarrow}{\gamma^\downarrow}} \cdot \underbrace{\frac{1}{\fnum-2}}_{\text{from }\tilde\mu_{2,2}-\tilde\mu_{2,1}} 
        = - \frac 1 {\fnum-1} = \frac{(-1)^{\vert T \cap S \vert}}{\fnum-1}\binom{\fnum-2}{t-\vert T \cap S\vert}^{-1},
   \end{align*}
   which proves the convergence to the SII weight for the case where $T=i$ and $i \in S$.
   
    To prove the case for $T=i$ and $i \notin S$, we first represent $\tilde\mu_{2,1}$ with $\tilde\mu_{2,0}-\tilde\mu_{2,1}$. 
    Using the reverse relation in \cref{eq_mu01} and \cref{eq_mu1}, we obtain
    \begin{align}
       \tilde\mu_{2,1} 
        &\overset{(\ref{eq_mu1})}{=} -\frac{(\fnum-3)\mu_{2,1}+\binom{\fnum-3}{2}\mu_{2,0}}{2(\fnum-1)\mu_{2,1} + \binom{\fnum-2}{2}\mu_{2,0}} \cdot (\tilde\mu_{2,0}-\tilde\mu_{2,1}) -\frac{\mu_{2,1}}{2(\fnum-1)\mu_{2,1} + \binom{\fnum-2}{2}\mu_{2,0}} \cdot (\tilde\mu_{2,2}-\tilde\mu_{2,1}) \notag
        \\ 
        &\overset{(\ref{eq_mu01})}{=} \left(-\frac{(\fnum-3)\mu_{2,1}+\binom{\fnum-3}{2}\mu_{2,0}}{2(\fnum-1)\mu_{2,1} + \binom{\fnum-2}{2}\mu_{2,0}} +\frac{\mu_{2,1}}{2(\fnum-1)\mu_{2,1} + \binom{\fnum-2}{2}\mu_{2,0}} \cdot \frac{(\fnum-3)\mu_{2,1}-(\fnum-4)\mu_{2,0}}{\mu_{2,0}-\mu_{2,1}}  \cdot \right)(\tilde\mu_{2,0}-\tilde\mu_{2,1}) \notag
        \\
        &= \frac{-(\mu_{2,0}-\mu_{2,1})\cdot ((\fnum-3)\mu_{2,1}+\binom{\fnum-3}{2}\mu_{2,0})
        +\mu_{2,1} \cdot ((\fnum-3)\mu_{2,1}-(\fnum-4)\mu_{2,0})}
        {(2(\fnum-1)\mu_{2,1} + \binom{\fnum-2}{2}\mu_{2,0})\cdot (\mu_{2,0}-\mu_{2,1})}
        \cdot (\tilde\mu_{2,0}-\tilde\mu_{2,1}) \notag
        \\
        &= \frac{\mu_{2,1}^2 2(\fnum-3)
        -\mu_{2,1}\mu_{2,0}((\fnum-3)-\binom{\fnum-3}{2}+(\fnum-4))
        -\mu_{2,0}^2\binom{\fnum-3}{2}}
        {-\mu_{2,1}^2 2(\fnum-1)
        +\mu_{2,1}\mu_{2,0}(2(\fnum-1)-\binom{\fnum-2}{2})+\mu_{2,0}^2\binom{\fnum-2}{2}}
        \cdot (\tilde\mu_{2,0}-\tilde\mu_{2,1}) \notag
        \\
        &= \frac{\mu_{2,1}^2 2(\fnum-3)
        -\mu_{2,1}\mu_{2,0}((\fnum-3)-\binom{\fnum-3}{2}+(\fnum-4))
        -\mu_{2,0}^2\binom{\fnum-3}{2}}
        {-\mu_{2,1}^2 2(\fnum-1)
        +\mu_{2,1}\mu_{2,0}(2(\fnum-1)-\binom{\fnum-2}{2})+\mu_{2,0}^2\binom{\fnum-2}{2}}
        \cdot (\tilde\mu_{2,0}-\tilde\mu_{2,1}) \notag
        \\
        &=: \frac{\tau^\uparrow}{\tau^\downarrow} \cdot (\tilde\mu_{2,0}-\tilde\mu_{2,1}) \label{eq_mu101}.
    \end{align}
   Again, utilizing the new notation, we obtain for $i \notin S$
    \begin{align*}
        \left((\mathbf{X}_2^T \mathbf{W}_2 \mathbf{X}_2)^{-1} \mathbf{X}_2^T \mathbf{W}_2\right)_{Si} 
        &= -\frac 1 2 \mu_\infty((\fnum-1)\tilde\mu_{2,1} + (\fnum-3)(\tilde\mu_{2,0}-\tilde\mu_{2,1}))
        \\
        &\overset{(\ref{eq_mu101})}{=} -\frac 1 2 \mu_\infty((\fnum-1)\frac{\tau^\uparrow}{\tau^\downarrow}+\fnum-3) (\tilde\mu_{2,0}-\tilde\mu_{2,1}) 
        \\
        &= -\frac 1 2 \mu_\infty\frac{(\fnum-1)\tau^\uparrow+(\fnum-3)\tau^\downarrow}{\tau^\downarrow}(\tilde\mu_{2,0}-\tilde\mu_{2,1}).
    \end{align*}
    We observe again that the terms of $\mu_{1,2}^2$ cancel in $(\fnum-1)\tau^\uparrow+(\fnum-3)\tau^\downarrow$ and thus the coefficients of $\mu_{2,1}$ are the dominating terms for $\mu_\infty \to \infty$.
    Therefore,
    \begin{align*}
        (\fnum-1)\tau^\uparrow+(\fnum-3)\tau^\downarrow &\approx \mu_{2,1}\mu_{2,0} (-(\fnum-1)((\fnum-3)-\binom{\fnum-3}{2}+(\fnum-4))+(\fnum-3)(2(\fnum-1)-\binom{\fnum-2}{2}))
        \\
        &= \mu_{2,1}\mu_{2,0}(\underbrace{(\fnum-1)(\fnum-3)-(\fnum-1)(\fnum-4)}_{=\fnum-1} + \underbrace{\binom{\fnum-3}{2}((\fnum-1) - (\fnum-3)) - (\fnum-3)(\fnum-3))}_{=(\fnum-3)(\fnum-4)-(\fnum-3)(\fnum-3)=-(\fnum-3)}
        \\
        &=2\mu_{2,1}\mu_{2,0}.
    \end{align*}
    Clearly the denominating terms in $\tau^\downarrow$ are $\tau^\downarrow \approx -\mu_{2,1}^2 2(\fnum-1)$. 
    Hence, with $\mu_{2,1} \approx \frac 1 2 \mu_\infty$, we have $\frac 1 2\mu_\infty\frac{\mu_{2,1}}{\mu_{2,1}^2} \to 1$ and with $\mu_{2,0}=(\fnum-3)$ we compare the coefficients of $\mu_{2,1}$ to obtain
   \begin{align*}
        \lim_{\mu_\infty \to \infty} \left((\mathbf{X}_2^T \mathbf{W}_2 \mathbf{X}_2)^{-1} \mathbf{X}_2^T \mathbf{W}_2\right)_{Si} 
        &= -\frac{ 2(\fnum-3)}{-2(\fnum-1)} \cdot \underbrace{\frac{1}{(\fnum-2)(\fnum-3)}}_{\text{from }\tilde\mu_{2,0}-\tilde\mu_{2,1}} 
        = \frac 1 {(\fnum-1)(\fnum-2)} = \frac{(-1)^{\vert T \cap S \vert}}{\fnum-1}\binom{\fnum-2}{t-\vert T \cap S\vert}^{-1},
   \end{align*}  
   which proves the convergence to the SII weight for $T=i$ and $i \notin S$, and finishes the proof.
\end{proof}

\clearpage
\section{Algorithms and Ground Truth Calculations}
\label{appx_sec_algos}
In this section, we give further details on the implemented algorithms, the default parameters of KernelSHAP-IQ and the computation of GT SII values for the synthetic SOUM.

\subsection{KernelSHAP-IQ Default Parameters}
We use the following default configurations in our experiments:
\begin{itemize}
    \item \textbf{Sampling weights:} We use $p^*(T) \propto \mu_1(t)$, similar to KernelSHAP \cite{Covert.2021}, i.e.\ 
    \begin{equation*}
        q(t) := p^*(\vert T \vert = t) \propto \binom{\fnum}{t}\mu_1(t) \propto \frac{1}{t(\fnum-t)}.
    \end{equation*}
    We further use similar sampling weights for the baseline methods, if applicable, i.e.\ SHAP-IQ \cite{Fumagalli.2023} and SVARM-IQ \cite{Kolpaczki.2024b}.
    \item \textbf{Border-Trick:} We use the \emph{border-trick} \cite{Fumagalli.2023,Lundberg.2017} to split the sampling procedure in a deterministic and a sampling part, cf. \cref{appx_sec_sampling}.
    We also apply this method to the baseline algorithms, were it is applicable, i.e.\ SHAP-IQ \cite{Fumagalli.2023} and SVARM-IQ \cite{Kolpaczki.2024b}.
    \item \textbf{Infinite Weight:} We set $\mu_\infty = 10^6$, in line with KernelSHAP \cite{Lundberg.2017}.
\end{itemize}
\subsection{Inconsistent KernelSHAP-IQ}\label{appx_sec_inconsistent}
The inconsistent KernelSHAP-IQ procedure is similar to KernelSHAP-IQ.
However, there is no iterative computation involved and the weights do not change.
We outline the pseudocode in \cref{appx_alg_inconsistent}.

\begin{algorithm}[htb]
    \caption{Inconsistent KernelSHAP-IQ}
    \label{appx_alg_inconsistent}
    \begin{algorithmic}[1]
    \REQUIRE order $k$, sampling weights $q$, budget $b$.
    \STATE $\{T_i\}_{i=1,\dots,b}, \{w_{T}\}_{T=T_1,\dots,T_b} \gets \textsc{\texttt{Sample}}(q,b)$
    \STATE $\mathbf{y}_1 \gets [\nu(T_1),\dots,\nu(T_b)]^T$
    \FOR{$T \in \{T_i\}$ and $1 \leq \vert S \vert \leq k$}
        \STATE $(\hat{\mathbf{X}}_{\leq k})_{TS} \gets \lambda(\vert S \vert,\vert T \cap S \vert)$ \COMMENT{Bernoulli weighting}
         \STATE $(\hat{\mathbf{W}}^*_{\leq k})_{TT} \gets \mu_{1}(t) \cdot w_{T}$ \COMMENT{weight adjustment}
    \ENDFOR
    \STATE $\hat{\phi}_{\leq k} \gets \textsc{\texttt{SolveWLS}}(\hat{\mathbf{X}}_{\leq k},\hat{\mathbf{y}}_{\leq k},\hat{\mathbf{W}}^*_{\leq k})$
    \STATE $\hat{\phi}_1,\dots,\hat{\phi}_k \gets \textsc{\texttt{Unstack}}(\hat{\phi}_{\leq k})$
    \STATE $\hat\Phi_k \gets \textsc{\texttt{AggregateSII}}$$(\hat\phi_1,\dots,\hat\phi_k)$ \COMMENT{compute $k$-SII} 
    \STATE \textbf{return} $k$-SII estimates $\hat\Phi_k$, SII estimates $\hat{\phi}_{\leq k}$
    \end{algorithmic}
\end{algorithm}

\subsection{Sampling Algorithm}\label{appx_sec_sampling}
In this section, we describe our sampling approach.
We make use of the \emph{border-trick} \cite{Fumagalli.2023}, which computes the low- and high-cardinality subsets explicitly without sampling, if the expected number of subsets is higher than the number of subsets of that size.
The sampling procedure is split in a \emph{deterministic} ($t<q_0$ and $t>\fnum-q_0)$ and a \emph{sampling} part ($q_0 \leq t \leq \fnum-q_0$).
The method takes a (symmetric) sampling weight vector $q\geq 0$ and budget $b>0$ and returns $b$ distinct subsets with \emph{subset weights} $w_T$ for each subset adjusted for the sampling distribution to readily apply the KernelSHAP-IQ weights at a later step.

Consider first the initial probability distribution $p^*(T) \propto q(t)$.
It is clear that this distribution is explicitly given as
\begin{equation*}
    p^*(T) := \frac{q(t)}{\binom{\fnum}{t} \cdot \sum_{\ell=0}^{\fnum} q(\ell)}, \text{ where }   p^*(\vert T \vert = t) = \frac{q(t)}{\sum_{\ell=0}^{\fnum-0} q(\ell)}.
\end{equation*}
The expected number of subsets of a given size $t$ is thereby given as $p(\vert T \vert = t) \cdot b$, where we consider the subset size $t$ in the deterministic part, if the expected number of subsets exceeds the total number of subsets of that size $\binom{\fnum}{t}$.
The procedure that splits the subset sizes in the deterministic and sampling part based on this notion is outlined in \cref{appx_alg_sampling_order}, where $q_0$ is iteratively increased and finally returned.
Given the sampling order $q_0$, we can re-define the sampling probabilities as
\begin{equation*}
    p^*_{q_0}(T) := \frac{q(t)}{\binom{\fnum}{t} \cdot \sum_{\ell=q_0}^{\fnum-q_0} q(\ell)}, \text{ where }   p^*_{q_0}(\vert T \vert = t) = \frac{q(t)}{\sum_{\ell=q_0}^{\fnum-q_0} q(\ell)},
\end{equation*}
where $p^*_{q_0}$ is now a probability distribution over $\mathcal T_{q_0}$, i.e.\ subsets of size $q_0 \leq t \leq \fnum-q_0$.
Recall that our goal is to approximate a sum over squared losses $\tilde\nu(T)$ weighted by the KernelSHAP-IQ weights $\mu(t)$.
We can re-write this sum as
\begin{equation*}
    \sum_{T \subseteq \fset} \mu(t) \tilde\nu(T) = \sum_{T \subseteq \fset}^{T \notin \mathcal T_{q_0}} \mu(t) \tilde\nu(T) + \mathbb{E}_{T \sim p_{q_0}^*}\left[\frac{\mu(t)\tilde\nu(T)}{p_{q_0}^*(T)}\right] \approx \sum_{T \subseteq \fset}^{T \notin \mathcal T_{q_0}} \mu(t) \tilde\nu(T) + \frac{1}{n_{\text{samples}}} \sum_{\ell=1}^{n_{\text{samples}}} \frac{\mu(t_\ell)\tilde\nu(T_\ell)}{p_{q_0}^*(T_\ell)},
\end{equation*}
where the $n_{\text{samples}}$ Monte Carlo samples $T_1,\dots,T_{n_{\text{samples}}}$ are drawn according to $p^*_{q_0}$.
In practice, note it is easy to sample from this distribution by sampling a subset size according to $p_{q_0}^*(\vert T \vert = t)$ and then sample a subset $T$ of that size uniformly with probability $\binom{\fnum}{t}^{-1}$.
Clearly, the sampling procedure should therefore return the sampling probabilities
\begin{align*}
    &\textbf{Deterministic part: } w_T \equiv 1 &&\textbf{Sampling part: } w_T = \frac{1}{n_{\text{samples}} \cdot p^*_{q_0}(T)}.
\end{align*}
If a subset is sampled multiple times, then clearly the budget should be decreased, as the game only needs to be evaluated once.
As a consequence, we simply increase the weight proportionally and leave the budget unchanged.
The full algorithm is outlined in \cref{appx_alg_sampling}.

\begin{algorithm}[htb]
    \caption{Sampling}
    \label{appx_alg_sampling}
    \begin{algorithmic}[1]
    \REQUIRE Budget $b>0$, sampling weights $\{q(t)\}_{t=0,\dots,\fnum}$ 
    \STATE $q_0 \gets \textsc{\texttt{SamplingOrder}}(b,q)$
    \STATE $i \gets 1$
    \FOR[deterministic part]{$T \subseteq \fset$ with $T \notin \mathcal T_{q_0}$}
        \STATE $T_i \gets T$
        \STATE $w_T  \gets 1$ \COMMENT{no weighting adjustment}
        \STATE $b \gets b-1$ \COMMENT{reduce budget}
    \ENDFOR
    \STATE $q_{\text{sampling}}(t) \gets q(t)/\sum_{\ell =q_0}^{\fnum-q_0}q(\ell)$ for $q_0 \leq t \leq \fnum-q_0$ \COMMENT{sampling weight normalization, i.e.\ $p^*_{q_0}(\vert T \vert = t)$}
    \STATE $n_\text{samples} \gets b$ \COMMENT{store total number of subsets sampled}
    \WHILE[sampling part]{$b>0$}
    \STATE $t \gets $ Sample according to $q_{\text{sampling}}$
    \STATE $T \gets $ Sample uniformly of size $t$
    \STATE $p(T) \gets q_{\text{sampling}}(t) \cdot \binom{\fnum}{t}^{-1}$ \COMMENT{probability to draw this subset}
    \IF[Subset is new]{$T \notin \{T_i\}$}
    \STATE $w_T \gets (n_{\text{samples}} \cdot p(T))^{-1} $
    \STATE $b \gets b -1$ \COMMENT{reduce budget}
    \ELSE[Subset is not new]
    \STATE $w_T \gets w_T + (n_{\text{samples}} \cdot p(T))^{-1} $ \COMMENT{increase weighting for each occurrence of $T$}
    \ENDIF
    \ENDWHILE
    \STATE \textbf{return} $\{T_i\}_{i=1,\dots,b}, \{w_{T}\}_{T=T_1,\dots,T_b}$
    \end{algorithmic}
\end{algorithm}

\begin{algorithm}[htb]
    \caption{Compute Sampling Order}
    \label{appx_alg_sampling_order}
    \begin{algorithmic}[1]
    \REQUIRE Budget $b>0$, sampling weights $\{q(t)\}_{t=0,\dots,\fnum}$ 
        \STATE initialize $q_0 = 0$
        \FOR{$t = 0,\dots,\text{\textsc{\texttt{Floor}}}(\fnum/2)$}
            \STATE $q(t) \gets q(t)/\sum_{\ell = q_0}^{\fnum-q_0}q(\ell)$ \COMMENT{weight normalization}
            \IF[compare expected number of subsets with total number of subsets]{$b \cdot q(t) \geq \binom{\fnum}{t}$ and $b \cdot q(\fnum-t) \geq \binom{\fnum}{t}$}
                \STATE $q_0 \gets q_0 + 1$ \COMMENT{increase $q_0$}
                \STATE $b \gets b- 2 \binom{\fnum}{t}$
            \ELSE
            \STATE \textbf{break}
            \ENDIF
        \ENDFOR
    \STATE \textbf{return} $q_0$
    \end{algorithmic}
\end{algorithm}

\subsection{Solving WLS}
The WLS problem can be solved analytically as described in \cref{appx_alg_solve_wls}.
\begin{algorithm}[t]
    \caption{SolveWLS}
    \label{appx_alg_solve_wls}
    \begin{algorithmic}[1]
    \REQUIRE Data $\mathbf{X}$, weights $\mathbf{W}$, response $\mathbf{y}$
    \STATE $\mathbf{A} \gets (\mathbf{X}^T \mathbf{W} \mathbf{X})^{-1}$ \COMMENT{precision matrix}
    \STATE $\phi \gets \mathbf{A} \mathbf{X}^T\mathbf{W} \mathbf{y}$
    \STATE \textbf{return} $\phi$
    \end{algorithmic}
\end{algorithm}

\subsection{Subset Weights SII}
We briefly describe the weighting for SII.
According to \citet{Fumagalli.2023}, the SII is represented as
\begin{equation*}
    \phi^{\text{SII}}(S) = \sum_{T\subseteq \fset} (-1)^{s-\vert T \cap S\vert}m(t-\vert T \cap S \vert), \text{ where } m(t):=\frac{(\fnum-t-s)!t!}{(\fnum-s+1)!}.
\end{equation*}
The weights are therefore assigned to every combinations $T,S$ as outlined  in \cref{appx_alg_sii_weights}.

\begin{algorithm}[h]
    \caption{SIIWeight}
    \label{appx_alg_sii_weights}
    \begin{algorithmic}[1]
    \REQUIRE Subset $T \subseteq \fset$, interaction $S \subseteq \fset$
    \STATE $t,s,r \gets \vert T \vert, \vert S \vert, \vert T \cap S \vert$
    \STATE $\omega \gets (-1)^{s-r} \frac{(\fnum-t-s+r)!(t-r)!}{(\fnum-s+1)!}$
    \STATE \textbf{return} $\omega$
    \end{algorithmic}
\end{algorithm}

\subsection{Aggregate SII to $k$-SII}\label{appx_sec_ksii_weights}
This section describes the $k$-SII weights, i.e.\ the aggregation of SII to $k$-SII.
It has been shown in Appendix~A.2 in \cite{Bord.2023} that $k$-SII can be explicitly represented as
\begin{equation*}
    \Phi_k(S) = \sum_{S \subseteq \tilde S \subseteq \fset}^{\vert \tilde S \vert \leq k} B_{\vert \tilde S \vert- \vert S \vert} \phi^{\text{SII}}(\tilde S) =  \sum_{\tilde S \subseteq \fset}^{\vert \tilde S \vert \leq k} \mathbf{1}_{S \subseteq \tilde S } B_{\vert \tilde S \vert- \vert S \vert} \phi^{\text{SII}}(\tilde S)  \text{ for } 1 \leq \vert S \vert \leq k.
\end{equation*}
Hence, we can aggregate SII to $k$-SII as outlined in \cref{appx_alg_ksii_weights}.

\begin{algorithm}[!htb]
    \caption{AggregateSII}
    \label{appx_alg_ksii_weights}
    \begin{algorithmic}[1]
    \REQUIRE SII estimates $\hat\phi_1,\dots,\hat\phi_k$    
    \STATE $(\mathbf{Z}_k)_{S\tilde S} \gets \mathbf{1}_{S \subseteq \tilde S} B_{\vert \tilde S \vert - \vert S \vert}$ for $1\leq \vert S \vert , \vert \tilde S \vert \leq k$
    \STATE $\hat\Phi_k \gets \mathbf{Z}_k \cdot [\hat\phi_1,\dots,\hat\phi_k]^T$
    \STATE \textbf{return} $\hat\Phi_k$

    \end{algorithmic}
\end{algorithm}

\subsection{Baseline Methods: SHAP-IQ, Permutation Sampling and SVARM-IQ}
The pseudocode and implementations can be found in the corresponding paper.
Permutation sampling \cite{Tsai.2022,Fumagalli.2023} is described for SII as an extension to permutation sampling of the SV \cite{Castro.2009}.
SHAP-IQ \cite{Fumagalli.2023} is implemented using similar sampling weights as KernelSHAP-IQ and extends Unbiased KernelSHAP \cite{Covert.2021} to interactions.
SVARM-IQ \cite{Kolpaczki.2024b} extends SVARM \cite{Kolpaczki.2024a} to interactions.

\subsection{Analytic Solution for SII of SOUMs}\label{appx_sii_gt}
\begin{proposition}
    For a unanimity game $\nu_R(T) := \mathbf{1}_{R \subseteq T}$ with $R\subseteq \fset$, the SII for any $S \subseteq \fset$ can be explicitly computed as
\begin{equation*}
    \phi^{\text{SII}}_{\nu_R}(S) =  \frac{\mathbf{1}_{S \subseteq R}}{r-s+1}.
\end{equation*}
Hence for a SOUM, defined as $\nu^{\text{SOUM}}(T) := \sum_{m=1}^M a_m \nu_{R_m}(T)$ with subsets $R_1,\dots,R_M \subseteq \fset$ and coefficients $a_1,\dots,a_M \in \mathbb{R}$, it follows
\begin{equation*}
    \phi^{\text{SII}}_{\nu^{\text{SOUM}}}(S) = \sum_{m=1}^M a_m \frac{\mathbf{1}_{S \subseteq R_m}}{r_m-s+1}.
\end{equation*}
\end{proposition}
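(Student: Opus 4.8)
The plan is to compute $\phi^{\text{SII}}_{\nu_R}(S)$ directly from \cref{def_sii} for a single unanimity game and then obtain the \emph{SOUM} formula by linearity of SII \cite{Grabisch.1999}. For the first step I would start from the discrete derivative $\Delta_S(T) = \sum_{L \subseteq S}(-1)^{s - |L|}\,\mathbf{1}_{R \subseteq T \cup L}$ for $T \subseteq \fset \setminus S$. Since $T \cap S = \emptyset$ and $L \subseteq S$, the event $R \subseteq T \cup L$ is equivalent to $R \setminus T \subseteq L$, which in particular requires $R \setminus T \subseteq S$; when this holds, writing $L = (R \setminus T) \cup M$ with $M \subseteq S \setminus (R \setminus T)$ and summing the alternating signs over $M$ produces a factor $(1-1)^{\,s - |R\setminus T|}$, which vanishes unless $R \setminus T = S$. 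Using that (given $S \cap T = \emptyset$) the identity $R \setminus T = S$ is equivalent to $S \subseteq R$ together with $R \setminus S \subseteq T$, I conclude $\Delta_S(T) = \mathbf{1}_{S \subseteq R}\,\mathbf{1}_{R \setminus S \subseteq T}$.

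Assuming $S \subseteq R$, I would then substitute this into \cref{def_sii} and parametrise the surviving subsets as $T = (R \setminus S) \cup T'$ with $T' \subseteq \fset \setminus R$, so that $t = (r-s) + |T'|$. Grouping the sum by $j := |T'| \in \{0,\dots,\fnum-r\}$, and using $\binom{\fnum-r}{j}(\fnum-r-j)! = (\fnum-r)!/j!$ together with $(r-s+j)!/j! = (r-s)!\binom{r-s+j}{r-s}$, the index collapses to
\begin{equation*}
    \phi^{\text{SII}}_{\nu_R}(S) = \frac{(\fnum-r)!\,(r-s)!}{(\fnum-s+1)!}\sum_{j=0}^{\fnum-r}\binom{r-s+j}{r-s} = \frac{(\fnum-r)!\,(r-s)!}{(\fnum-s+1)!}\binom{\fnum-s+1}{r-s+1},
\end{equation*}
where the last equality is the hockey-stick identity. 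The factorials cancel to leave exactly $1/(r-s+1)$, which proves $\phi^{\text{SII}}_{\nu_R}(S) = \mathbf{1}_{S \subseteq R}/(r-s+1)$.

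Finally, since $\nu^{\text{SOUM}} = \sum_{m=1}^M a_m\,\nu_{R_m}$ and SII satisfies the linearity axiom, linearity gives $\phi^{\text{SII}}_{\nu^{\text{SOUM}}}(S) = \sum_{m=1}^M a_m\,\phi^{\text{SII}}_{\nu_{R_m}}(S) = \sum_{m=1}^M a_m\,\mathbf{1}_{S \subseteq R_m}/(r_m - s + 1)$, as claimed. I expect the main obstacle to be the first step: recognising that the alternating sum over $L \subseteq S$ reduces the discrete derivative of a unanimity game to the single indicator $\mathbf{1}_{R \setminus T = S}$. After that, the remaining work is a standard binomial identity and an appeal to linearity.
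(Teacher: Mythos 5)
Your argument is correct and coincides with the paper's own ``proof via computation'': the same reduction of the alternating sum to $\Delta_S(T)=\mathbf{1}_{S\subseteq R}\,\mathbf{1}_{R\setminus S\subseteq T}$, the same reindexing of the surviving coalitions by their size outside $R$, the same hockey-stick identity, and the same appeal to linearity for the SOUM. The paper additionally records a shorter alternative via the M\"obius transform of the unanimity game and the conversion formula, but your direct computation is exactly its second proof.
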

\begin{proof}
    Note that due to linearity the second result follows immediately from the SII of the unanimity game.
    We thus aim to compute the SII for a unanimity game $\nu_R$ with subset $R \subseteq \fset$.
    In the following, we give two different proofs.
    \paragraph{Proof via Möbius transform and conversion formula.}
    It is well-known in cooperative game theory, cf. p.54 in \cite{Grabisch.2016}, that the Möbius transform of a unanimity game is
    \begin{equation*}
        a_{\nu_R}(S) = \sum_{T \subseteq S } (-1)^{s-t} \nu_R(T) = \mathbf{1}_{R=S}. 
    \end{equation*}
    Furthermore, the result follows immediately from the conversion formula, cf. Table 3 in \cite{Grabisch.2000}, as
    \begin{equation*}
        \phi^{\text{SII}}_{\nu_R}(S) = \sum_{T \supseteq S} \frac{1}{t-s+1} a_{\nu_R}(T) 
        =\sum_{T \subseteq \fset} \frac{\mathbf{1}_{S\subseteq T} }{t-s+1} a_{\nu_R}(T)
        =\mathbf{1}_{R=T} \frac{\mathbf{1}_{S\subseteq T}}{t-s+1}
        = \frac{\mathbf{1}_{S \subseteq R}}{r-s+1}.
    \end{equation*}

    \paragraph{Proof via computation.}
    We can also give an alternative analytical proof.    
    We compute the discrete derivatives of $\nu_R$ for $S\subseteq \fset$ and $T \subseteq \fset\setminus S$ as
    \begin{align*}
        \Delta_S(T) &= \sum_{L \subseteq S}(-1)^{s-\ell} \nu_R(T \cup L) = \sum_{L \subseteq S}(-1)^{s-\ell} \mathbf{1}_{R \subseteq T \cup L} = \mathbf{1}_{R\setminus S \subseteq T} \sum_{L \subseteq S}(-1)^{s-\ell}  \mathbf{1}_{R \cap S \subseteq L}
        \\
        &= \mathbf{1}_{R\setminus S \subseteq T} \sum_{\ell=\vert R \cap S\vert}^{s} (-1)^{s-\ell} \binom{s -\vert R \cap S \vert}{\ell-\vert R \cap S \vert} 
        = \mathbf{1}_{R\setminus S \subseteq T} \underbrace{\sum_{\ell=0}^{s-\vert R \cap S \vert} (-1)^{s-\ell-\vert R \cap S \vert} \binom{s -\vert R \cap S \vert}{\ell}}_{=0, \text{ except for } \vert R \cap S \vert = s} 
        =         \mathbf{1}_{R\setminus S \subseteq T} \mathbf{1}_{S \subseteq R}.
    \end{align*}
    Hence, with $q := \vert R \setminus S\vert$, the SII is computed as
    \begin{align*}
        \phi^{\text{SII}}(S) 
        &= \sum_{T \subseteq \fset\setminus S} \frac{(\fnum-t-s)!t!}{(\fnum-s+1)!} \Delta_S(T) 
        &&=\mathbf{1}_{S\subseteq R} \sum_{T \subseteq \fset\setminus S} \frac{(\fnum-t-s)!t!}{(\fnum-s+1)!} \mathbf{1}_{R\setminus S \subseteq T} 
        \\
        &=\mathbf{1}_{S\subseteq R} \sum_{t=q}^{\fnum-s} \frac{(\fnum-t-s)!t!}{(\fnum-s+1)!} \binom{\fnum-s-q}{t-q}
        &&= \mathbf{1}_{S\subseteq R} \sum_{t=0}^{\fnum-s-q} \frac{(t+q)!}{(\fnum-s+1)!} \frac{(\fnum-s-q)!}{t!} 
        \\
        &=\mathbf{1}_{S\subseteq R} \frac{(\fnum-s-q)!}{(\fnum-s+1)!}\sum_{t=0}^{\fnum-s-q} \frac{(t+q)!}{q!}
        &&=\mathbf{1}_{S\subseteq R} \frac{(\fnum-s-q)!}{(\fnum-s+1)!} q!\sum_{t=0}^{\fnum-s-q} \binom{t+q}{q}
        \\
        &=\mathbf{1}_{S\subseteq R} \frac{(\fnum-s-q)!}{(\fnum-s+1)!} q!\sum_{t=0}^{\fnum-s-q} \binom{t+q}{q}
        &&= \mathbf{1}_{S\subseteq R} \frac{(\fnum-s-q)!}{(\fnum-s+1)!} q! \binom{\fnum-s+1}{\fnum-s-q}
        \\
        &= \mathbf{1}_{S\subseteq R} \frac{1}{q+1} 
        &&= \frac{ \mathbf{1}_{S\subseteq R}}{r-s+1},
    \end{align*}
    where we used the hockey-stick identity in the second last row to compute the sum and $q=r-s$, if $S \subseteq R$.
 \end{proof}

\subsection{Intuition about the KernelSHAP-IQ Weights}\label{sec_appendix_intuition}

The intuition behind $\mathbf{W}_k$ is based on common factors in two representations of SII: One being the solution of the WLS problem from \cref{eq_WLS_solution}
\begin{equation*}
    \phi_k^{\text{SII}} = (\mathbf{X}^T_k \mathbf{W}_k \mathbf{X}_k)^{-1} \mathbf{X}_k^T \mathbf{W}_k \cdot \mathbf{y}_k,
\end{equation*} 
and the other being the representation of \cref{conj_sii} and by \citet{Fumagalli.2023}
\begin{align*}
    \phi_k^{\text{SII}} = \mathbf{Q}_k \cdot \mathbf{y}_k.
\end{align*}

In detail, if we consider the final weight of a single subset $T \subseteq N$ with $k \leq \vert T \vert \leq n-k$ and all interactions $S\subseteq N$ of order $\vert S \vert=k$, then the common factors for all $S$ are according to the $S$-row and $T$-column of $\mathbf{Q}_k$, i.e. $(-1)^{k-\vert T \cap S\vert} m_k(t-\vert T \cap S\vert)$, determined by the common factors in $m_k(t-0),\dots,m_k(t-k)$. By definition of $m_k(t) = \frac{(n-k-t)!t!}{(n-k+1)!}$, we have $m_k(t-\ell) = \frac{(n-k-t+\ell)!(t-\ell)!}{(n-k+1)!}$ and thus the common factors in $\mathbf{Q}_k$ for all $S$-rows given a $T$-column are $\frac{(n-k-t)!(t-k)!}{(n-k+1)!} \propto \binom{n-2k}{t-k}^{-1}$. On the other hand, by \cref{eq_WLS_solution}, the matrix $(\mathbf{X}^T_k \mathbf{W}_k \mathbf{X}_k)^{-1}$ is independent of the subset $T$, and hence $\mu_k(t)$, as it contains one row and one column for each interaction $S$ of order $k$. This implies 
\begin{equation*}
   ((\mathbf{X}^T_k \mathbf{W}_k \mathbf{X}_k)^{-1} \mathbf{X}^T_k \mathbf{W}_k)_{S T} \propto \mu_k(t) 
\end{equation*}
for all interactions $S$ of order $k$. Combining both proportionality results, we conclude that $\mu_k(t) \propto \binom{n-2k}{t-k}^{-1}$ is a suitable candidate. Note that this reasoning works for subsets with $k \leq \vert T \vert \leq n-k$.

\clearpage
\section{Experimental Setup and Reproducibility}
\label{sec_appendix_experimental_setup}

This section contains additional information about the setup and reproducibility of our empirical evaluation.
\cref{sec_appendix_task_description} contains additional information about the models, datasets and explanation tasks used for evaluation as summarized in \cref{tab_setup}. \cref{sec_appendix_computational_cost} contains additional information about the environmental impact and computational effort of the empirical evaluation.
All experiments can be reproduced via the technical supplement available at \repository which will be made fully publicly available.

\subsection{Model, Datasets and Task Descriptions}
\label{sec_appendix_task_description}

This section contains detailed information about the models and datasets used for benchmarking the approximation quality of KernelSHAP-IQ and the available baseline algorithms.
To increase the comparability, we conduct our empirical evaluation of the approximation quality on settings and tasks presented in the literature \cite{Kolpaczki.2024b,Fumagalli.2023,Muschalik.2024,Sundararajan.2020,Tsai.2022}. 
For further questions regarding the technical details, we further refer to the technical supplement at \repository which will be made fully publicly available.

\subsubsection{Sum Of Unanimity Models (SOUM)}
The sum of unanimity models (SOUMs) are used in different empirical evaluations of Shapley-based interaction estimators \cite{Fumagalli.2023,Kolpaczki.2024b,Tsai.2022}. 
This synthetic model class can be used to create explanation tasks with differing complexity while allowing to compute GT interaction scores analytically even for higher player counts.
For a given player set $\fset := \{1,\dots,\fnum\}$ with $\fnum$ many players, we sample $M=50$ interactions $R_1,\ldots,R_M \subseteq \fset$ from $\mathcal{P}(\fset)$.
Next, we draw for each interaction subset $R_m$ a coefficient $a_m \in [0,1]$ uniformly at random.
The value function is then defined as $\nu(T) = \sum_{m=1}^M a_m \cdot  \nu_{R_m}(T)$ for all coalitions $T \subseteq \fset$ with the unanimity game $\nu_{R_m}(T) := \mathbf{1}_{T \supseteq R_m}$ Similar to \citet{Tsai.2022}, we limit the highest interaction size $|R_m| \leq 4$. 
Further, we randomly set two features to be non-informative (dummy player) which are never part of any interaction subset.

\subsubsection{Language Model (LM)}
Language models (LMs) are widely investigated with Shapley interactions \cite{Fumagalli.2023,Kolpaczki.2024b,Tsai.2022,Sundararajan.2020} highlighting that explanations based on interactions are more expressive than word-level explanations.
Similar to the related work, we also investigate KernelSHAP-IQ's approximation quality in a LM scenario.
The LM used in our experiments is a pre-trained \texttt{DistilBert} \cite{Sanh.2019} sentiment analysis model.
The LM used is available at \url{https://huggingface.co/lvwerra/distilbert-imdb} via the \texttt{transformers} API \cite{Wolf.2020}.
The transformer was pre-trained on the IMDB movie review dataset \cite{Maas.2011,Lhoest.2021}.
Provided a tokenized text input, the model predicts the sentiment of the input with a sentiment score ranging from negative $-1$ to positive $1$.
For explanation purposes, we remove features (tokens) from the tokenized representation of the input.

\subsubsection{Convolutional Neural Network (CNN)}
Similar to \citet{Fumagalli.2023,Kolpaczki.2024b}, we evaluate the approximation quality of KernelSHAP-IQ and the available baselines on a convolutional neural network (CNN) image-input explanation task. 
The CNN is a \texttt{ResNet18} \cite{He.2018} fitted on the ImageNet \cite{Deng.2009} dataset.
The CNN is available at \url{https://pytorch.org/vision/main/models/generated/torchvision.models.resnet18.html} via \texttt{pytorch} \cite{Paszke.2017}.
The task is to explain the predicted label's probability for random ImageNet images.
Individual pixels are grouped together as superpixels with SLIC \cite{Achanta.2012}. 
Missing features (superpixels) are set to the gray as a method of mean-imputation.

\subsubsection{Vision Transformer (ViT)}
The vision transformer (ViT) setting presented in \citet{Kolpaczki.2024b} is similar to the CNN as the task is to explain random ImageNet images in terms of the predicted class probability.
The specific ViT model operates on ``words'' of 32x32 pixels image patches.
The ViT model is available at \url{https://huggingface.co/google/vit-base-patch32-384} via the \texttt{transformers} API \cite{Wolf.2020}.
To create a player set of 16 players (computation of GT SII values already requires $2^{16} = 65\,536$ coalitions) the smaller 32x32 image patches are grouped together into 96x96 pixels super-patches (3x3 original sized patches make up one larger patch).
Similar to the LM, absent players (patches) are removed on the tokenized representation of the input.
The value of a coalition is the ViT's predicted class probability for the class which has the highest probability provided the grand coalition (the unmodified image with no patches removed). 

\subsubsection{Bike Rental (BR)}
Based on \citet{Muschalik.2024}, the bike rental (BR) setting is based on the \emph{bike} regression dataset \cite{FanaeeT.2014}, where the goal is to predict the amount of rented bikes based on features like weather conditions or time of day. 
The dataset is retrieved from \texttt{openml} \cite{Feurer.2020} with \emph{42712} as the dataset identifier.
We encode categorical features with ordinal values and scale numeric features. 
We further logarithmize the target variable to base ten.
Based on this dataset, we train an XGBoost model \cite{Chen.2016} and explain randomly sampled local instances by removing numerical and categorical features through mean and mode imputation, respectively.

\subsubsection{California Housing (CH)}
Like \citet{Muschalik.2024}, we retrieve the California housing (CH) dataset \cite{Kelley.1997} through \texttt{scikit-learn} library \cite{Pedregosa.2011}.
The CH regression dataset contains information about property prices (the continuous target variable) and corresponding property attributes including the location in terms of latitude and longitude.
We standardize all features and, likewise to the BR dataset, logarithmize the target variable to base ten.
We train a small NN with \texttt{pytorch} on the regression task and explain randomly sampled local instances the model by removing missing features through mean imputation.
For the illustration in \cref{fig_intro_illustration}, we fit a gradient boosted tree with \texttt{scikit-learn}.

\subsubsection{Adult Census (AC)}
The adult census (AC) classification dataset \cite{Kohavi.1996} contains socio-demographic features of individuals paired with their income levels.
Similar to \citet{Muschalik.2024}, we retrieve the AC dataset via \texttt{openml} and \emph{1590} as the dataset identifier.
We transform the dataset by imputing missing numerical attributes with median values and then apply standard scaling.
Categorical features are encoded with ordinal values.
We train a random forest (RF) classifier from \texttt{scikit-learn} on this dataset and explain randomly sampled local instances by removing missing features through mean and mode imputation for numerical and categorical features, respectively.

\subsection{Computational Effort}
\label{sec_appendix_computational_cost}

The computational effort required for evaluating KernelSHAP-IQ in comparison to all available baselines is modest.
The main computational burden lies in the pre-computation of GT values and queries to underlying black box models.
The largest effort for pre-computation of GT values stems from the ViT with $d = 16$ players requiring $2^{16} = 65\,536$ queries to the ViT model for each input image. 
To alleviate this computational burden, we pre-compute the worth of each coalition for \emph{each} instance of \emph{each} benchmark task, except the synthetic SOUMs.
Hence, for each run of the different estimators the pre-computed coalition values can be looked up from a file.
The code for recreating or extending these look-up files is part of the technical supplement at \repository.
All benchmarks are performed on a single a Dell XPS 15 9510 Laptop with an Intel i7-11800H clocking at 2.30GHz.
The experiments consumed around $500$ CPU hours.

\clearpage
\section{Additional Empirical Results}
\label{sec_appendix_additional_results}

\subsection{Runtime Analysis}
Next to the experiments regarding approximation quality, we also conducted a small runtime analysis of KernelSHAP-IQ and all available baseline algorithms.
The results are summarized in \cref{fig_appendix_runtime}.
We run KernelSHAP-IQ, SHAP-IQ, SVARM-IQ, and permutation sampling on the LM and an example sentence consisting of 14 words.
We let the approximation algorithms estimate the SII values of order $l=2$.
The main computational cost results from the model evaluations (accesses to $\nu$), which is bounded by the model’s inference time.
From \cref{fig_appendix_runtime} it is apparent that with increasing the amount of allowed model accesses, the runtime scales linearly for all approximation algorithms and all estimators perform equally.
Further, we evaluate the runtime of the different approximation methods jointly with their performance.
\cref{app_tab_runtime} shows the mean (averaged over 10 runs) runtime of each approximator to reach certain MSE levels.
For \emph{CH}, KernelSHAP-IQ and Inconistent KernelSHAP-IQ are most efficient in terms of model calls to reach the desired MSE error levels.
The runtime is similar for the other baselines.
While Inconsistent KernelSHAP-IQ fails to reach the desired error levels for \emph{LM}, KernelSHAP-IQ performs equally well with SVARM-IQ for \emph{LM}.
Permutation sampling generally has a fast runtime, but does not achieve good estimation qualities in sensible time.

\begin{figure}[htb]
    \centering
    \includegraphics[width=0.45\textwidth]{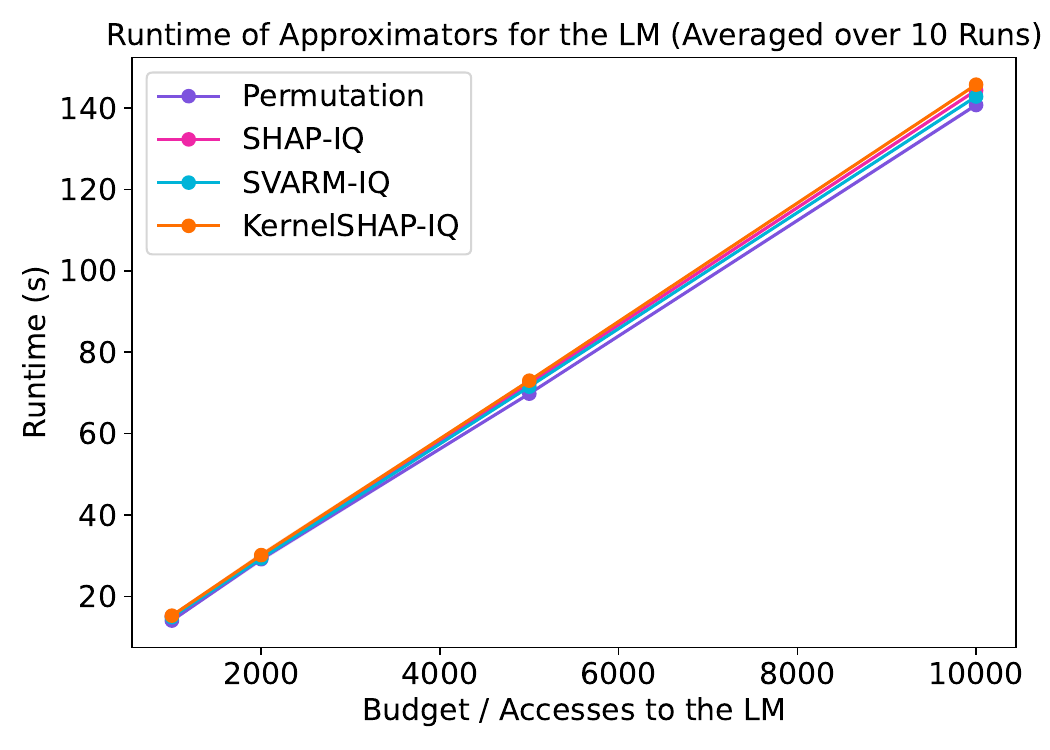}
    \caption{Runtime analysis of KernelSHAP-IQ and baseline algorithms for calculating $l=2$ SII scores for an example sentence with $\fnum=14$ words and the LM. For each approximator we evaluate 10 independent runs. The shaded bands corresponds to the SEM.}
    \label{fig_appendix_runtime}
\end{figure}

\begin{table}[htb]
\centering
\caption{Mean runtime of each approximator to reach certain MSE error levels in terms of model calls and elapsed time in seconds. The runtime is averaged over 10 independent runs for two benchmarks. For \emph{LM}, Inconsistent KernelSHAP-IQ never reaches the MSE error levels and Permutation requires more than $2^{14}$ evaluations.}
\label{app_tab_runtime}
\resizebox{\columnwidth}{!}{%
\begin{tabular}{@{}llllllll@{}}
\toprule
\textbf{Benchmark} & \textbf{Error} & \textbf{Metric} & \textbf{KernelSHAP-IQ} & \textbf{Inc. KernelSHAP-IQ} & \textbf{SVARM-IQ} & \textbf{SHAP-IQ} & \textbf{Permutation} \\ \midrule
\multirow{4}{*}{\emph{CH}} & \multirow{2}{*}{\textit{MSE at $\text{2e}^{\text{-3}}$}} & \textit{Model Calls} & 75 & 50 & 130 & 180 & 195 \\
 &  & \textit{Time (s)} & 0.023 & 0.012 & 0.035 & 0.033 & 0.024 \\
 & \multirow{2}{*}{\textit{MSE at $\text{1e}^{\text{-3}}$}} & \textit{Model Calls} & 85 & 70 & 170 & 240 & 600 \\ 
 &  & \textit{Time (s)} & 0.026 & 0.017 & 0.043 & 0.037 & 0.071 \\ \midrule
\multirow{4}{*}{\emph{LM}} & \multirow{2}{*}{\textit{MSE at $\text{1e}^{\text{-3}}$}} & \textit{Model Calls} & 2800 & - & 2800 & 5500 & $>$16384 \\
 &  & \textit{Time (s)} & 35.5 & - & 35.6 & 95.9 & - \\
 & \multirow{2}{*}{\textit{MSE at $\text{5e}^{\text{-4}}$}} & \textit{Model Calls} & 4000 & - & 4000 & 7200 & $>$16384 \\
 &  & \textit{Time (s)} & 70.2 & - & 71.6 & 126.1 & - \\ \bottomrule
\end{tabular}%
}
\end{table}

\subsection{Validations of Higher-Order Conjecture}
In the following, we empirically validate \cref{conj_inverse} and \cref{conj_sii}.
We let the numbers of players be $\fnum=2,\dots,11$ and the order of interactions $k=1,\dots,\lfloor \fnum/2 \rfloor$, since the conjectures hold for $\fnum\geq 2k$.

\paragraph{Validation of \cref{conj_inverse}}
We let $\mu_\infty = 10^7$, generate the matrices $\mathbf{X}_k$ and $\mathbf{W}_k$ and compute the inverse $\mathbf{A}_k$ using standard \emph{numpy} functions.
We then compare the results with the proposed inverse from \cref{conj_inverse}.
Lastly, we compute the MSE of all elements and assert that this error is less than $10^{-10}$.

\paragraph{Validation of \cref{conj_sii}}
To validate \cref{conj_sii}, we randomly generate $10$ instances of SOUMs containing $1000$ randomly generated interactions, where the size of each interaction is uniformly distributed.
We compute the GT SIIs for these games and compare them with the empirically computed scores via \cref{conj_sii}.
Lastly, we average the MSE over all SOUM instances and assert again that this error is below $10^{-10}$.

\subsection{Additional Approximation Results}
This section contains additional experimental results and evaluations. 
\cref{fig_appendix_ch,fig_appendix_bike} contain additional plots to the experiments conducted on the CH and BR regression tasks, respectively.
KernelSHAP-IQ and inconsistent KernelSHAP-IQ substantially outperform SVARM-IQ, SHAP-IQ, and permutation sampling, while inconsistent KernelSHAP-IQ does not converge to the GT.
\cref{fig_appendix_cnn,fig_appendix_vit} show the estimation qualities for the image-related tasks.
On the CNN, KernelSHAP-IQ outperforms all existing baselines and inconsistent KernelSHAP-IQ differs greatly from the GT SII values.
Interestingly, on the ViT, SVARM-IQ outperforms KernelSHAP-IQ and inconsistent KernelSHAP-IQ for estimating second order SII scores while both kernel-based estimators retrieve better first and third order SII values.
\cref{fig_appendix_ac} shows additional estimation results on the AC classification dataset.
On AC, KernelSHAP-IQ, again, achieves state-of-the-art approximation results.
Lastly, \cref{fig_appendix_lm} shows a detailed view of the experiment conducted on the LM highlight the estimation quality of each order individually.
The results on the LM show that KernelSHAP-IQ and SVARM-IQ both retrieve equally good estimates for order 2 while KernelSHAP-IQ slightly outperforms SVARM-IQ on order 1 and 3.
Lastly, \cref{fig_appendix_soum} shows the results on the synthetic SOUMs with higher number of players.
In both settings inconsistent KernelSHAP and KernelSHAP clearly outperform SVARM-IQ, SHAP-IQ, and permutation sampling. 
Notably, in these higher player settings the inconsistent version's drawback of not converging to the GT is yet to materialize.

\begin{figure}[htb]
    \centering
    \includegraphics[width=0.75\textwidth]{figures/legend_horizontal_all_orders.pdf}
    \\
    \begin{minipage}[c]{0.07\textwidth}
    \centering SII\\$l=2$
    \end{minipage}
    \begin{minipage}[c]{0.32\textwidth}
    \includegraphics[width=\textwidth]{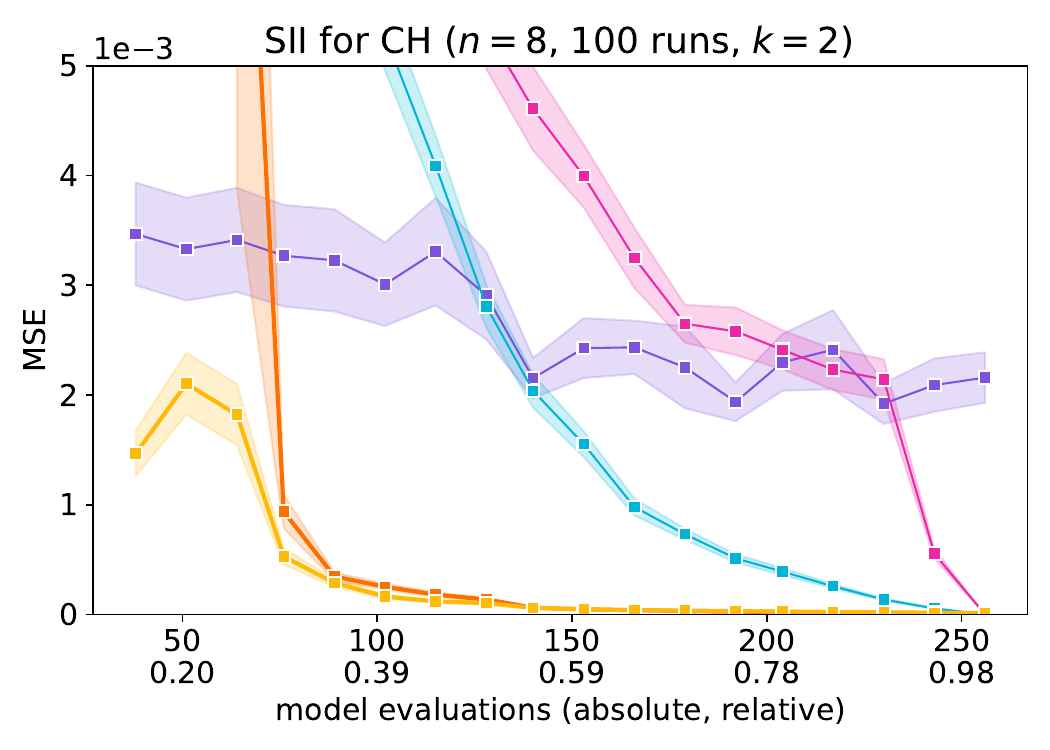}
    \end{minipage}
    \begin{minipage}[c]{0.32\textwidth}
    \includegraphics[width=\textwidth]{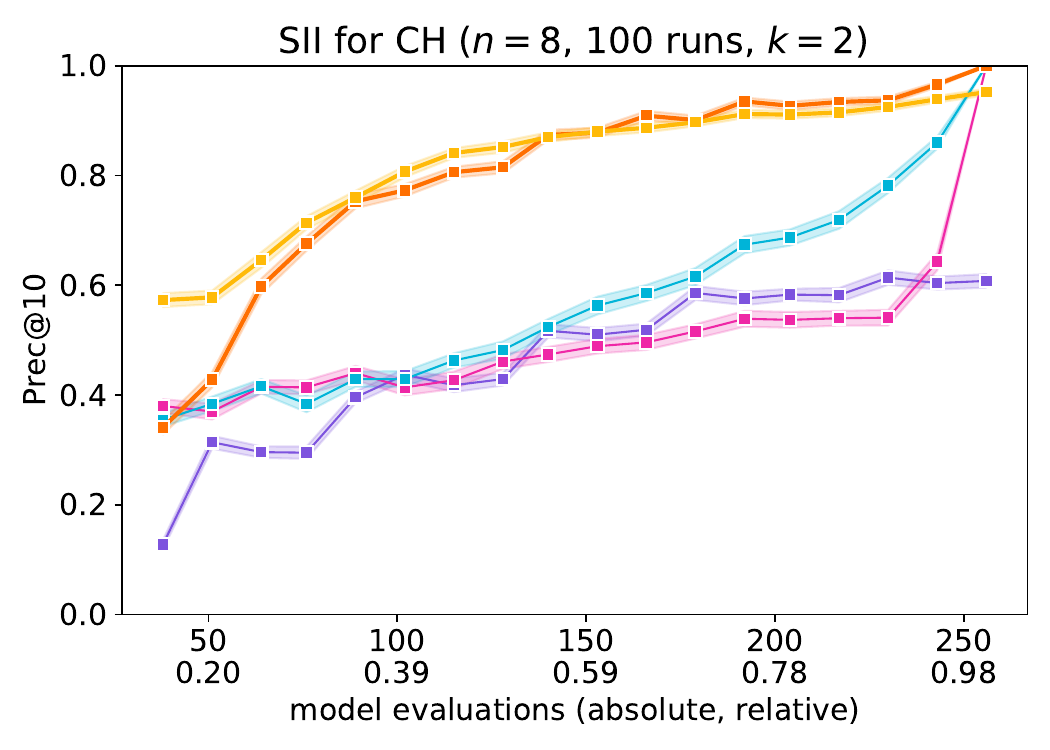}
    \end{minipage}
    \\
    \begin{minipage}[c]{0.07\textwidth}
    \centering$2$-SII\\$l=2$
    \end{minipage}
    \begin{minipage}[c]{0.32\textwidth}
    \includegraphics[width=\textwidth]{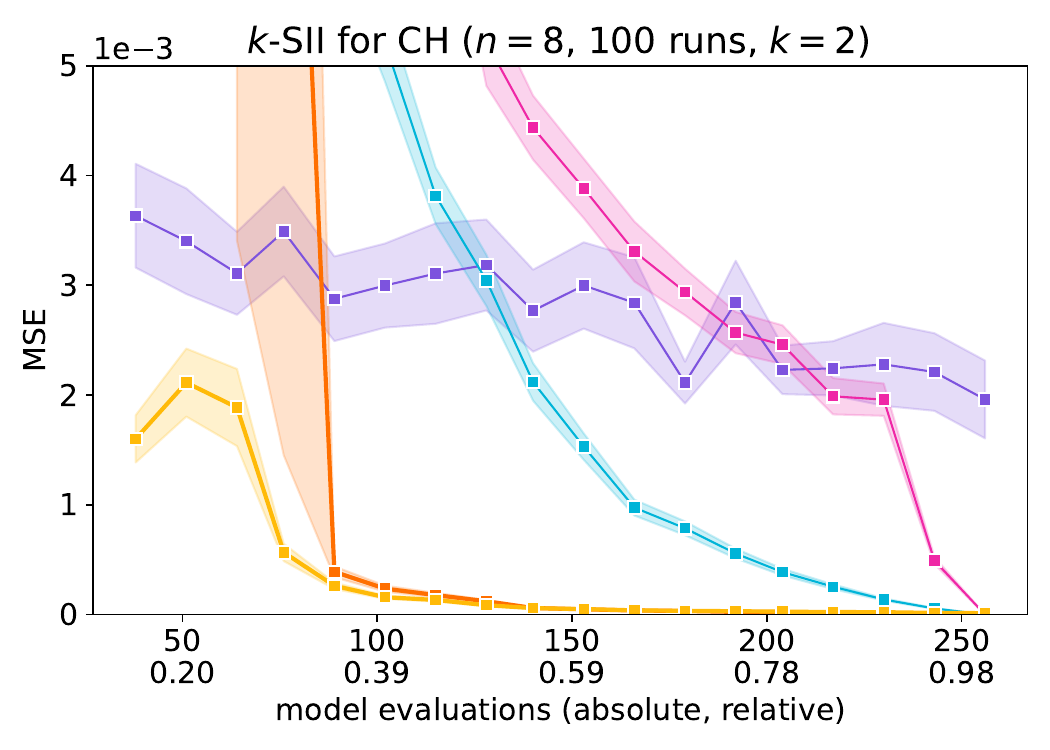}
    \end{minipage}
    \begin{minipage}[c]{0.32\textwidth}
    \includegraphics[width=\textwidth]{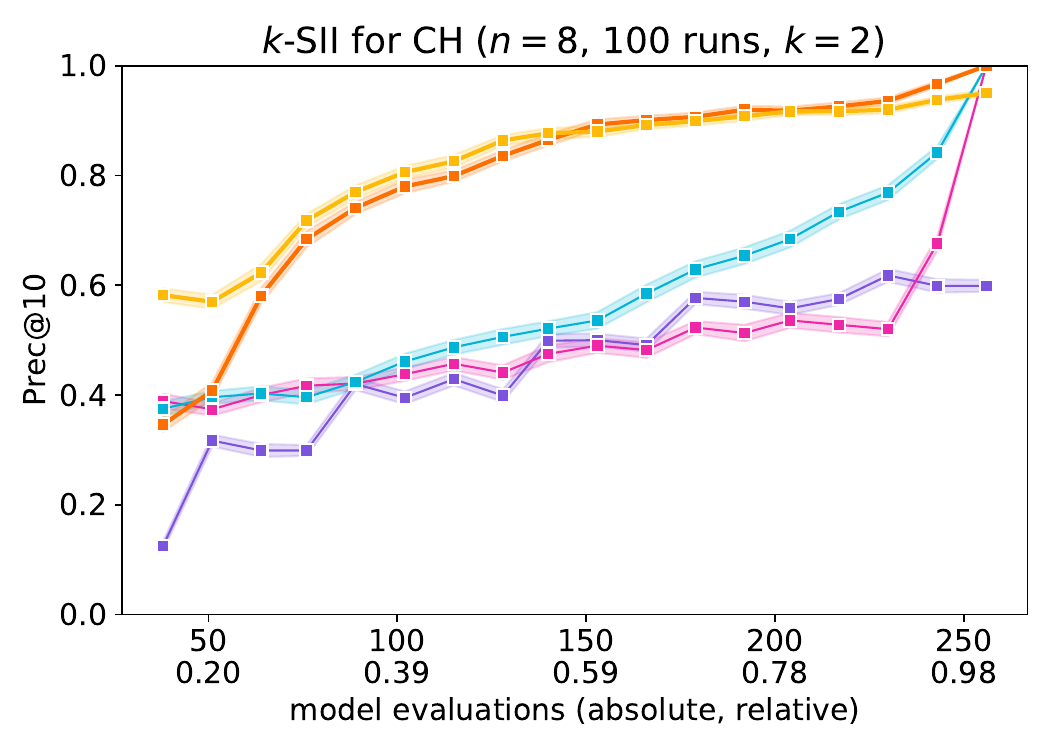}
    \end{minipage}
    \caption{SII (top) and $k$-SII (bottom) approximation quality in terms of MSE (left) and Prec@10 (right) of the CH dataset.}
    \label{fig_appendix_ch}
\end{figure}

\begin{figure}[htb]
    \centering
    \includegraphics[width=0.75\textwidth]{figures/legend_horizontal_all_orders.pdf}
    \\
    \begin{minipage}[c]{0.07\textwidth}
    $l=2$
    \end{minipage}
    \begin{minipage}[c]{0.32\textwidth}
    \includegraphics[width=\textwidth]{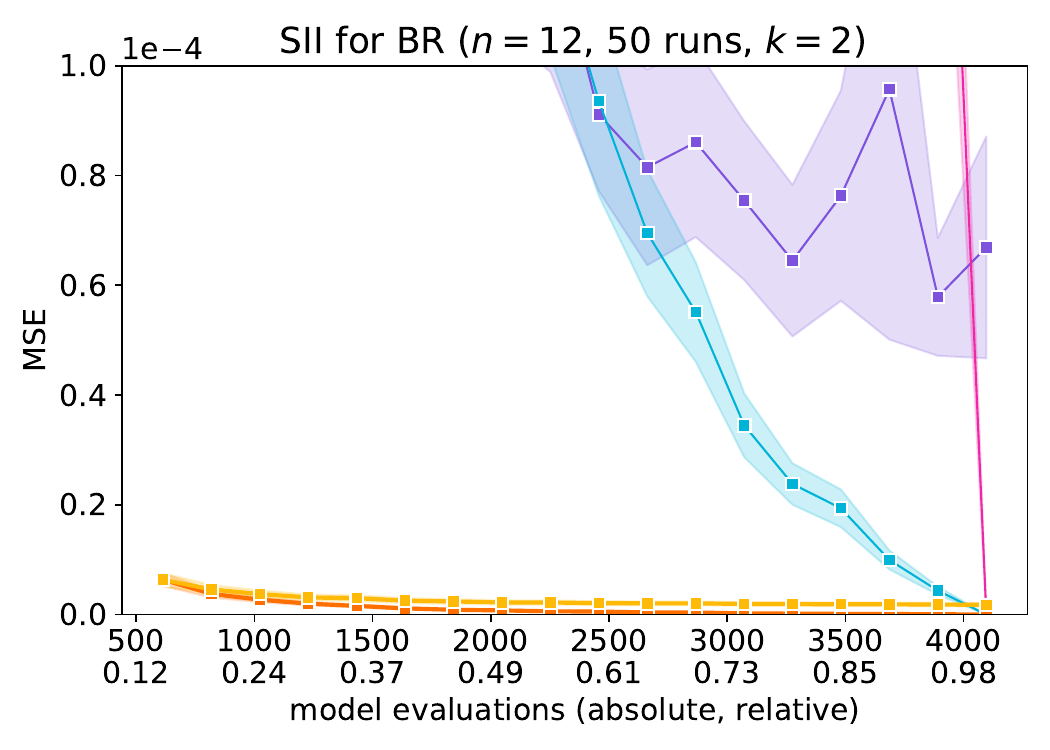}
    \end{minipage}
    \begin{minipage}[c]{0.32\textwidth}
    \includegraphics[width=\textwidth]{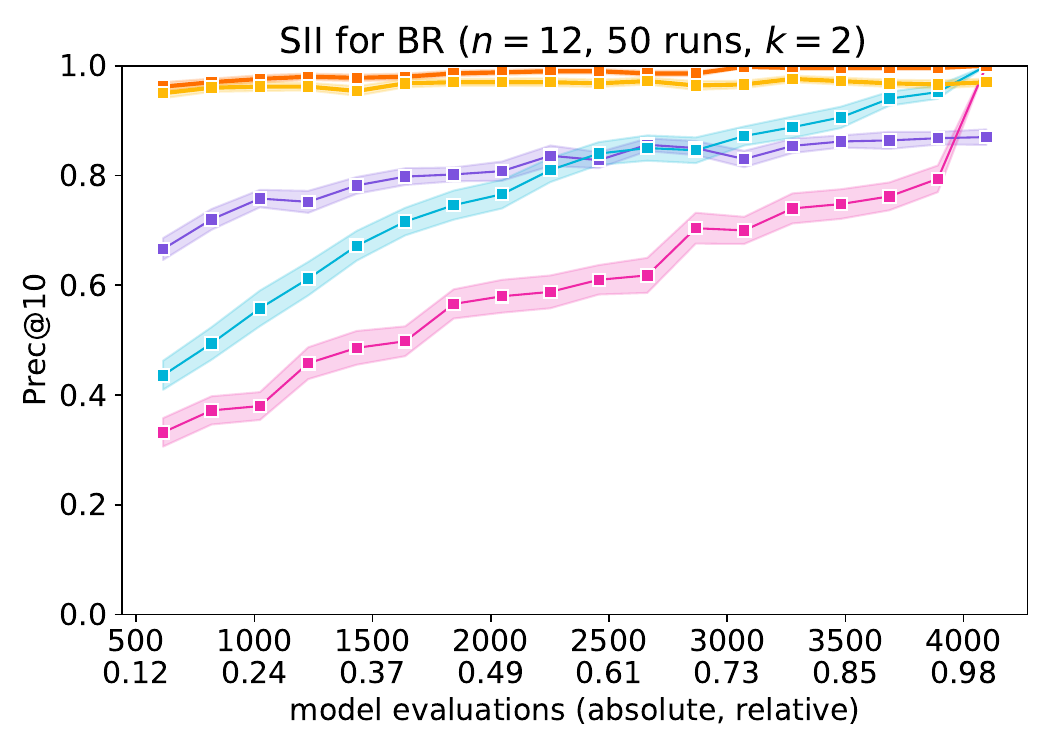}
    \end{minipage}
    \\
    \begin{minipage}[c]{0.07\textwidth}
    $l=3$
    \end{minipage}
    \begin{minipage}[c]{0.32\textwidth}
    \includegraphics[width=\textwidth]{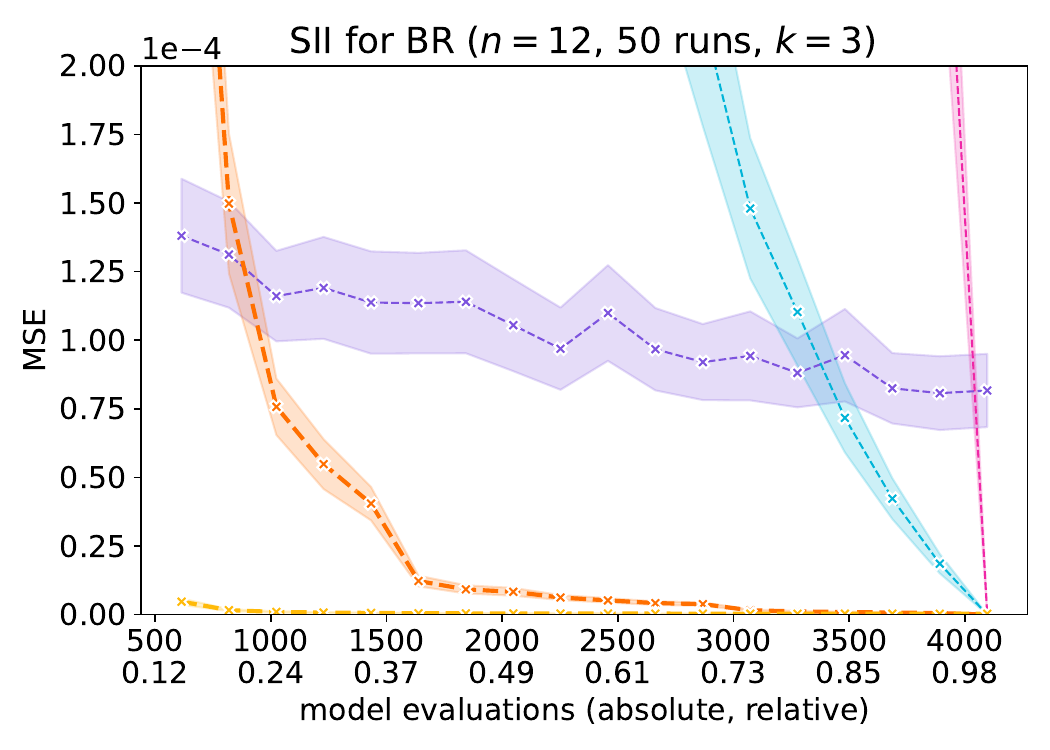}
    \end{minipage}
    \begin{minipage}[c]{0.32\textwidth}
    \includegraphics[width=\textwidth]{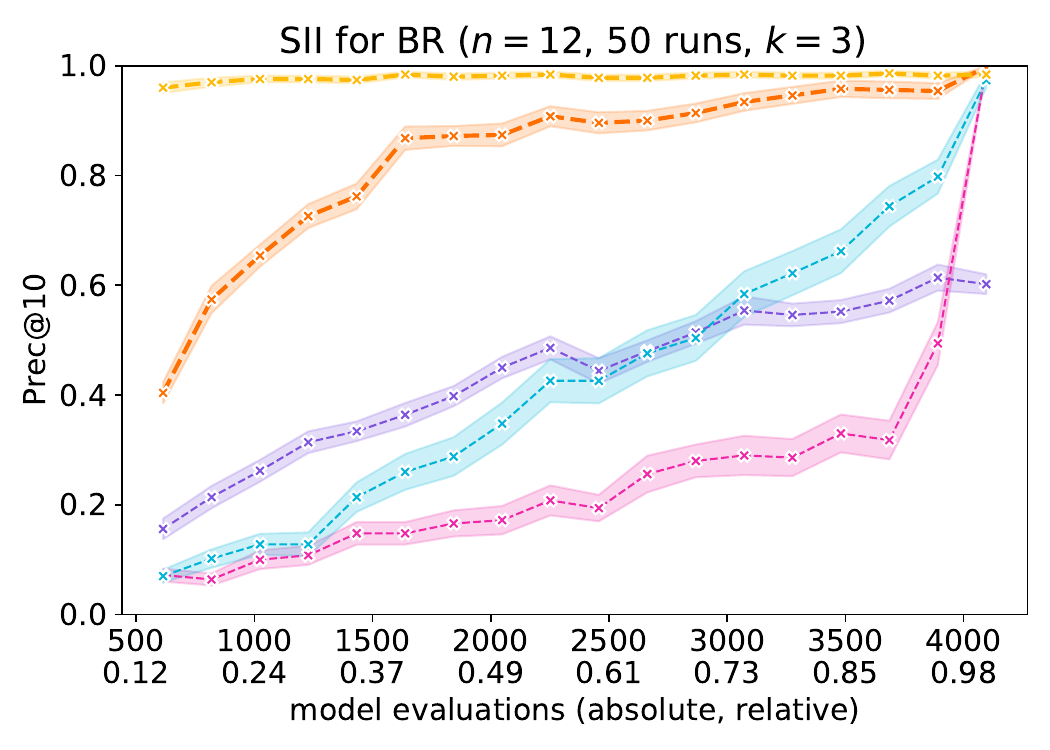}
    \end{minipage}
    \caption{Approximation quality in terms of MSE (left) and Prec@10 (right) of the BR dataset.}
    \label{fig_appendix_bike}
\end{figure}

\begin{figure}[htb]
    \centering
    \includegraphics[width=0.75\textwidth]{figures/legend_horizontal_all_orders.pdf}
    \\
    \begin{minipage}[c]{0.07\textwidth}
    $l=2$
    \end{minipage}
    \begin{minipage}[c]{0.32\textwidth}
    \includegraphics[width=\textwidth]{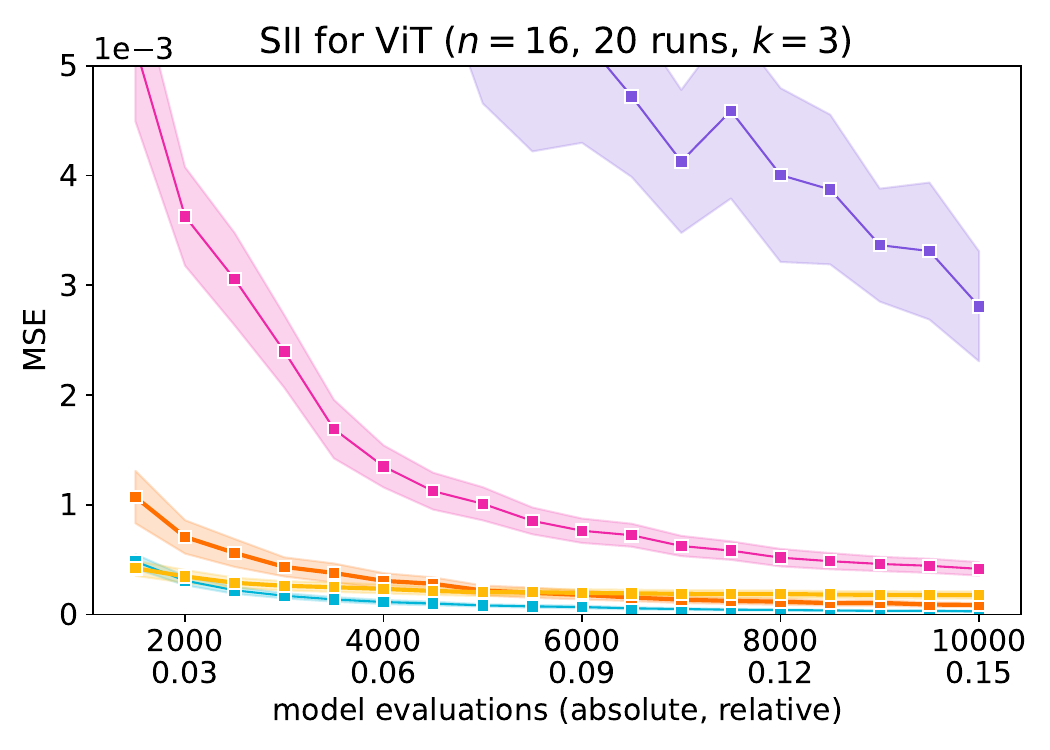}
    \end{minipage}
    \begin{minipage}[c]{0.32\textwidth}
    \includegraphics[width=\textwidth]{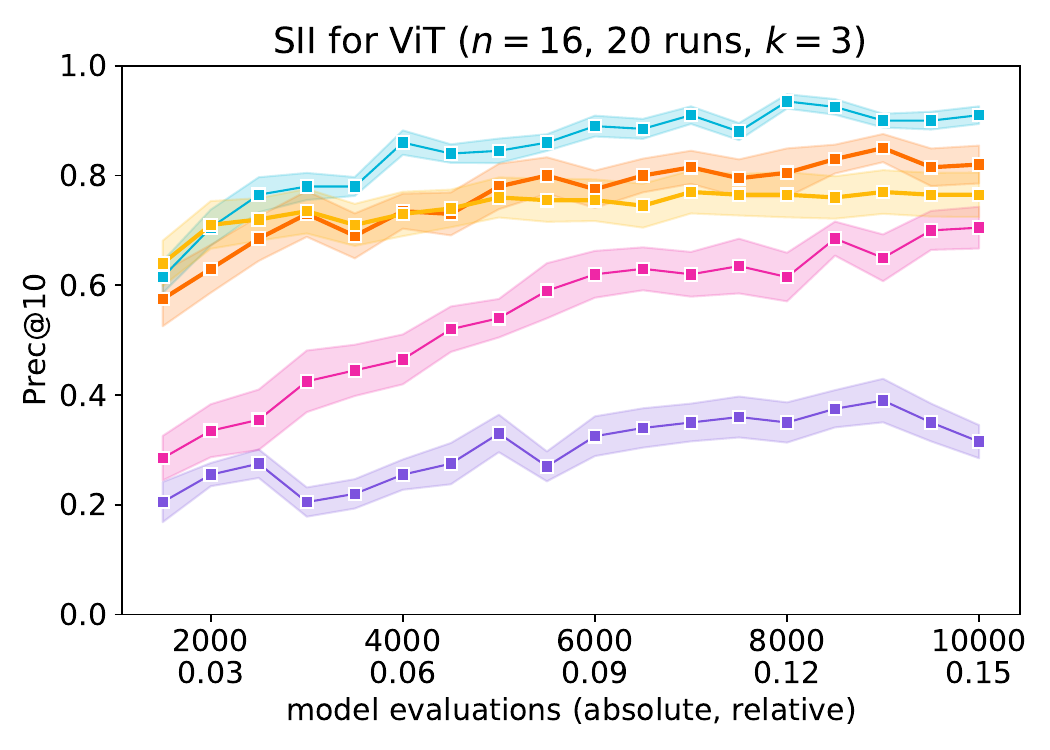}
    \end{minipage}
    \\
    \begin{minipage}[c]{0.07\textwidth}
    $l=3$
    \end{minipage}
    \begin{minipage}[c]{0.32\textwidth}
    \includegraphics[width=\textwidth]{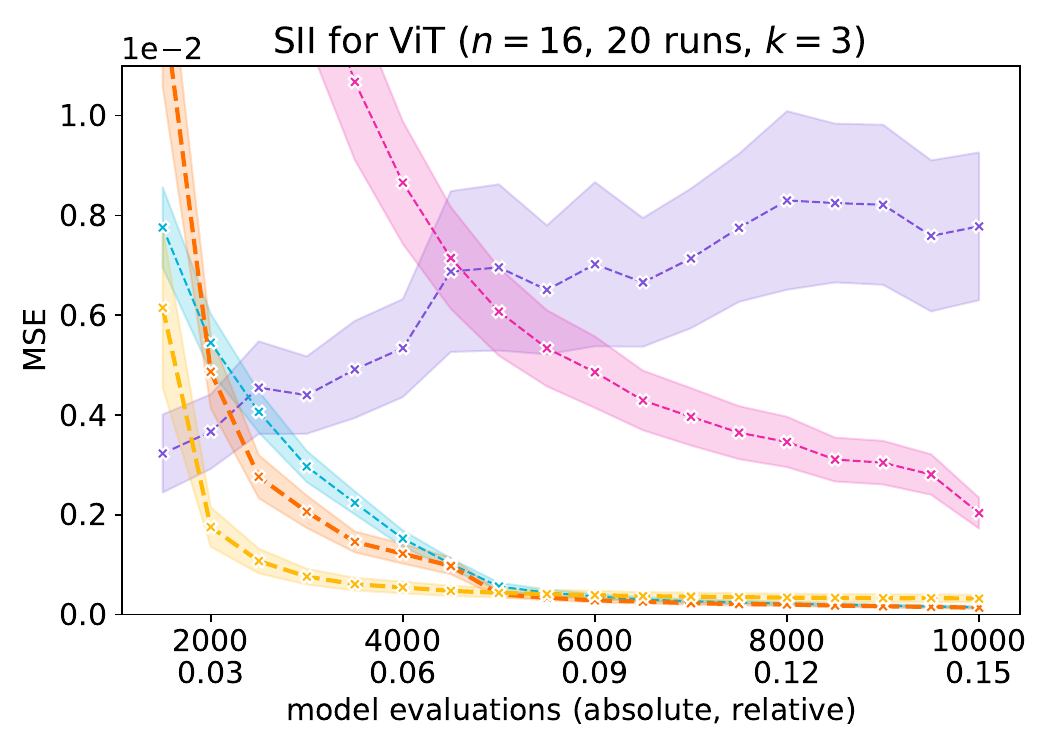}
    \end{minipage}
    \begin{minipage}[c]{0.32\textwidth}
    \includegraphics[width=\textwidth]{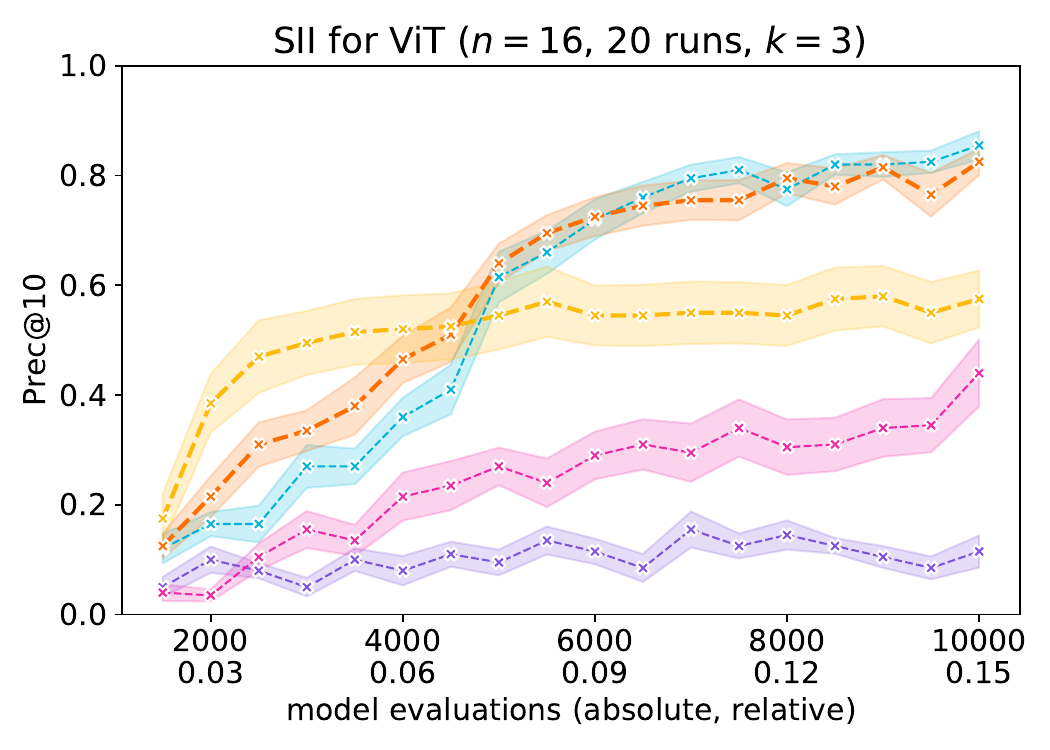}
    \end{minipage}
    \caption{Approximation quality in terms of MSE (left) and Prec@10 (right) of the ViT.}
    \label{fig_appendix_vit}
\end{figure}

\begin{figure}[htb]
    \centering
    \includegraphics[width=0.75\textwidth]{figures/legend_horizontal_all_orders.pdf}
    \\
    \begin{minipage}[c]{0.07\textwidth}
    $l=2$
    \end{minipage}
    \begin{minipage}[c]{0.32\textwidth}
    \includegraphics[width=\textwidth]{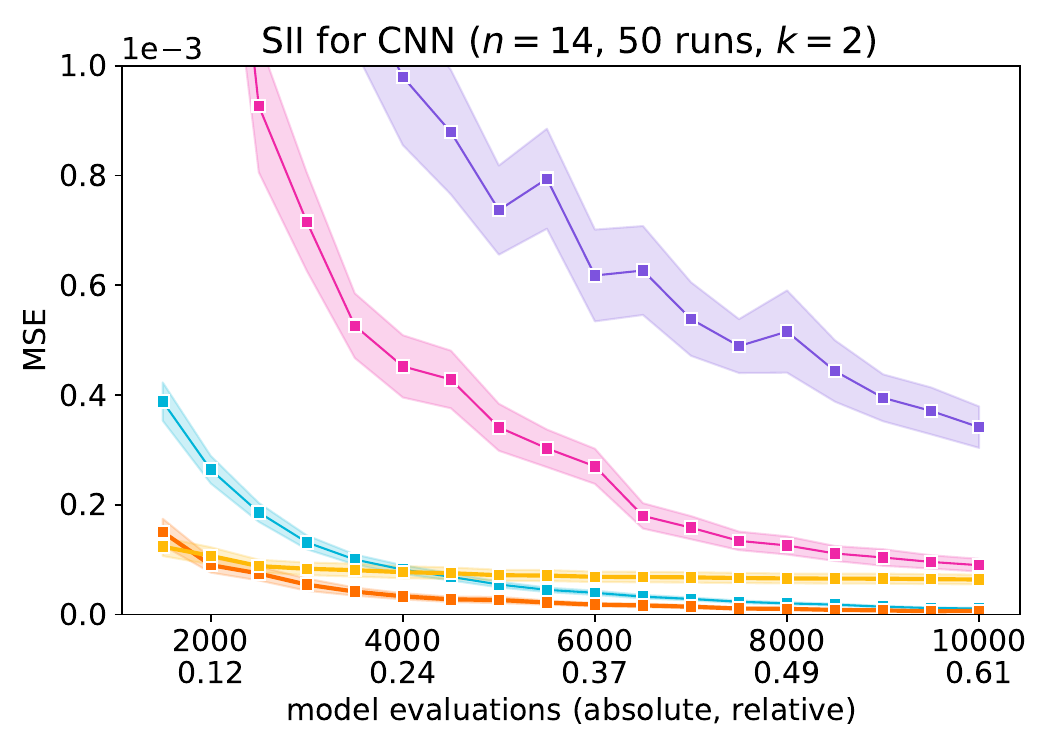}
    \end{minipage}
    \begin{minipage}[c]{0.32\textwidth}
    \includegraphics[width=\textwidth]{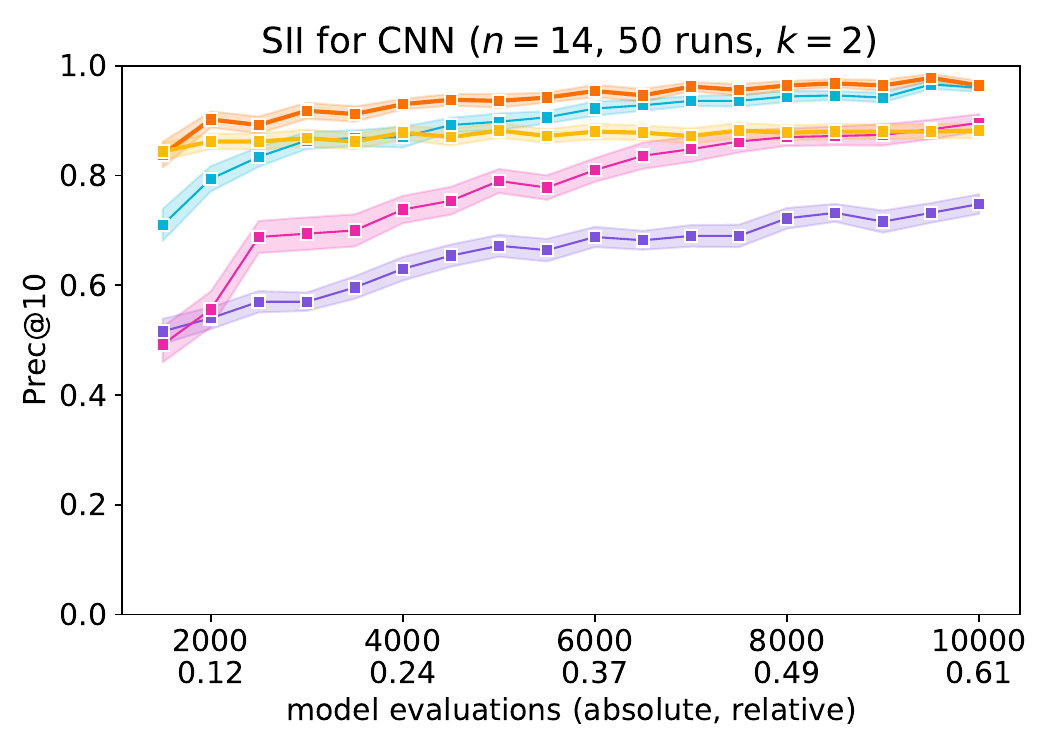}
    \end{minipage}
    \\
    \begin{minipage}[c]{0.07\textwidth}
    $l=3$
    \end{minipage}
    \begin{minipage}[c]{0.32\textwidth}
    \includegraphics[width=\textwidth]{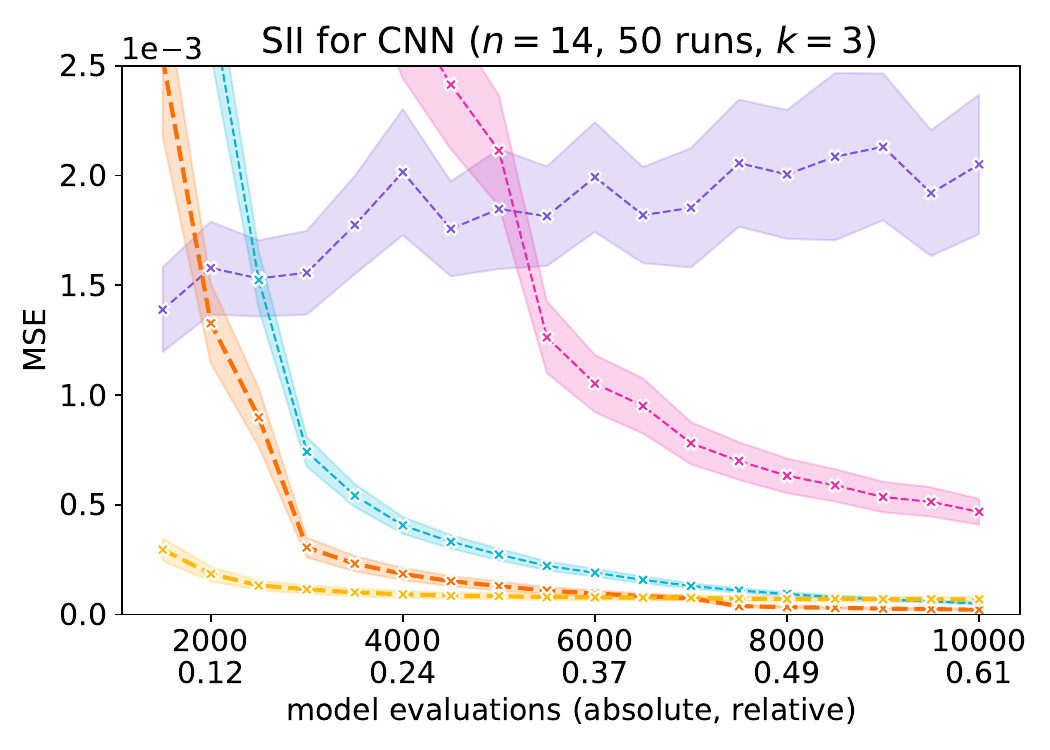}
    \end{minipage}
    \begin{minipage}[c]{0.32\textwidth}
    \includegraphics[width=\textwidth]{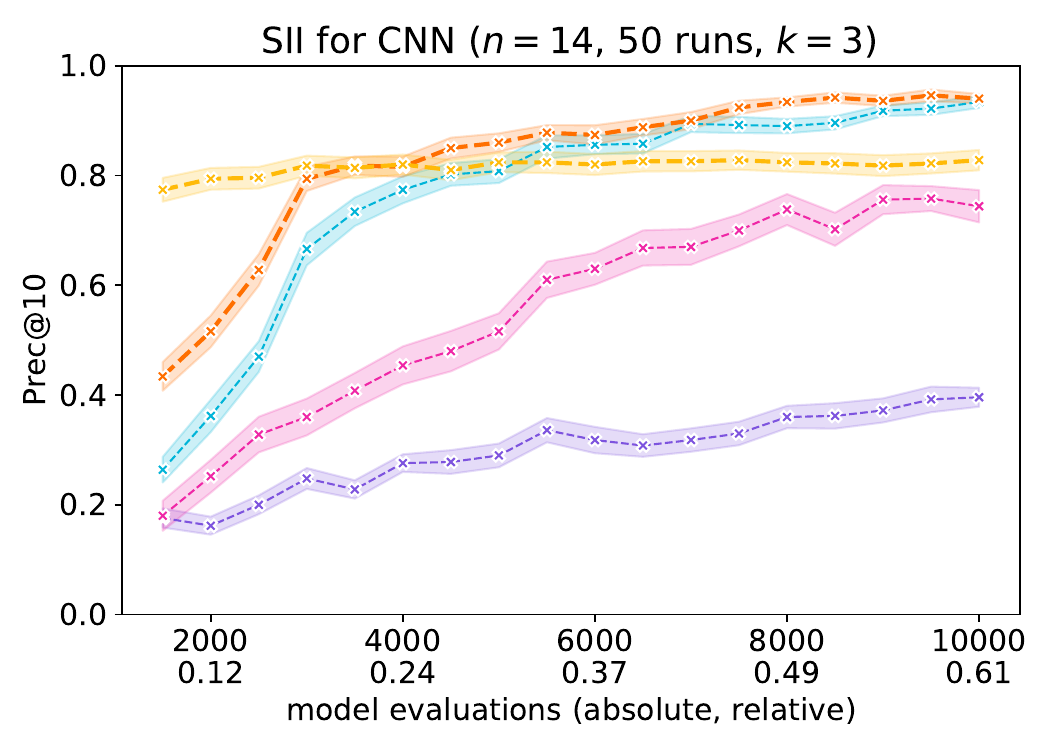}
    \end{minipage}
    \caption{Approximation quality in terms of MSE (left) and Prec@10 (right) of the CNN.}
    \label{fig_appendix_cnn}
\end{figure}

\begin{figure}[htb]
    \centering
    \includegraphics[width=0.75\textwidth]{figures/legend_horizontal_all_orders.pdf}
    \\
    \begin{minipage}[c]{0.07\textwidth}
    $l=2$
    \end{minipage}
    \begin{minipage}[c]{0.32\textwidth}
    \includegraphics[width=\textwidth]{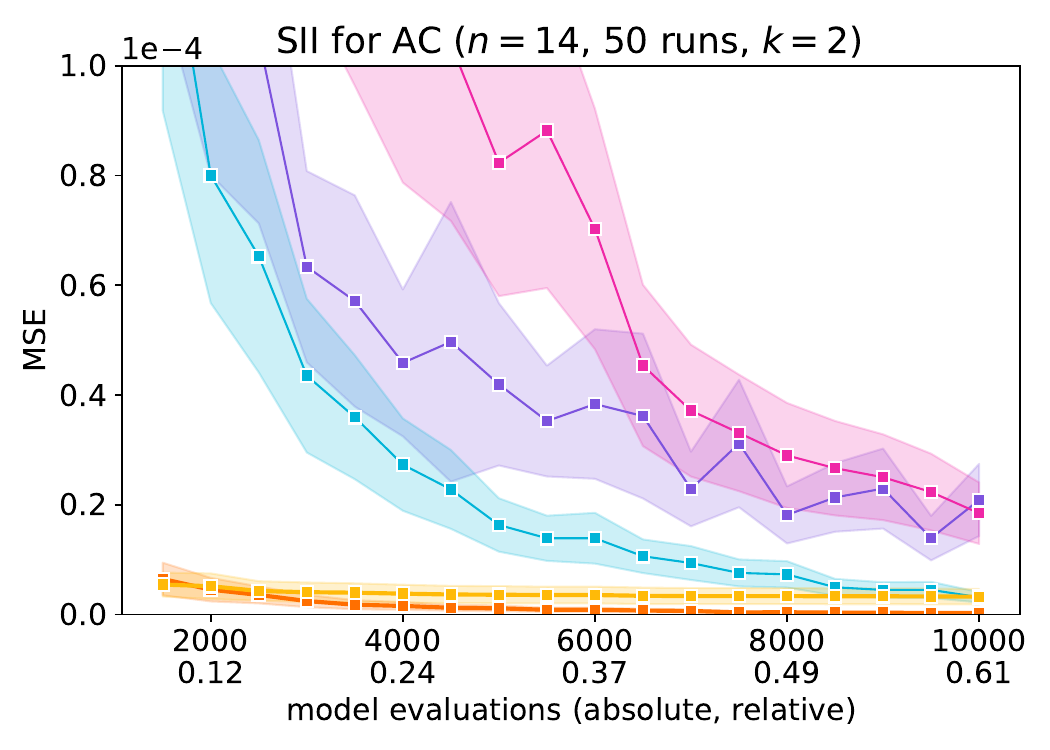}
    \end{minipage}
    \begin{minipage}[c]{0.32\textwidth}
    \includegraphics[width=\textwidth]{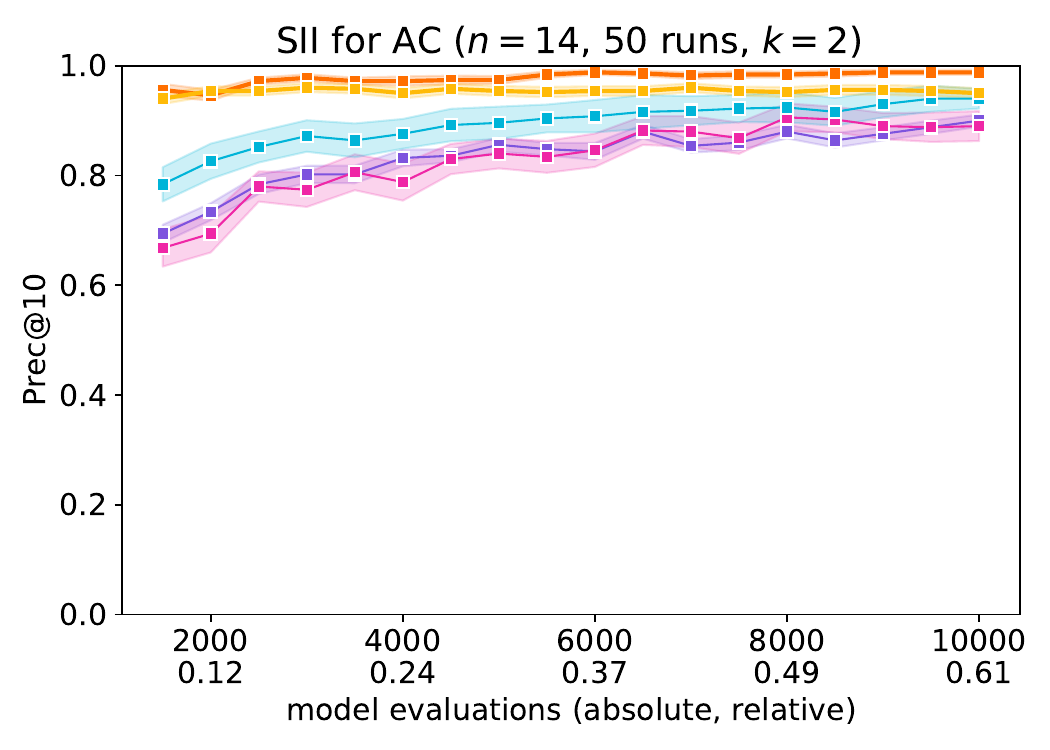}
    \end{minipage}
    \\
    \begin{minipage}[c]{0.07\textwidth}
    $l=3$
    \end{minipage}
    \begin{minipage}[c]{0.32\textwidth}
    \includegraphics[width=\textwidth]{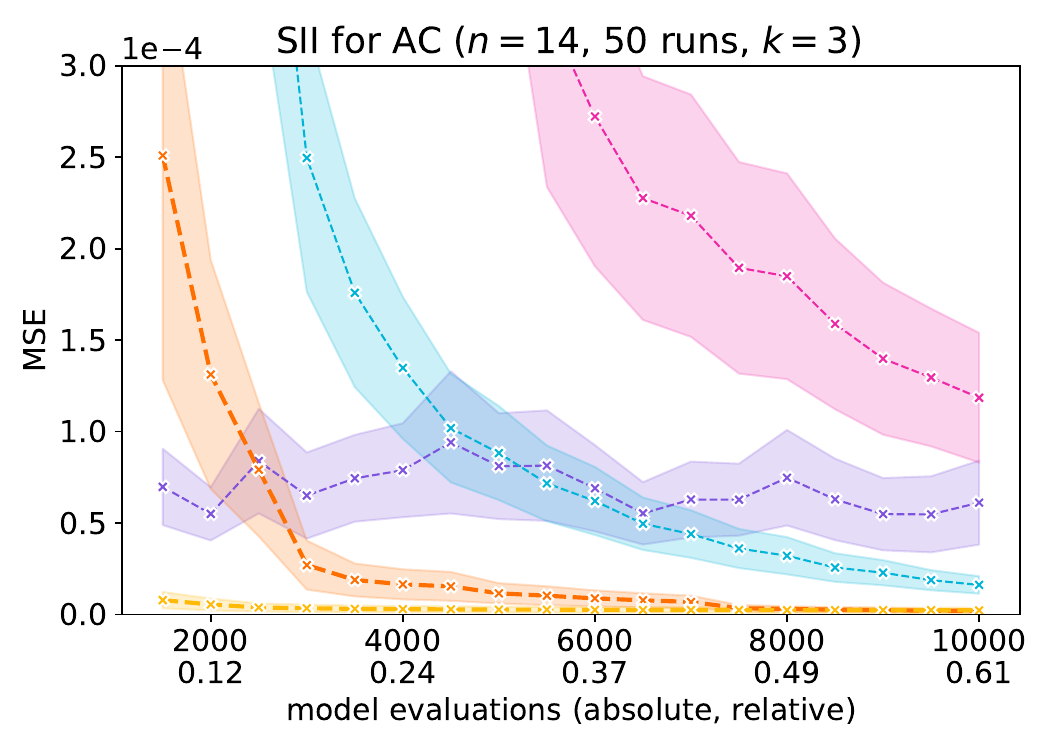}
    \end{minipage}
    \begin{minipage}[c]{0.32\textwidth}
    \includegraphics[width=\textwidth]{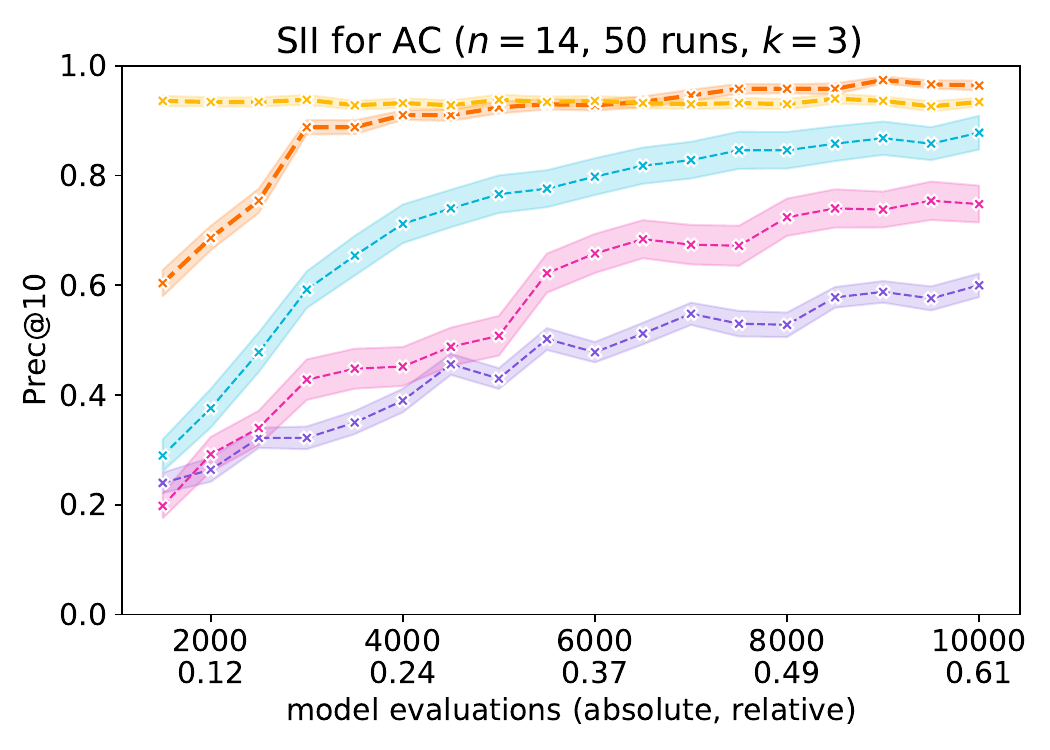}
    \end{minipage}
    \caption{Approximation quality in terms of MSE (left) and Prec@10 (right) of the AC dataset.}
    \label{fig_appendix_ac}
\end{figure}

\begin{figure}[htb]
    \centering
    \includegraphics[width=0.75\textwidth]{figures/legend_horizontal_all_orders.pdf}
    \\
    \begin{minipage}[c]{0.07\textwidth}
    $\fnum=20$
    \end{minipage}
    \begin{minipage}[c]{0.32\textwidth}
    \includegraphics[width=\textwidth]{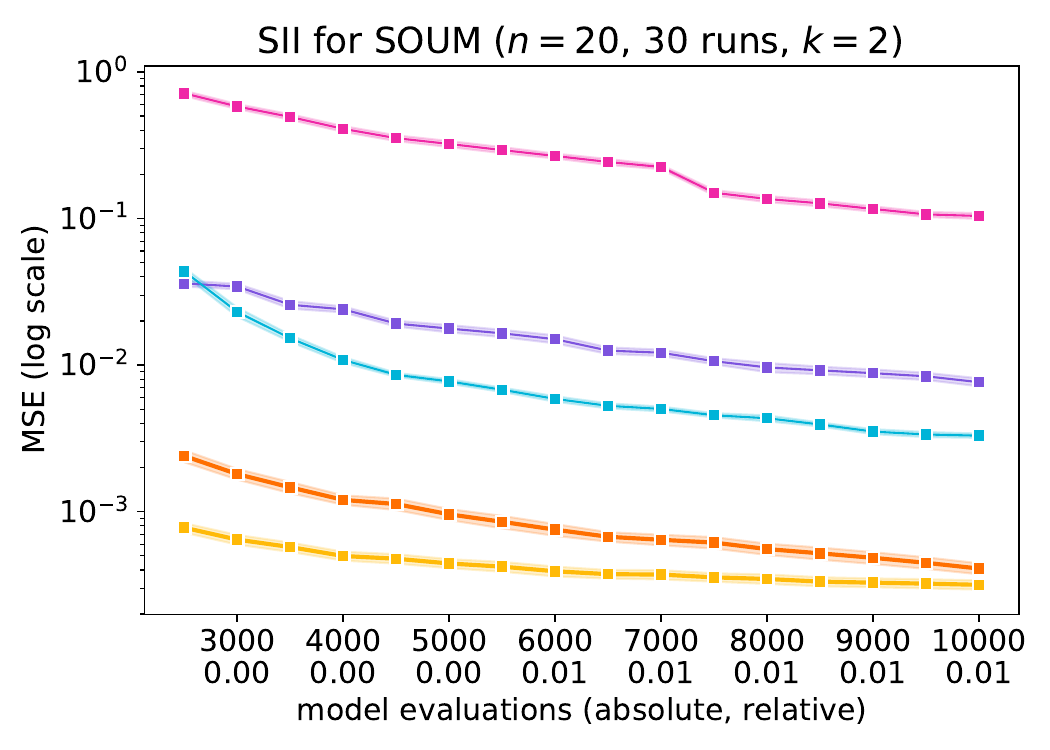}
    \end{minipage}
    \begin{minipage}[c]{0.32\textwidth}
    \includegraphics[width=\textwidth]{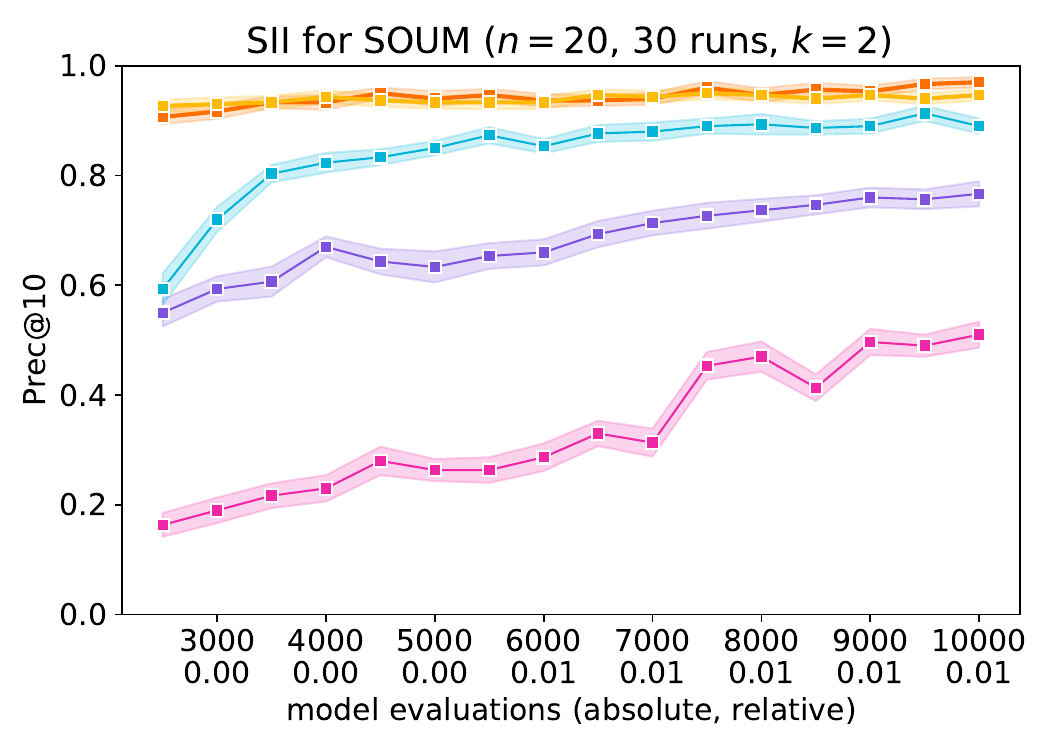}
    \end{minipage}
    \\
    \begin{minipage}[c]{0.07\textwidth}
    $\fnum=40$
    \end{minipage}
    \begin{minipage}[c]{0.32\textwidth}
    \includegraphics[width=\textwidth]{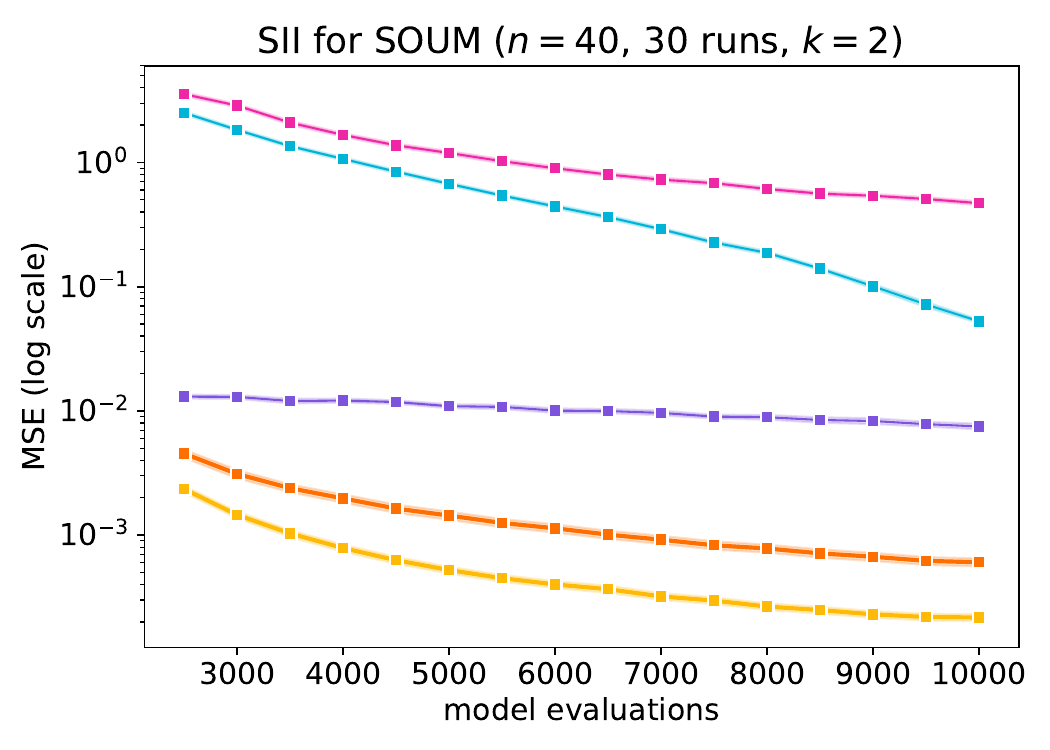}
    \end{minipage}
    \begin{minipage}[c]{0.32\textwidth}
    \includegraphics[width=\textwidth]{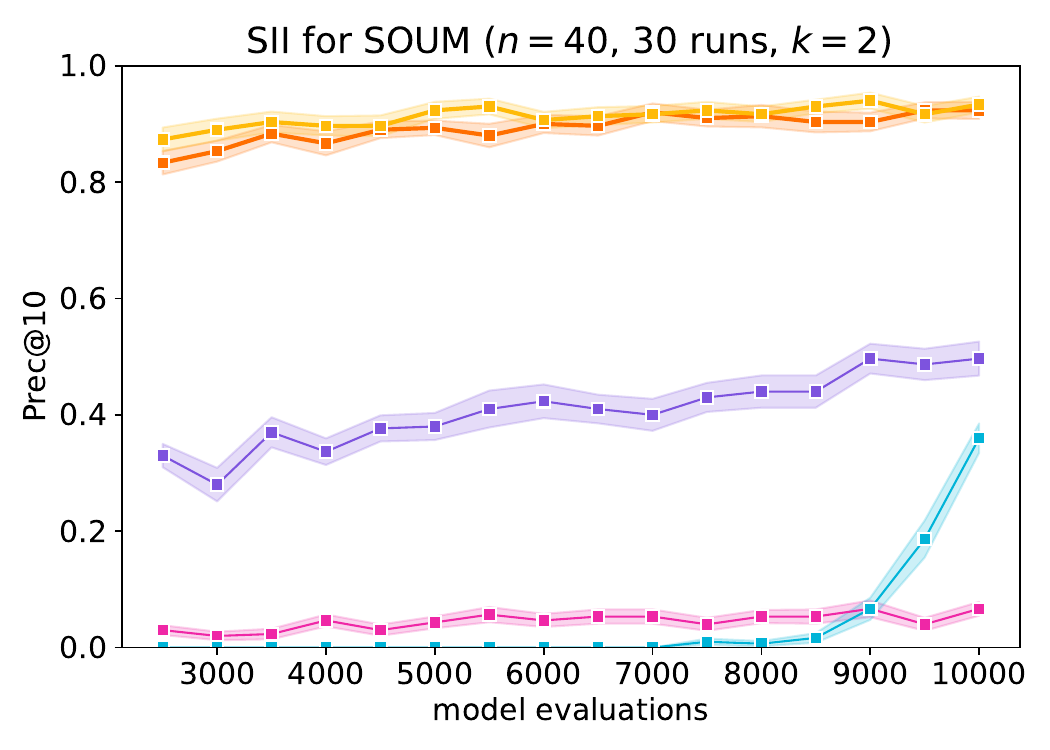}
    \end{minipage}
    \caption{Approximation quality in terms of MSE (left) and Prec@10 (right) for the SOUMs with $\fnum=20$ (top) and $\fnum=40$ (bottom).}
    \label{fig_appendix_soum}
\end{figure}

\begin{figure}[htb]
    \centering
    \includegraphics[width=0.75\textwidth]{figures/legend_horizontal_all_orders.pdf}
    \\
    \begin{minipage}[c]{0.13\textwidth}
    {\raggedright
    \textbf{MSE and\\Prec@10 for\\all orders:}
    }
    \end{minipage}
    \hfill
    \begin{minipage}[c]{0.42\textwidth}
    \includegraphics[width=\textwidth]{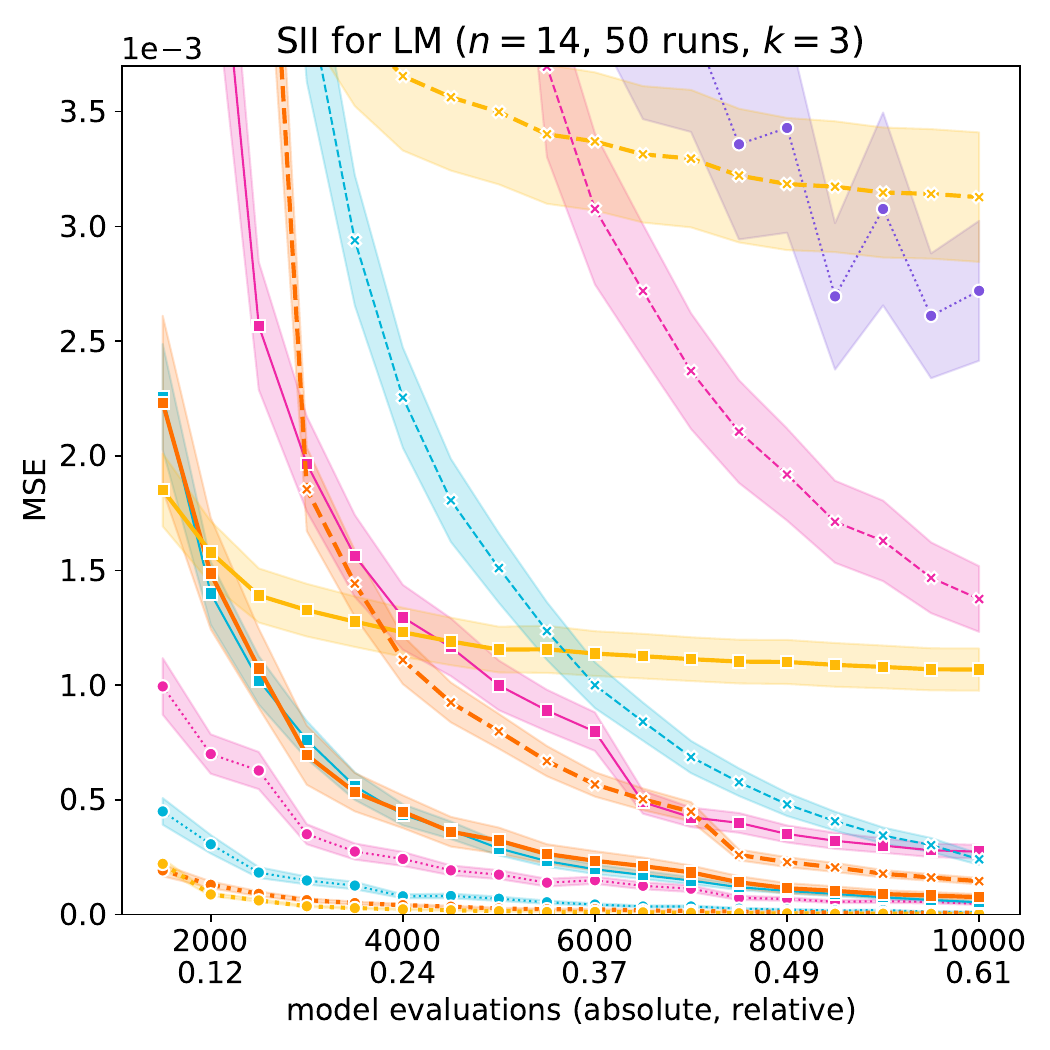}
    \end{minipage}
    \begin{minipage}[c]{0.42\textwidth}
    \includegraphics[width=\textwidth]{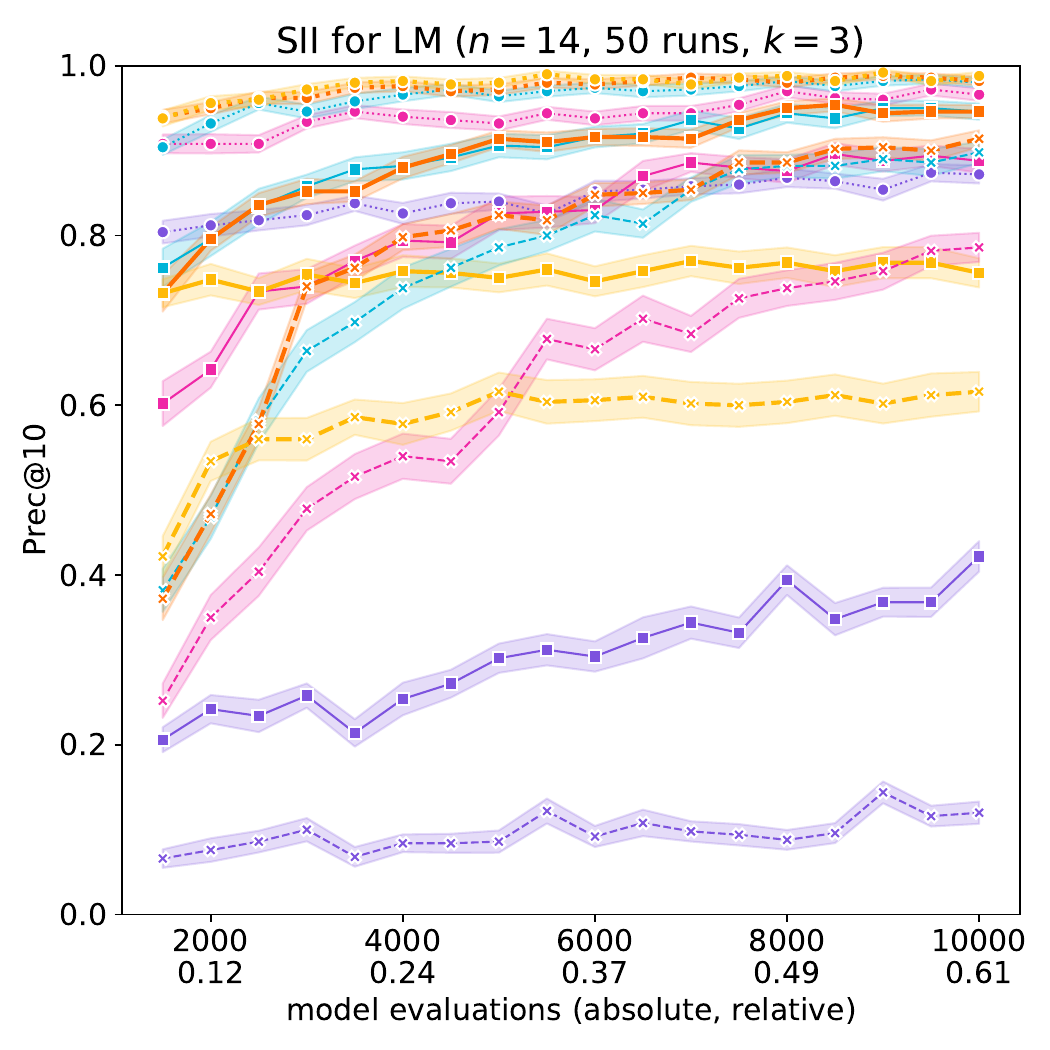}
    \end{minipage}
    \\
    \begin{minipage}[c]{0.08\textwidth}
    \vspace{-2em}
    \textbf{MSE:}\\[13em]
    \textbf{Prec@10:}
    \end{minipage}
    \hfill
    \begin{minipage}[c]{0.29\textwidth}
    \centering
    \includegraphics[width=\textwidth]{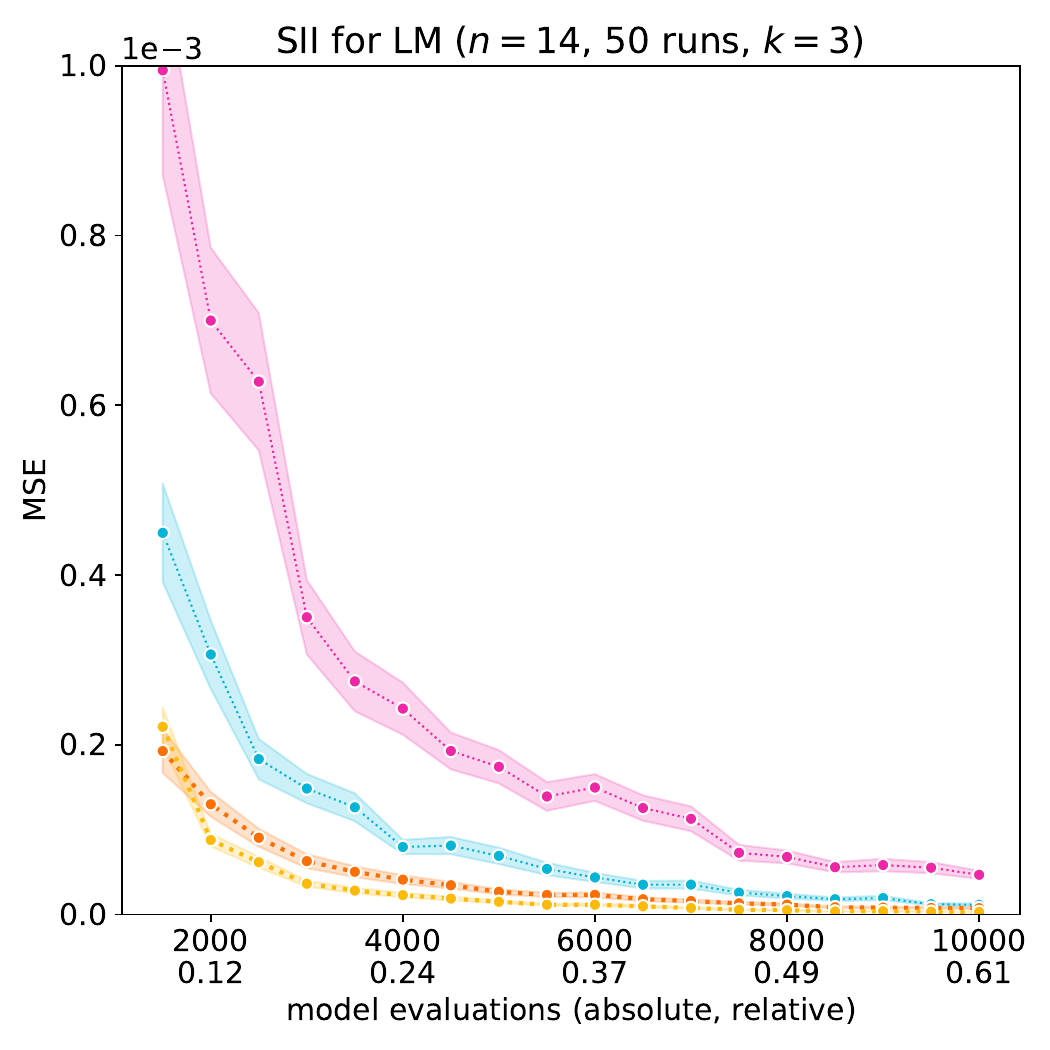}\\
    \includegraphics[width=\textwidth]{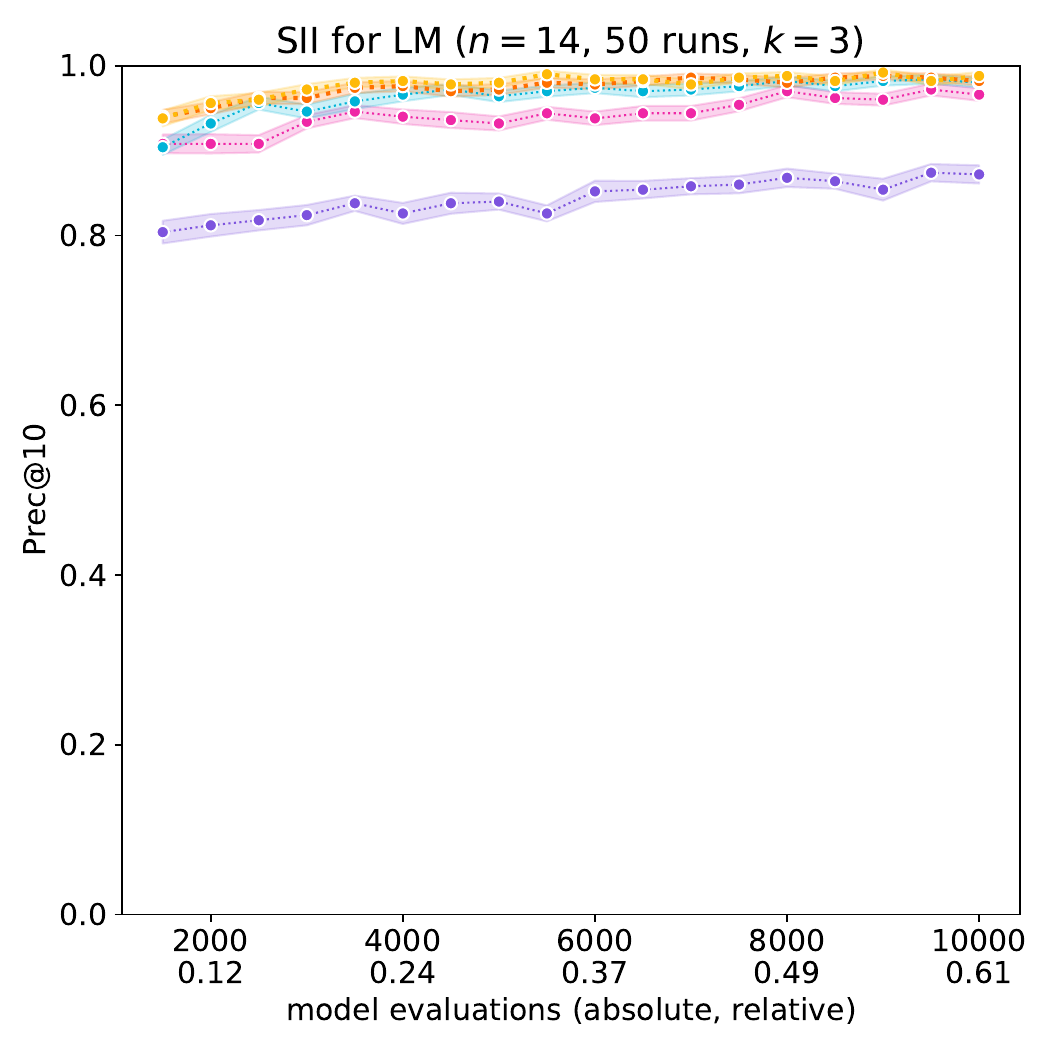}\\$l = 1$
    \end{minipage}
    \begin{minipage}[c]{0.29\textwidth}
    \centering
    \includegraphics[width=\textwidth]{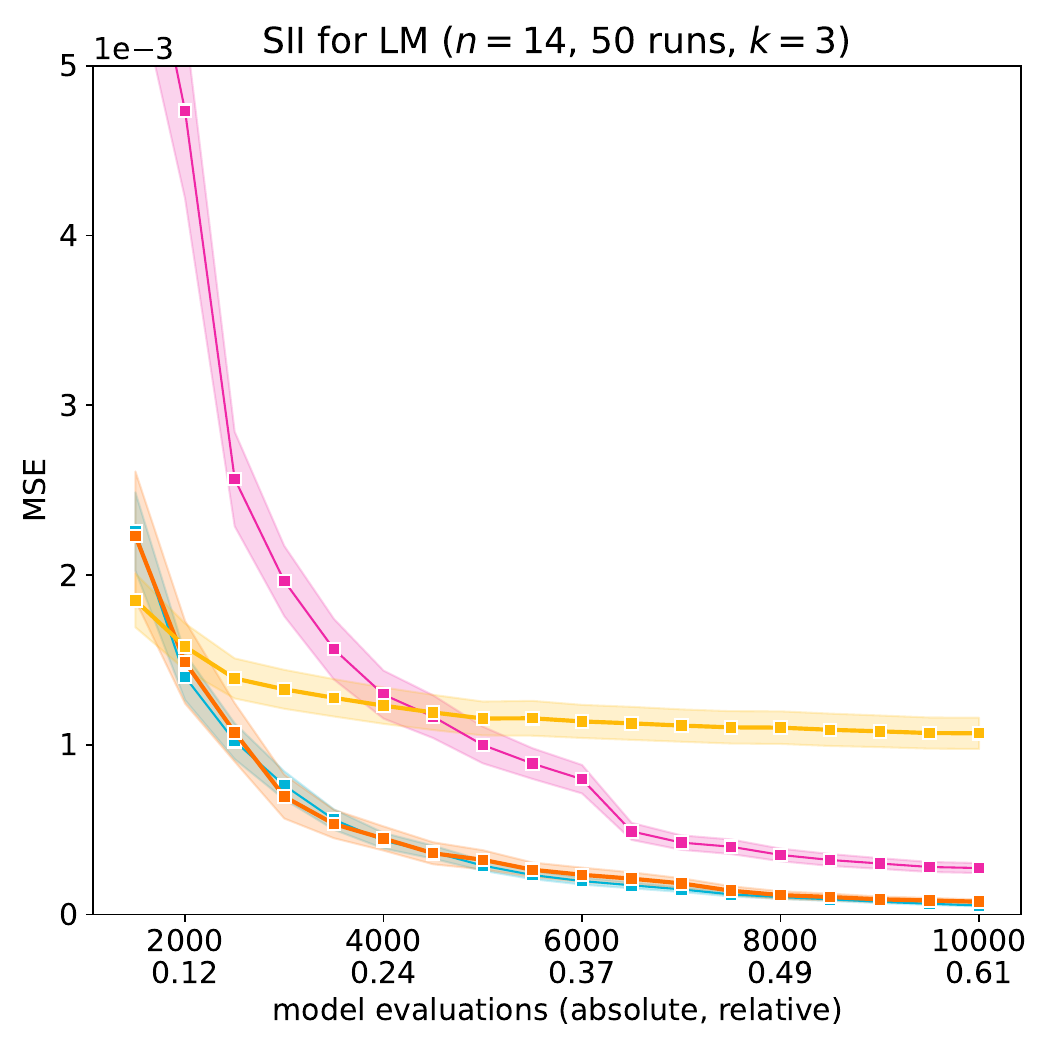}\\
    \includegraphics[width=\textwidth]{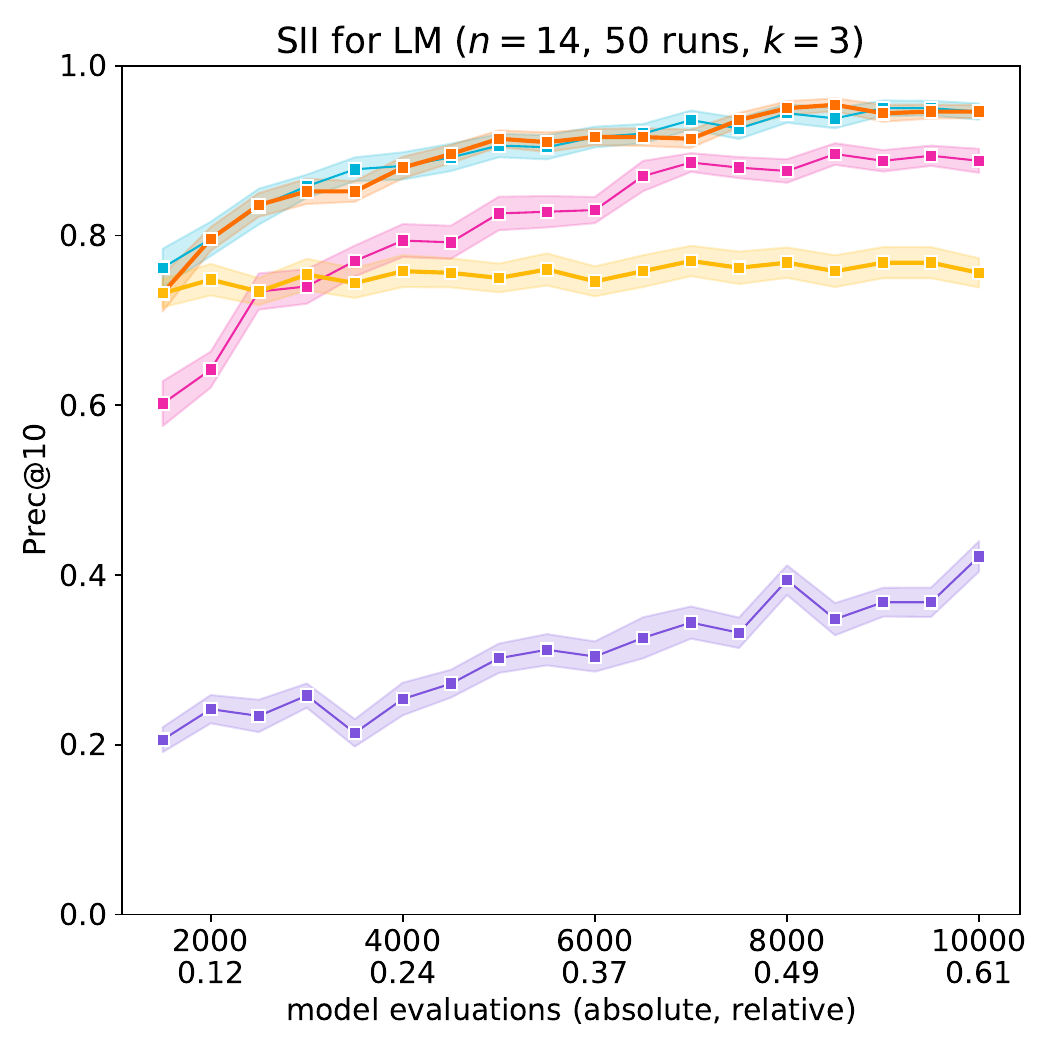}\\$l = 2$
    \end{minipage}
    \begin{minipage}[c]{0.29\textwidth}
    \centering
    \includegraphics[width=\textwidth]{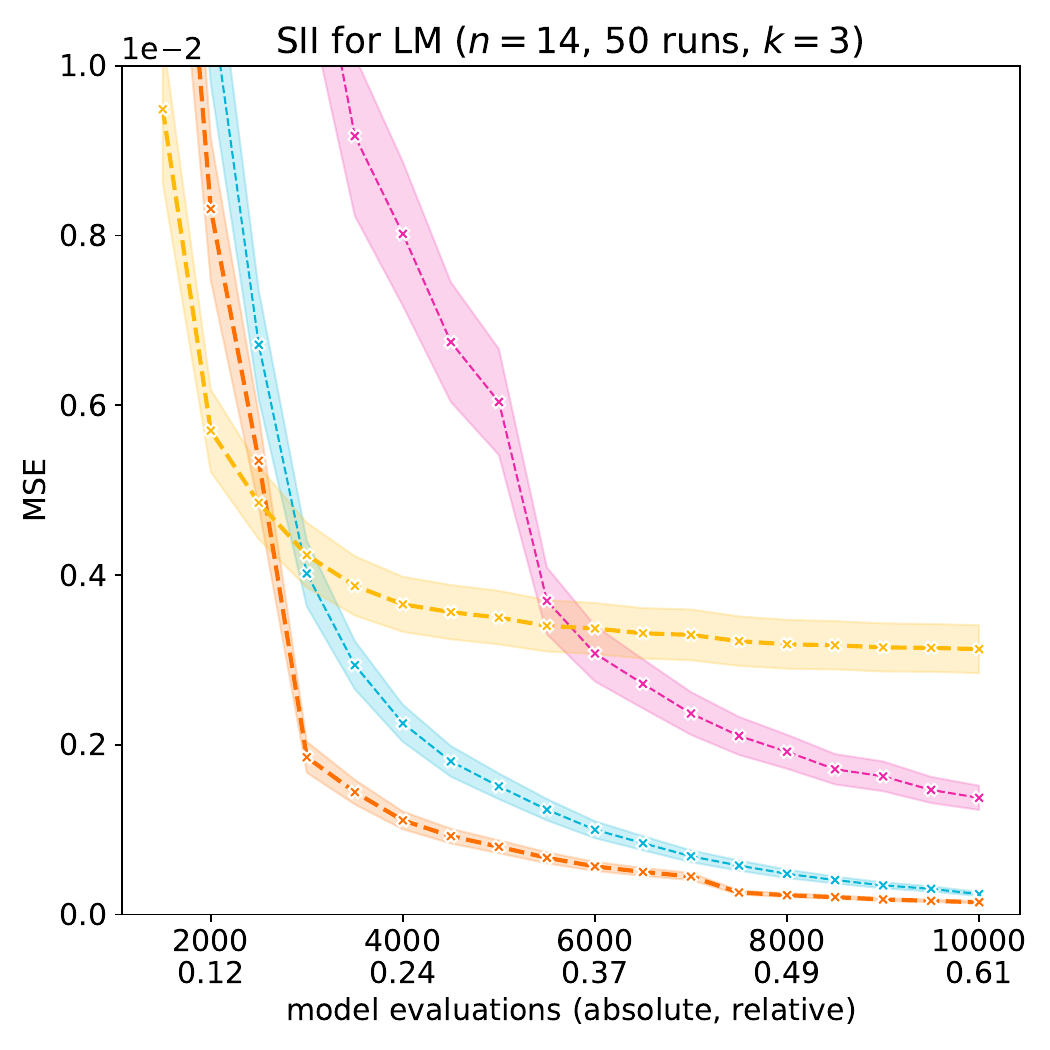}\\
    \includegraphics[width=\textwidth]{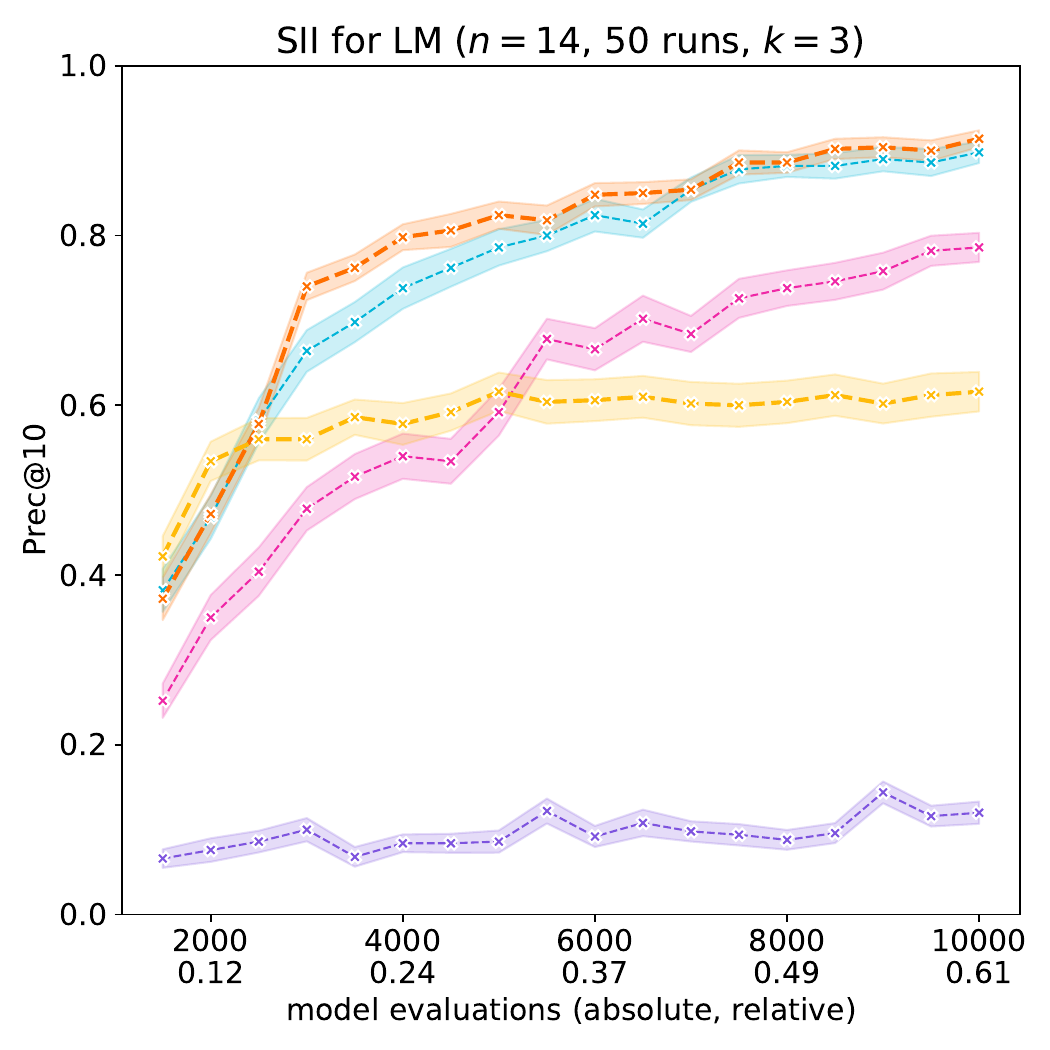}\\$l = 3$
    \end{minipage}
    \caption{Approximation quality of KernelSHAP-IQ compared to baseline techniques on the LM.}
    \label{fig_appendix_lm}
\end{figure}

\end{document}